\def\*#1{\mathbf{#1}}
\newcommand{\R}{\mathbb{R}}
\newcommand{\E}{\mathbb{E}}
\theoremstyle{plain}
\newtheorem{theorem}{Theorem}[section]
\newtheorem*{theorem*}{Theorem}
\newtheorem{proposition}[theorem]{Proposition}
\newtheorem*{proposition*}{Proposition}
\newtheorem{lemma}[]{Lemma}
\newtheorem{corollary}[theorem]{Corollary}
\theoremstyle{definition}
\newtheorem{definition}[theorem]{Definition}
\theoremstyle{remark}
\newtheorem{remark}[theorem]{Remark}
\icmltitlerunning{From Biased Selective Labels to Pseudo-Labels}
\begin{document}

\twocolumn[
\icmltitle{From Biased Selective Labels to Pseudo-Labels: \\An Expectation-Maximization Framework for Learning from Biased Decisions}




\begin{icmlauthorlist}
\icmlauthor{Trenton Chang}{yyy}
\icmlauthor{Jenna Wiens}{yyy}
\end{icmlauthorlist}

\icmlaffiliation{yyy}{Division of Computer Science \& Engineering, University of Michigan, Ann Arbor, MI, USA}

\icmlcorrespondingauthor{Trenton Chang}{ctrenton@umich.edu}

\icmlkeywords{disparate censorship, selective labels, machine learning in healthcare, semi-supervised learning, healthcare}

\vskip 0.3in
]


\printAffiliationsAndNotice{}  

\begin{abstract}
Selective labels occur when label observations are subject to a decision-making process; \emph{e.g.}, diagnoses that depend on the administration of laboratory tests. We study a clinically-inspired selective label problem called disparate censorship, where labeling biases vary across subgroups and unlabeled individuals are imputed as “negative'' (\textit{i.e.}, no diagnostic test = no illness). Machine learning models na{\"i}vely trained on such labels could amplify labeling bias. Inspired by causal models of selective labels, we propose Disparate Censorship Expectation-Maximization (DCEM), an algorithm for learning in the presence of disparate censorship. We theoretically analyze how DCEM mitigates the effects of disparate censorship on model performance. We validate DCEM on synthetic data, showing that it improves bias mitigation (area between ROC curves) without sacrificing discriminative performance (AUC) compared to baselines. We achieve similar results in a sepsis classification task using clinical data.
\end{abstract}

\begin{figure}[t!]
    \centering
    \includegraphics[width=\linewidth]{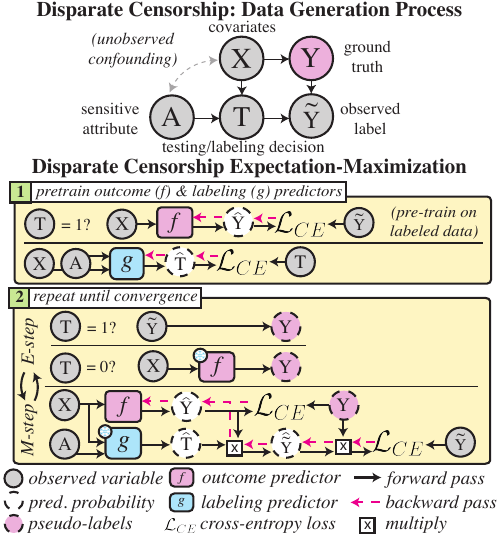}
    \vspace{-7mm}
    \caption{\textbf{Top:} Causal model of disparate censorship ($\*x$: covariates, $y$: ground truth, $\tilde{y}$: observed label, $t$: testing/labeling indicator, $a$: sensitive attribute). Shaded variables are fully observed. \textbf{Bottom:} Disparate Censorship Expectation-Maximization (DCEM). Dashed nodes are probabilistic estimates.}\vspace{-8mm}
    \label{fig:main}
\end{figure}

\section{Introduction}

Selective labels occur when a decision-making process determines access to ground truth~\cite{lakkaraju2017selective}.
We study a practical case of selective labels: disparate censorship~\cite{chang2022disparate}. Disparate censorship introduces two challenges: different labeling biases across subgroups and the assumption that unlabeled individuals have a \textit{negative} label.
For example, in healthcare, labels may depend on laboratory test results only available in some patients. 

Past work has trained ML models to predict outcomes based on laboratory test results (e.g., sepsis~\cite{seymour2016assessment, rhee2020sepsis}). In this setting, patients with no test result are defined as negative~\citep{hartvigsen2018early,  teeple2020clinical,  jehi2020individualizing, mcdonald2021derivation, adams2022prospective, MCURES}. However, laboratory testing decisions may be biased. For example, women are undertested and underdiagnosed for cardiovascular disease~\citep{beery1995gender, schulman1999effect}. ML models trained on such data may recommend women less often for diagnostic testing than men, reinforcing inequity.

To address this bias, one option is to train only on tested individuals. Such an approach may discard a large subset of the data and may not generalize to untested patients. Another option is semi-supervised approaches that do not assume untested patients are negative, such as label propagation~\citep{zhu2002learning, lee2013pseudo} or filtering~\citep{li2020dividemix, nguyen2020self}, or noisy-label learning methods~\citep{blum2020recovering, wang2021gpl, zhu2021clusterability}.
However, such methods do not leverage causal models of label bias, a potential source of additional information. 
We aim to develop an approach that leverages all available signal while accounting for labeling biases.

Inspired by causal models of selective labeling~\citep{laine2020evaluating, chang2022disparate, guerdan2023counterfactual}, we propose a simple method for mitigating bias when training models under disparate censorship: Disparate Censorship Expectation-Maximization (DCEM; Fig.~\ref{fig:main}). 
First, we show that DCEM regularizes model estimates to counterbalance disparate censorship.
We validate DCEM in a simulation study and a sepsis classification task on clinical data. We find that our method mitigates bias (area between ROC curves) while maintaining competitive discriminative performance (AUC), and is generally more robust than baselines to changes in the data generation process.

\section{Preliminaries: Disparate Censorship}
\label{sec:background}

We consider a dataset $\{\*x^{(i)}, \tilde{y}^{(i)}, t^{(i)}, a^{(i)}\}_{i=1}^N$, with covariates $\*x^{(i)}\in\mathbb{R}^d$, labeling/testing decision $t^{(i)}\in\{0,1\}$, sensitive attribute $a^{(i)}$, and observed label $\tilde{y}^{(i)}\in\{0,1\}$, a proxy for ground truth $y^{(i)} \in \{0, 1\}$.  
The proxy label $\tilde{y}^{(i)} = y^{(i)}$ when $t^{(i)}=1$, and $\tilde{y}^{(i)} = 0$ otherwise (\emph{i.e.}, $\tilde{y}^{(i)} = y^{(i)}t^{(i)}$).

\paragraph{What is disparate censorship?} Disparate censorship models ``double standards'' in label collection decisions (Fig.~\ref{fig:main}, top). It is a variation of selective labeling or outcome measurement error~\cite{lakkaraju2017selective, guerdan2023groundless}. 
Disparate censorship uniquely assumes that untested individuals are imputed as \emph{negative.} 

We consider disparate censorship in the context of binary classification~\cite{chang2022disparate} (Fig.~\ref{fig:main}, top).
We justify the model by example. Consider a patient in an emergency room with characteristics $\*x$ and sensitive attribute $a$. 
This patient may have some condition $y$ (currently unobserved) caused by $\*x$ but \emph{not} $a$.
A clinician may order a diagnostic test (set $t$ to 1) to determine $y$.
The decision is based on $\*x$, but could be swayed by biases in $a$.

To simplify, suppose that tests are perfectly sensitive.\footnote{If not, we can define $T$ to indicate whether a label is \textit{confirmed} correct. This definition captures differences in test sensitivity across groups (\emph{i.e.}, spectrum bias~\cite{mulherin2002spectrum}).}
Then, we observe ground truth for tested individuals ($t=1\implies \tilde{y} = y$). Otherwise, the patient's label is imputed as negative ($t = 0 \implies \tilde{y} = 0$; \emph{i.e.}, untested patients are presumed healthy).
However, due to biases in testing decisions $t$, $y$ may only be available in a biased subset of the data.
The causal model of disparate censorship (Fig.~\ref{fig:main}, top) encodes this decision-making pipeline. 
Beyond healthcare, disparate censorship may arise whenever potentially biased decisions affect data labeling. 

\paragraph{Learning under disparate censorship.} 
We aim to learn a mapping $f_\theta: \*x \to y$ parameterized by $\theta$ optimized for discriminative performance (\emph{i.e.}, AUC), but only observe proxy labels $\tilde{y}$. 
The default approach for learning under disparate censorship is to assume $y = \tilde{y}$ and proceed using supervised learning.
However, such an $f_\theta$ may encode labeling biases: estimates of $P(\tilde{Y} \mid X)$ may be inflated compared to $P(Y \mid X)$ for those more likely to be labeled. Thus, biased labeling could yield disproportionate impacts on performance across different subgroups of the data.

Note that we can interpret the estimand of interest as the causal effect of testing on the observed label, since, in the language of do-calculus~\citep{pearl2009causality},
\begin{align}
    \mathbb{E}[Y \mid X] &= \mathbb{E}[Y \mid X, T=1] =\mathbb{E}[\tilde{Y} \mid X, T=1] \nonumber\\
    &= \mathbb{E}[\tilde{Y} \mid X, do(T=1)],\label{eq:ident}
\end{align}
which follows from standard causal identifiability derivations given the causal graph of Fig.~\ref{fig:main}~\cite{imbens2015causal}.
Intuitively, testing an individual ($do(T=1)$) reveals their outcome. 
Thus, a model trained only on tested individuals could consistently estimate $P(Y|X)$, but may not correct for labeling bias.
We discuss other approaches in semi-supervised learning in Section~\ref{sec:related}.

\section{Methodology}
\label{sec:method}
We propose Disparate Censorship Expectation-Maximiza-tion (DCEM) as an approach for learning in the presence of disparate censorship. We first build intuition for how one could mitigate disparate censorship based on the causal model (Section~\ref{subsec:desired}). We then derive DCEM (Section~\ref{subsec:dcem}) and show that it mitigates disparate censorship via a form of regularization (Section~\ref{subsec:balance}). We consider alternative designs and their limitations (Section~\ref{subsec:whynot}).
Detailed proofs and definitions are in Appendix~\ref{app:proofs}.

\subsection{Towards mitigating disparate censorship}
\label{subsec:desired}

Recalling the causal model of disparate censorship, suppose that we are naively training a model $f_\theta$ to predict $\tilde{y}$.
Define groups $a$ and $a'$ and $\*x \sim \mathcal{X}$. Consider some $\mathcal{X}' \subseteq \mathcal{X}$ so that
\begin{equation}
    \underset{\*x \in \mathcal{X}'}{\mathbb{P}}[T \mid X, A=a] << \underset{\*x \in \mathcal{X}'}{\mathbb{P}}[T \mid X, A=a']\label{eq:undertesting}
\end{equation}
for all $\*x \in \mathcal{X}'$.
Define $\hat{t} \triangleq P(T \mid X=\*x, A=a)$ (\emph{e.g.}, probability of receiving a laboratory test) and $ \hat{y} \triangleq P(Y \mid X = \*x)$. 
By assumption, $\mathbf{x}$ is sufficient for predicting $y$ (\emph{i.e.}, as in Fig.~\ref{fig:main}, top), such that the optimal $\hat{y}$ should be similar across $a$ (within $\mathcal{X}'$).
However, Eq.~\ref{eq:undertesting} states that group $a$ is \textit{undertested} relative to group $a'$: they have a lower $\hat{t}$ within $\mathcal{X}'$. Equivalently, labeling bias \emph{favors} group $a'$.
Thus, our naive model would underestimate $\hat{y}$ in group $a$ (lower $\hat{t}$ than group $a'$ in $\mathcal{X}'$) relative to group $a'$.

To counterbalance this bias, one could increase $\hat{y}$ (within $\mathcal{X}'$) where group $a$ is more prevalent than group $a'$; \emph{i.e.}, lower-$\hat{t}$ regions. Since we are interested in discriminative performance, this is analogous to decreasing $\hat{y}$ where $\hat{t}$ is higher, from which the proposed method follows. More broadly, variables associated with labeling bias ($A$ causally affects $T$) but not the outcome of interest ($A$ does not causally affect $Y$) may be useful for mitigating labeling bias. 

Given our causal model with latent variable $Y$ (Fig.~\ref{fig:main}, top), we base our approach on expectation-maximization (EM)~\cite{dempster1977maximum}.  
We can write: 
\begin{align}
     &P(\tilde{Y}, Y, T, X, A, U) \nonumber\\
     &= P(\tilde{Y} \mid Y, T) P(Y \mid X) P(T \mid X, A) P(X, A, U). \label{eq:ll}
\end{align}
Since $y$ is not fully observed, Eq.~\ref{eq:ll} cannot be optimized via standard supervised objectives.
Dropping terms that do not involve $Y$, we can write the maximization of Eq.~\ref{eq:ll} as 
\begin{align}
    \max \; P(\tilde{Y} \mid Y, T)P(Y \mid X).\label{eq:raw_max}
\end{align}
Optimizing Eq.~\ref{eq:raw_max} proceeds via EM. We show that the resulting objectives align with reducing $\hat{y}$ in higher-$\hat{t}$ regions and maintain discriminative performance on tested individuals.

\begin{algorithm}[t]
    \caption{Disparate Censorship Expectation-Maximiza-tion. $\mathcal{L}$: binary cross-entropy loss.}\label{alg:dcem}
    \begin{algorithmic}
\STATE {\bfseries Input:} $\{(\*x^{(i)}, \tilde{y}^{(i)}, t^{(i)}, a^{(i)})\}_{i=1}^N$
\STATE {\bfseries Output:} $f_{\theta}: \mathcal{X} \to [0, 1]$\\
$f_\theta \leftarrow \underset{f_\theta}{\arg\min} \frac{1}{|\{i: t^{(i)} = 1\}|}\sum_{i:t^{(i)} = 1} \mathcal{L}(\tilde{y}^{(i)}, f_\theta(\*x^{(i)}))$\\
$g_{\zeta^*} \leftarrow \underset{g_\zeta}{\arg\min} \frac{1}{N}\sum_{i=1}^N \mathcal{L}(t^{(i)}, g_\zeta(\*x^{(i)}, a^{(i)}))$\\
$\hat{t}^{(i)} \leftarrow g_{\zeta^*}(\*x^{(i)}, a^{(i)})$\\
\WHILE{{\bfseries not converged}}
    \STATE $Q(y^{(i)}) \leftarrow t^{(i)} \tilde{y}^{(i)} + (1 - t^{(i)}) f_{\theta}(\*x^{(i)})$  // E-step\\
    \STATE $f_{\theta} \leftarrow \underset{f_{\theta}}{\arg\min} \frac{1}{N}\sum_{i=1}^N \mathcal{L}(Q(y^{(i)}), f_{\theta}(\*x^{(i)}))$\\
    \STATE $\quad + Q(y^{(i)})  \mathcal{L}(\tilde{y}^{(i)}, f_{\theta}(\*x^{(i)}) \cdot \hat{t}^{(i)})$ // M-step 
\ENDWHILE
\STATE {\bfseries return} $f_{\theta}$ 
\end{algorithmic}
\end{algorithm}\vspace{-2mm}%
\subsection{Disparate Censorship Expectation-Maximization}
\label{subsec:dcem}

\paragraph{Informal overview.} EM alternates an expectation step (E-step), which imputes guesses for the latent variable(s) (\emph{i.e.}, $Y$ in Eq.~\ref{eq:raw_max}), and a maximization step that optimizes likelihood given the imputed estimates (M-step, \emph{i.e.},  Eq.~\ref{eq:raw_max}).
Our E-step imputes preliminary estimates of $P(Y\mid X)$ for untested individuals.
Our M-step updates the estimates to counteract labeling biases when $t^{(i)} = 0$, and is equivalent to full supervision when $t^{(i)} = 1$. 
The E- and M-steps alternate until convergence. Fig.~\ref{fig:main} (bottom) shows a schematic of DCEM, with pseudocode in Algorithm~\ref{alg:dcem}.

\paragraph{E-step.}
The posterior of $y^{(i)}$ given the observed data is:
\begin{equation}
    Q(y^{(i)}) \triangleq \E[y^{(i)} \mid \tilde{y}^{(i)}, t^{(i)}, \*x^{(i)}, a^{(i)}].~\label{eq:raw_estep}
\end{equation}
We can rewrite Eq.~\ref{eq:raw_estep} as follows:

\begin{theorem}[E-step]
    The posterior distribution of $y^{(i)}$ given the observed data is equivalent to
    \begin{align}
    Q(y^{(i)}) &= \begin{cases}
        \tilde{y}^{(i)} & t^{(i)} = 1\\
        \mathbb{E}[y^{(i)} = 1 \mid \*x^{(i)}] & \text{otherwise}
    \end{cases}\label{eq:estep}
\end{align}
\end{theorem}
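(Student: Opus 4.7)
The plan is to prove the theorem by splitting on the two possible values of $t^{(i)}$ and using the conditional independences implied by the causal graph in Fig.~\ref{fig:main} (top). The key observation is that the deterministic relationship $\tilde{y}^{(i)} = y^{(i)} t^{(i)}$ makes the conditioning on $\tilde{y}^{(i)}$ either trivially informative or trivially uninformative depending on $t^{(i)}$, reducing the work to a purely graphical independence argument.

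First, I would handle the easy case $t^{(i)} = 1$. Here $\tilde{y}^{(i)} = y^{(i)} \cdot 1 = y^{(i)}$, so $y^{(i)}$ is a deterministic function of the conditioning set, giving $\mathbb{E}[y^{(i)} \mid \tilde{y}^{(i)}, t^{(i)}=1, \*x^{(i)}, a^{(i)}] = \tilde{y}^{(i)}$ directly from the definition of conditional expectation (no graph structure needed).

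Next, I would handle $t^{(i)} = 0$. In this case $\tilde{y}^{(i)} = 0$ with probability one given $t^{(i)} = 0$, so conditioning on $\tilde{y}^{(i)} = 0$ adds no information beyond $t^{(i)} = 0$. Thus
\begin{equation*}
\mathbb{E}[y^{(i)} \mid \tilde{y}^{(i)}=0, t^{(i)}=0, \*x^{(i)}, a^{(i)}] = \mathbb{E}[y^{(i)} \mid t^{(i)}=0, \*x^{(i)}, a^{(i)}].
\end{equation*}
From the causal graph, the parents of $T$ are $\{X, A\}$ and $Y$ is not an ancestor of $T$, so $Y \perp T \mid X, A$; this lets me drop the conditioning on $t^{(i)}$. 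Finally, since the only parent of $Y$ is $X$, we have $Y \perp A \mid X$, which allows dropping $a^{(i)}$ and yields $\mathbb{E}[y^{(i)} \mid \*x^{(i)}]$, matching the stated expression (reading $\mathbb{E}[y^{(i)} = 1 \mid \*x^{(i)}]$ as $P(y^{(i)} = 1 \mid \*x^{(i)})$ for the Bernoulli $y$).

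I do not anticipate a genuine obstacle here: the statement is essentially a restatement of the causal structure combined with the deterministic link $\tilde{y} = y t$. The only point requiring care is making the $t^{(i)}=0$ conditioning-on-$\tilde{y}^{(i)}$ rigorous (since $\tilde{y}^{(i)}=0$ occurs on an event of positive probability, this is just the chain $\{t=0\} \subseteq \{\tilde{y}=0\}$ argument) and citing the correct d-separation claims ($Y \perp T \mid X, A$ and $Y \perp A \mid X$) rather than inventing unwarranted independences. Everything else is one-line algebra.
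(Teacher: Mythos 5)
Your proposal is correct and follows essentially the same route as the paper: a case split on $t^{(i)}$, using $\tilde{y}^{(i)} = y^{(i)}t^{(i)}$ for the tested case and d-separation ($Y \perp\!\!\!\perp T \mid X$ and $Y \perp\!\!\!\perp A \mid X$) to reduce the untested case to $P(y^{(i)}=1 \mid \*x^{(i)})$. The only cosmetic difference is that the paper discharges the $t^{(i)}=0$ case via an explicit Bayes'-rule computation in which the likelihood $P(\tilde{y}=0 \mid y=1, x, t=0)$ and marginal $P(\tilde{y}=0 \mid x, t=0)$ both equal one, which is exactly your cleaner event-inclusion observation $\{t=0\} \subseteq \{\tilde{y}=0\}$ written out term by term.
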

Intuitively, the E-step uses $\tilde{y}$ as the label when we have complete label information (recall $t^{(i)} = 1 \implies \tilde{y}^{(i)} = y^{(i)}$); otherwise, we use the posterior estimate $f_\theta(\*x^{(i)})$ as a smoothed label. 
Equivalently, the E-step imputes \emph{soft pseudo-labels} for unlabeled data, \emph{i.e.}, probabilistic estimates $\hat{y}^{(i)} \in [0, 1]$. 
Motivated by approaches that train a pseudo-labeling model on labeled data~\citep{arazo2020pseudo, rizve2021defense}, we pre-train $f_\theta$ on tested individuals.

\paragraph{M-step.} The M-step maximizes the log-likelihood of Eq.~\ref{eq:raw_max} given E-step estimates $Q(y^{(i)})$ (Eq.~\ref{eq:estep}). There are two terms to model, which is done via an estimator for $y^{(i)}$ trained using $Q(y^{(i)})$ and an estimator for $\tilde{y}^{(i)}$. The latter is obtained by combining an estimate of $t^{(i)}$ with $Q(y^{(i)})$. 
Concretely, let $\hat{y}^{(i)} \triangleq f_\theta(\*x^{(i)})$, and let $h_\phi$ be a model of $P(\tilde{Y} \mid Y, T)$. 
Maximizing the log-likelihood of Eq.~\ref{eq:raw_max} reduces to
\begin{align}
    &\underset{\theta}{\max}\; \sum_{i=1}^N  Q(y^{(i)})  \log \hat{y}^{(i)} 
    + (1 - Q(y^{(i)})) \log (1 - \hat{y}^{(i)})\nonumber \\
    & \quad+ Q(y^{(i)})\left[\tilde{y}^{(i)}\log h_\phi(\hat{y}^{(i)}, \hat{t}^{(i)})\right. \nonumber \\
    & \quad \left.+ (1 - \tilde{y}^{(i)}) \log(1 - h_\phi(\hat{y}^{(i)}, \hat{t}^{(i)}) )\right].\label{eq:mstep_ll}
\end{align}
This leads to the following result:
\begin{theorem}[M-step, informal]
    Maximizing Eq.~\ref{eq:mstep_ll} also maximizes the evidence-based lower bound of Eq.~\ref{eq:ll}.\vspace{-1mm}
\end{theorem}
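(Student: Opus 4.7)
The plan is to derive the standard EM evidence-based lower bound (ELBO) for the joint in Eq.~\ref{eq:ll}, and then show that, after dropping terms independent of $\theta$, it coincides with the objective in Eq.~\ref{eq:mstep_ll}.

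First, I would apply Jensen's inequality to lower-bound the marginal log-likelihood $\log P(\tilde{y}^{(i)}, t^{(i)}, \*x^{(i)}, a^{(i)}, u^{(i)})$ by introducing the variational distribution $Q(y^{(i)})$ over the latent $y^{(i)}$:
\begin{equation*}
\log P(\tilde{y}^{(i)}, t^{(i)}, \*x^{(i)}, a^{(i)}, u^{(i)}) \geq \E_Q[\log P(\tilde{y}^{(i)}, y^{(i)}, t^{(i)}, \*x^{(i)}, a^{(i)}, u^{(i)})] + H(Q).
\end{equation*}
The entropy $H(Q)$ is constant during the M-step. Applying the causal factorization from Eq.~\ref{eq:ll}, only the factors $\log P(y^{(i)} \mid \*x^{(i)})$ and $\log P(\tilde{y}^{(i)} \mid y^{(i)}, t^{(i)})$ depend on $\theta$; the factor $\log P(t^{(i)} \mid \*x^{(i)}, a^{(i)})$ is held fixed at the pre-trained $g_{\zeta^*}$, and $\log P(\*x^{(i)}, a^{(i)}, u^{(i)})$ depends only on the data-generating distribution.

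Next, I would expand each surviving expectation explicitly for binary $y^{(i)}$, writing $\hat{y}^{(i)} = f_\theta(\*x^{(i)})$ and using the shorthand $Q(y^{(i)}) \equiv Q(y^{(i)} = 1)$. The $\log P(y^{(i)} \mid \*x^{(i)})$ term expands to the binary cross-entropy $Q(y^{(i)}) \log \hat{y}^{(i)} + (1 - Q(y^{(i)})) \log (1 - \hat{y}^{(i)})$, which recovers the first line of Eq.~\ref{eq:mstep_ll}. For $\E_Q[\log P(\tilde{y}^{(i)} \mid y^{(i)}, t^{(i)})]$, I would use that $\tilde{y} = yt$ forces $P(\tilde{y}^{(i)} = 1 \mid y^{(i)} = 0, t^{(i)}) = 0$, so the $y^{(i)} = 0$ branch of the expectation vanishes: either $\tilde{y}^{(i)} = 0$ and $0 \log 0 = 0$ by convention, or $\tilde{y}^{(i)} = 1$ and $Q(y^{(i)} = 0) = 0$ by Eq.~\ref{eq:estep}. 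Substituting the soft model $h_\phi(\hat{y}^{(i)}, \hat{t}^{(i)})$ for $P(\tilde{y}^{(i)} = 1 \mid y^{(i)} = 1, t^{(i)})$ then recovers the second line of Eq.~\ref{eq:mstep_ll}. Summing over $i$ identifies the two objectives up to an additive constant, so every maximizer of Eq.~\ref{eq:mstep_ll} also maximizes the ELBO.

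The main obstacle is justifying the appearance of $\hat{t}^{(i)}$ rather than the observed $t^{(i)}$ inside $h_\phi$. The cleanest way to handle this is to derive the ELBO for the marginal model obtained by summing out $T$ from Eq.~\ref{eq:ll}: this yields $P(\tilde{y} = 1 \mid y, \*x, a) = y \cdot P(T = 1 \mid \*x, a)$, and the product form $h_\phi(\hat{y}, \hat{t}) = \hat{y}\hat{t}$ implicit in Algorithm~\ref{alg:dcem} is then the natural continuous analogue. A secondary subtlety is that collapsing the $y^{(i)} = 0$ branch requires $h_\phi$ to be expressive enough to represent the deterministic degeneracy in the limit; I would flag this but not belabor it, given the ``informal'' qualifier on the statement.
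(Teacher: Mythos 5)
Your proposal is correct and follows essentially the same route as the paper's proof: Jensen's inequality to obtain the ELBO, discarding the entropy term as constant in $\theta$, factorizing the joint via the causal DAG so that only $P(y \mid \*x)$ and $P(\tilde{y} \mid y, t)$ survive, and then using the degeneracy of $\tilde{y} = yt$ to show the $y^{(i)}=0$ branch of the $\tilde{y}$-likelihood contributes only a constant, recovering Eq.~\ref{eq:mstep_ll}. Your closing observations about substituting $h_\phi(\hat{y}^{(i)},\hat{t}^{(i)})$ for the conditional are handled by the paper only in a post-proof remark, so your treatment is, if anything, slightly more explicit on that point.
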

In practice, we set $h_\phi(\hat{y}^{(i)}, \hat{t}^{(i)})  \triangleq \hat{y}^{(i)}  \hat{t}^{(i)}$, a smoothed version of the assumption $\tilde{y} = yt$.
Defining $\mathcal{L}$ as binary cross-entropy loss, we can rewrite Eq.~\ref{eq:mstep_ll}:
\begin{equation}
    \underset{\theta}{\min}\;\sum_{i=1}^N \mathcal{L}(Q(y^{(i)}), \hat{y}^{(i)})\!+\!Q(y^{(i)}) \mathcal{L}(\tilde{y}^{(i)}, \hat{y}^{(i)}\hat{t}^{(i)}).\label{eq:mstep}
\end{equation}
Eq.~\ref{eq:mstep} can be interpreted as a regularized cross-entropy loss with respect to pseudo-label $Q(y^{(i)})$. The first term pushes $\hat{y}^{(i)}$ towards $Q(y^{(i)})$, while the second ``encourages'' $\hat{y}^{(i)}$ to be consistent with the causal model. 
To obtain $\hat{t}$, we pre-train and freeze a binary classifier for $t$, and take the probabilistic estimates as $\hat{t}$.

\subsection{DCEM counterbalances disparate censorship}
\label{subsec:balance}

We show that DCEM imposes a form of ``causal regularization'' that lowers $\hat{y}$ in untested individuals. 

\paragraph{DCEM is a form of causal regularization.} 
By analogy to regularized risk minimization, consider an objective
\begin{equation}
    \mathcal{L}(\theta) + \lambda r(\theta),\label{eq:basic_rrm}
\end{equation}
for $\lambda \in \mathbb{R}_+$  (regularization strength) and a regularizer $r:\Theta \to \mathbb{R}$, where $\Theta$ is the parameter space of $\theta$. 

Without loss of generality, setting $\lambda = 1$ and matching terms between Eq.~\ref{eq:basic_rrm} and Eq.~\ref{eq:mstep} yields $r(\theta) = Q(y^{(i)}) \mathcal{L}(\tilde{y}^{(i)}, \hat{y}^{(i)}\hat{t}^{(i)})$.
While $\hat{t}^{(i)}$ affects the optimization of Eq.~\ref{eq:mstep}, it is not a multiplier (\emph{e.g.}, $\lambda$ in Eq.~\ref{eq:basic_rrm}). To interpret the effect of $\hat{t}^{(i)}$, we propose a definition of causal regularization strength based how the optimal $\hat{y}^{(i)}$ changes.\footnote{``Causal regularization'' has been defined in the context of causal discovery~\cite{bahadori2017causal, janzing2019causal}. Our usage is unrelated: we use a causal model to regularize an estimator.} 

\begin{definition}[Causal regularization strength, informal]
Let $\hat{y}^{(i)}_{\text{OPT}}(Q(y^{(i)}), \hat{t}^{(i)})$ be the minimizer of Eq.~\ref{eq:mstep}. For $\mathcal{L}$ finite \& convex on $\hat{y}^{(i)}$ in [0, 1], the causal regularization strength is $R(\cdot) \triangleq |Q({y}^{(i)}) - \hat{y}^{(i)}_{\text{OPT}}(Q(y^{(i)}), \hat{t}^{(i)})|$.\label{def:causalreg}
\end{definition}

Definition~\ref{def:causalreg} quantifies the tradeoff between matching $\hat{y}^{(i)}$ to the E-step estimates and optimizing Eq.~\ref{eq:mstep}.
While $\hat{y}^{(i)}$ is not an optimization parameter, analyzing the optimal $\hat{y}^{(i)}$ can clarify the inductive bias of the M-step. 
We proceed by considering how causal regularization impacts untested vs. tested individuals.
 When $t^{(i)} = 0$, the M-step is 
\begin{equation}
    \underset{\theta}{\min}\; \sum_{i=1}^N \mathcal{L}(Q(y^{(i)}), \hat{y}^{(i)})- Q(y^{(i)})\log(1- \hat{y}^{(i)}\hat{t}^{(i)}).\label{eq:mstep_t0}
\end{equation}
Since $-\log(1- \hat{y}^{(i)}\hat{t}^{(i)})$ increases in $\hat{y}^{(i)}$, the regularization term ``encourages'' $\hat{y}^{(i)}$ to decrease when $\hat{t}^{(i)} > 0$. The regularization term is \textit{constant} if $\hat{t}^{(i)} = 0$, such that the M-step would not change the E-step estimate. This matches the intuition that one cannot learn about $y^{(i)}$ from individuals that are very different from labeled individuals (\emph{i.e.}, when the \emph{overlap} assumption in causal inference is violated). 
The regularization strength depends on $\hat{t}^{(i)}$ as follows:

\begin{theorem}[informal]
If $t^{(i)} = 0$, causal regularization strength increases in $\hat{t}^{(i)}$.~\label{prop:counterbalance}
\end{theorem}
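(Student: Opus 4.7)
My plan is to reduce Theorem~\ref{prop:counterbalance} to a monotone-comparative-statics statement about a single per-sample M-step objective, and then extract the sign of $dR/d\hat{t}^{(i)}$ via the implicit function theorem. Fix an untested individual with $t^{(i)}=0$, and write $q:=Q(y^{(i)})$, $\hat{y}:=\hat{y}^{(i)}\in[0,1]$, $\hat{t}:=\hat{t}^{(i)}\in[0,1]$. Because the sum in Eq.~\ref{eq:mstep_t0} decouples across samples when each $\hat{y}^{(i)}$ is a free parameter (which matches Definition~\ref{def:causalreg}), I only need to analyze
\begin{equation*}
J(\hat{y};\hat{t}) \;=\; -q\log\hat{y}\;-\;(1-q)\log(1-\hat{y})\;-\;q\log(1-\hat{y}\hat{t}),
\end{equation*}
which is a sum of three convex functions on $(0,1)$ and therefore strictly convex, so the minimizer $\hat{y}^{*}(\hat{t})$ is unique.

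The second step is to pin down the sign inside the absolute value. Since the regularizer $-q\log(1-\hat{y}\hat{t})$ is strictly increasing in $\hat{y}$ whenever $q,\hat{t}>0$, any perturbation away from the unregularized optimum $\hat{y}^{*}(0)=q$ must be downward; hence $\hat{y}^{*}(\hat{t})\le q$ and $R(\hat{t})=q-\hat{y}^{*}(\hat{t})$. It then suffices to show $\hat{y}^{*}$ is nonincreasing in $\hat{t}$. Applying the implicit function theorem to the first-order condition $F(\hat{y},\hat{t}):=\partial J/\partial\hat{y}=0$ gives $d\hat{y}^{*}/d\hat{t}=-(\partial F/\partial\hat{t})/(\partial F/\partial\hat{y})$. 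Strict convexity forces $\partial F/\partial\hat{y}>0$ at the optimum, and a direct calculation yields $\partial F/\partial\hat{t}=q/(1-\hat{y}\hat{t})^{2}\ge 0$, so $d\hat{y}^{*}/d\hat{t}\le 0$ and therefore $dR/d\hat{t}\ge 0$, with strict inequality whenever $q>0$ and $\hat{y}^{*}\in(0,1)$.

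The step I expect to be most delicate is handling the boundary cases where either $q\in\{0,1\}$ or $\hat{y}^{*}$ reaches an endpoint of $[0,1]$. When $q=0$, $J$ is independent of $\hat{t}$ and $R\equiv 0$ trivially; when $q=1$, the interior FOC has a solution only once $\hat{t}$ is large enough, below which $\hat{y}^{*}$ sits at the boundary and $R$ is constant. Both regimes are consistent with the theorem's claim but require a short case analysis (or an appeal to continuity of $\hat{y}^{*}$ in $\hat{t}$) rather than a single application of the implicit function theorem. I would also verify that the domain restriction $1-\hat{y}\hat{t}>0$ needed for $\log$ to be finite is preserved at the optimum, which follows because $-\log(1-\hat{y}\hat{t})\to\infty$ as $\hat{y}\hat{t}\to 1^{-}$, ruling out that corner.
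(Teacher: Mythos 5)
Your proposal is correct, and it reaches the conclusion by a genuinely different and more economical route than the paper. The paper's proof of Theorem~\ref{prop:counterbalance} (Appendix~\ref{app:proofs}, Theorem~\ref{thm:causalreg}) solves the first-order condition in closed form via the quadratic formula, then needs Lemmas~\ref{lem:d_pos}--\ref{lem:sign_preserve} to show the discriminant is nonnegative, to select the branch lying in $[0,1]$, and to extract the sign of $\partial \hat{y}^{(i)}_{\text{OPT}}/\partial\hat{t}^{(i)}$, finishing with L'H\^{o}pital's rule to show the supremum of $\hat{y}^{(i)}_{\text{OPT}}$ as $\hat{t}^{(i)}\to 0^+$ is $Q(y^{(i)})$, which resolves the absolute value in $R$. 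You bypass all of that: you resolve the absolute value by the monotone-comparative-statics observation that the regularizer is increasing in $\hat{y}$ (so the minimizer can only move down from $q$), and you get the sign of $d\hat{y}^{*}/d\hat{t}$ from the implicit function theorem, where $\partial F/\partial\hat{y}>0$ is free from strict convexity and $\partial F/\partial\hat{t}=q/(1-\hat{y}\hat{t})^{2}\ge 0$ is a one-line computation (I verified it). Your boundary-case discussion is also consistent with the paper: at $q=0$ the objective is independent of $\hat{t}$ and $R\equiv 0$ (so the paper's claim of strict increase on $Q(y^{(i)})\in[0,1)$ is, strictly speaking, only non-strict at $Q=0$, where its own chain of equivalences divides by $Q^2$), and at $q=1$ your description of the minimizer sitting at the boundary for $\hat{t}\le\tfrac12$ matches the paper's Corollary exactly ($\hat{y}^{*}=\min\{1,1/(2\hat{t})\}$). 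What the paper's heavier approach buys is the explicit formula for $\hat{y}^{(i)}_{\text{OPT}}$, which it reuses in a remark to bound the derivative $d\hat{y}_{\text{OPT}}/d\hat{t}$ and argue Lipschitz-robustness to extreme propensities relative to IPW; your argument establishes monotonicity without producing that closed form. For $q\in(0,1)$ the interior-optimum hypothesis of the implicit function theorem is automatic since the cross-entropy term diverges at both endpoints, so no gap remains there.
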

The result implies that \textbf{causal regularization counterbalances disparate censorship.} Recall that lowering $\hat{y}^{(i)}$ in regions where $\hat{t}^{(i)}$ is higher can mitigate bias. Equivalently, causal regularization must strengthen as $\hat{t}^{(i)}$ increases, which follows from Theorem~\ref{prop:counterbalance}. 

\paragraph{Causal regularization aligns with full supervision in tested individuals.} When $t^{(i)} = 1$, the M-step is
\begin{align}
    \underset{\theta}{\min}&\; \sum_{i=1}^N \mathcal{L}(y^{(i)}, \hat{y}^{(i)})+ y^{(i)} \mathcal{L}(y^{(i)}, \hat{y}^{(i)}\hat{t}^{(i)}),~\label{eq:mstep_t1}
\end{align}
substituting $y^{(i)}$ for $Q(y^{(i)})$ and $\tilde{y}^{(i)}$. Thus: 
\begin{proposition}
    Eq.~\ref{eq:mstep_t1} is minimized when $\hat{y}^{(i)} = y^{(i)}$.~\label{prop:mstep_t1}\vspace{-1mm}
\end{proposition}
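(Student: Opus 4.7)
The plan is to exploit the per-sample decomposability of Eq.~\ref{eq:mstep_t1} and reduce the proof to a two-case analysis on $y^{(i)} \in \{0,1\}$. Since $\mathcal{L}$ is binary cross-entropy and the sum ranges over $i$ with $t^{(i)}=1$ (so $Q(y^{(i)}) = \tilde{y}^{(i)} = y^{(i)}$), I would argue that under sufficient model capacity it suffices to show that for each fixed $i$ the summand is minimized at $\hat{y}^{(i)} = y^{(i)}$; the full objective is then minimized by any $f_\theta$ realizing these pointwise minima.

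First, I would substitute $Q(y^{(i)}) = y^{(i)}$ into Eq.~\ref{eq:mstep} to recover Eq.~\ref{eq:mstep_t1}, making the reduction explicit. Then I would split on the value of $y^{(i)}$. In the case $y^{(i)} = 0$, the coefficient $y^{(i)}$ annihilates the regularization term, and the remaining summand becomes $\mathcal{L}(0, \hat{y}^{(i)}) = -\log(1-\hat{y}^{(i)})$, which is strictly increasing in $\hat{y}^{(i)}$ on $[0,1)$ and hence minimized at $\hat{y}^{(i)} = 0 = y^{(i)}$. In the case $y^{(i)} = 1$, the summand becomes $-\log \hat{y}^{(i)} - \log(\hat{y}^{(i)}\hat{t}^{(i)}) = -2\log\hat{y}^{(i)} - \log\hat{t}^{(i)}$; since $\hat{t}^{(i)}$ does not depend on $\theta$ and $-\log\hat{y}^{(i)}$ is strictly decreasing on $(0,1]$, the minimum over $\hat{y}^{(i)} \in [0,1]$ is attained at $\hat{y}^{(i)} = 1 = y^{(i)}$.

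The only subtlety, and therefore the "main obstacle" such as it is, is the edge case $\hat{t}^{(i)} = 0$ when $y^{(i)} = 1$: then $\log(\hat{y}^{(i)}\hat{t}^{(i)})$ is $-\infty$ for every $\hat{y}^{(i)}$ and the summand is uninformative about the minimizer. I would handle this by assuming $\hat{t}^{(i)} \in (0,1]$, which is automatic for any $\hat{t}^{(i)}$ produced by a sigmoid-parameterized classifier as in Algorithm~\ref{alg:dcem}, and noting that this degeneracy corresponds to the overlap-violation regime already flagged in the discussion following Eq.~\ref{eq:mstep_t0}. A secondary point worth stating is that the pointwise-minimization argument requires $f_\theta$ to have enough capacity to independently realize the values $\hat{y}^{(i)} = y^{(i)}$ at each $\*x^{(i)}$; this is standard when characterizing Bayes-optimal solutions and mirrors the convention implicit in Definition~\ref{def:causalreg}.
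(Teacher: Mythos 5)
Your proof is correct and follows essentially the same route as the paper's: a per-example case split on $y^{(i)}\in\{0,1\}$, observing that the regularizer vanishes when $y^{(i)}=0$ and that the $y^{(i)}=1$ summand reduces to a decreasing function of $\hat{y}^{(i)}$ (the paper additionally notes convexity to get uniqueness, while you additionally flag the $\hat{t}^{(i)}=0$ degeneracy and the capacity caveat). No substantive differences.
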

Proposition~\ref{prop:mstep_t1} states that causal regularization does not change the M-step optimum from matching ground truth when $t^{(i)} = 1$ (\emph{i.e.}, regularization strength = 0). Thus, the M-step objective aligns with fully-supervised loss.

Thus, the M-step (Eq.~\ref{eq:mstep}) counterbalances disparate censorship by regularizing $\hat{y}^{(i)}$ towards 0 as $\hat{t}^{(i)}$ increases. For $t^{(i)} = 1$, the M-step optimum stays constant, and DCEM should maintain discriminative performance.

\subsection{Alternative designs and their limitations}
\label{subsec:whynot}

We consider two alternative designs and their limitations: directly using $t^{(i)}$ in DCEM and propensity score adjustment.

\paragraph{Why not use $t^{(i)}$ directly?} 
We substitute $\hat{t}^{(i)} = t^{(i)}$ into Eq.~\ref{eq:mstep} and analyze one summand (without loss of generality):
\begin{align}
    \mathcal{L}(y^{(i)}, \hat{y}^{(i)})+ y^{(i)}\mathcal{L}(y^{(i)}, \hat{y}^{(i)}) & \quad t^{(i)} = 1 \label{eq:hard_t1_mstep}\\
    \mathcal{L}(Q(y^{(i)}), \hat{y}^{(i)}) & \quad t^{(i)} = 0
    \label{eq:hard_t0_mstep}
\end{align}
Both losses use the E-step estimate $Q(y^{(i)})$ as supervision. When $t^{(i)} = 1$ (Eq.~\ref{eq:hard_t1_mstep}), the M-step adds $y^{(i)}\mathcal{L}(y^{(i)}, \hat{y}^{(i)})$, penalizing false negatives 2x as heavily as false positives. This does not affect ranking metrics (\emph{e.g.}, AUC).
When $t^{(i)} = 0$ (Eq.~\ref{eq:hard_t0_mstep}), the M-step drops causal regularization, and thus cannot counterbalance disparate censorship.
Directly using $t^{(i)}$ would only help if counterbalancing disparate censorship is unnecessary for good estimation, \emph{i.e.}, when tested individuals are representative of the population. 

\paragraph{Why not propensity score adjustment/related causal approaches?}
Recall that estimating the effect of $T$ on the observed label yields a consistent estimate of $P(Y \mid X)$ (Eq.~\ref{eq:ident}, Section~\ref{sec:background}).  
Indeed, $\hat{t}^{(i)}$ is an estimate of $P(T \mid X, A)$, \emph{i.e.}, a \textit{propensity score}, motivating the usage of causal effect estimators that leverage $\hat{t}^{(i)}$.
However, propensity score adjustment (\emph{e.g.}, IPW~\cite{rosenbaum1983central} or doubly-robust variations~\cite{robins1994estimation, van2006targeted, hu2022non}) require an ``overlap'' assumption $\eta < \hat{t}^{(i)}< 1-\eta$ for some $\eta = (0, \frac{1}{2})$ and have asymptotic variance of order $O(1 / (\eta \cdot (1 - \eta))$, which is sensitive to extreme $\hat{t}^{(i)}$ (\emph{e.g.}, as in AIPW~\citep{glynn2010introduction}).

However, in finite-sample settings, ``sharp'' testing decisions lead to weak overlap.
Such extreme $\hat{t}^{(i)}$ may arise in threshold-based decisions~\cite{djulbegovic2014physicians, pierson2018fast}.
For example, a patient either exhibits or does not exhibit the requisite symptoms to warrant testing.
This is analogous to inducing covariate shift between tested and untested individuals. In other words, ``holes'' in the training data emerge when using only labeled examples. Thus, systematic testing bias could exacerbate model performance gaps across population subgroups. 
While low overlap still impacts DCEM (since DCEM cannot learn when $\hat{t}^{(i)} = 0$), our method instead leverages an evidence-based lower bound to model $y$ under disparate censorship. We further discuss potential improvements in overlap-robustness of the proposed approach in Appendix~\ref{app:proofs}.

\section{Experimental Setup}

We validate DCEM with \textbf{synthetic data} across different data-generation processes on simulated binary classification tasks (Section~\ref{subsec:synth}) and in a pseudo-synthetic sepsis classification task using real clinical data \textbf{(MIMIC-III)}~\cite{johnson2016mimic}, across potential laboratory testing policies (Section~\ref{subsec:sepsis}).
We then discuss our chosen baselines (Section~\ref{subsec:models}) and evaluation metrics (Section~\ref{subsec:eval}).

\subsection{Synthetic Datasets}
\label{subsec:synth}
By definition, $y$ is not fully observed under disparate censorship. Thus, we design a simulation study in order to evaluate various methods with respect to ground truth. The data generation process follows from the assumed causal model of disparate censorship (Fig.~\ref{fig:main}, top):
\begin{align*}
&\quad a^{(i)} \sim Ber(0.5),   
\*x^{(i)} \sim \mathcal{N}( \mu_a \cdot \*1_2, 0.03^2 \*I_{2 \times 2})\\
&\quad t^{(i)} \sim Ber(\sigma(30 \cdot s_T(\*x^{(i)}, a^{(i)}))) \\
&\quad  y^{(i)} \sim Ber(\sigma(10 \cdot s_Y(\*x^{(i)}) - c_y)),\; \tilde{y}^{(i)} = y^{(i)}t^{(i)} 
\end{align*}
where $\*I_{2 \times 2}$ is the identity matrix, and $s_T: \*x, a \to \R$, $s_Y: \*x \to \R$, and $\mu_a \in \R, c_y \in \R$ are simulation parameters. We set $P(A = 0) = 0.5$ and induce confounding between $\*x^{(i)}$ and $a^{(i)}$ by setting $u^{(i)} = a^{(i)}$. 
We draw $\*x^{(i)} \in \mathbb{R}^2$ from group-specific Gaussians, and  assume Bernoulli-distributed $t^{(i)}$ and $y^{(i)}$ with parameters defined via $s_T: \*x, a \to \mathbb{R}$ and $s_Y: \*x \to \mathbb{R}$, respectively. Intuitively, $s_T$ ($s_Y$) is a soft ``decision boundary'' for $T$ ($Y$). Inspired by  observations that clinician testing is can be represented by simpler functions than observed outcomes~\cite{mullainathan2022diagnosing}, we choose a non-linear $s_Y$ and a linear $s_T$.

We simulate $N=20,000$ individuals for training, validation, and testing each (\emph{i.e.}, 60,000 total). 
We define settings in terms of testing disparity $q_t =  \frac{P(T \mid A = 0)}{P(T \mid A = 1)}$, prevalence disparity $q_y = \frac{P(Y \mid A = 0)}{P(Y \mid A = 1)}$, and testing multiple $k =\frac{P(T = 1)}{P(Y = 1)}$. Intuitively, $q_t$ controls labeling biases, $q_y$ controls differences between groups, and $k$ controls testing rate. We consider $q_t \in \{1/4, 1/3, 1/2, 1, 2, 3, 4\}$, $q_y \in \{1/4, 1/3, 1/2, 1\}$, and $k \in \{1/4, 1/3, 1/2, 1, 2, 3, 4\}$, 
and set simulation parameters to yield the desired $q_t, q_y, k$.\footnote{We skip settings where $q_t, q_y, k$ yield infeasible testing rates.}

Since $s_Y$ is unknown in practice, we replicate the main experiments across various $s_Y$ as a robustness check.
The simulation makes simplifying assumptions (\emph{e.g.}, low dimensionality and $P(A=0) = 0.5$) but allows full control over $y$ and $t$. Additional simulation details are in Appendix~\ref{app:sim_details}.

\subsection{Clinical data: MIMIC-III}
\label{subsec:sepsis}

Multiple sepsis definitions, such as Sepsis-3~\cite{singer2016third}, are based on laboratory tests (blood culture) such that patients without a test result are \emph{by definition} negative.
Thus, sepsis classification is a potential real-world case of disparate censorship. We curate a sepsis classification task using
 the MIMIC-III Sepsis-3 cohort~\cite{johnson2016mimic, johnson2018comparative}, an electronic health record dataset.

We aim to distinguish patients who never develop sepsis from those who develop sepsis within 8 hours of an initial 3-hour observation period.
If a patient met the Sepsis-3 criteria between 3-11 hours of the first chart measurement, we set $y=1$, and $y=0$ if the patient never develops sepsis during their hospital stay.
We exclude patients with onset times outside this range and include only White and Black patients to simplify the analysis of bias mitigation.
We choose $\*x \in \R^{13}$ following an existing sepsis prediction model~\cite{delahanty2019development}, and exclude patients where all features are missing. This yields $N = 5,301$ patients, from which we create a 60-20-20 train-validation-test split. 
This is a simplified version of a real clinical task, since we exclude patients who develop sepsis later during their hospitalization. Nonetheless, it is helpful for probing the strengths and weaknesses of the proposed approach.

To evaluate model performance, we assume that the observed $y$ reflects ground truth, since $\approx 90\%$ of patients were tested (\emph{i.e.}, received a blood culture) in our cohort.
To generate label proxies $\tilde{y}$, we simulate multiple potential labeling biases via a clinically-inspired testing function $s_T$. We specify a linear $s_T$ based on qSOFA, a score for triaging patients at risk of sepsis~\cite{seymour2016assessment}.
Inspired by observations that clinicians over-weight representative symptoms in diagnostic test decisions~\cite{mullainathan2022diagnosing}, we create different versions of $s_T$ via different weightings of qSOFA features. 
We examine $k \in \{1/4, 1/3, 1/2, 1, 2, 3, 4, 5\}$ and $q_t \in \{1/2, 2/3, 1, 3/2, 2\}$.
Details of the sepsis cohort are in Appendix~\ref{app:sepsis_details}.

\subsection{Models}
\label{subsec:models}

As naive baselines, we test a \textbf{$y$-obs model} (training on $\tilde{y}$) and training on \textbf{group $a$ only}. 
We select similarly-motivated or applicable baselines from related settings:\vspace{-2mm}
\begin{itemize}[leftmargin=*]
\item \textbf{Noisy-label learning}: Group peer loss~\cite{wang2021gpl} (Appendix: Peer loss~\cite{liu2020peer},  truncated $\ell_q$ loss~\cite{zhang2018lq} and generalized Jensen-Shannon loss~\cite{ranzato2021gjs}),\vspace{-1mm}
\item \textbf{Semi-supervised learning}: SELF~\cite{nguyen2020self} (Appendix: DivideMix~\cite{li2020dividemix}),\vspace{-1mm}
    \item \textbf{Causal inference}: \textbf{tested-only} (training on examples where $t = 1$), and DragonNet~\cite{shi2019adapting}, using the treatment effect of the sensitive attribute on testing to correct disparate censorship (\emph{i.e.}, learn a correction for $P(Y\mid X) - P(\tilde{Y} \mid X, A)$),\vspace{-1mm}
    \item \textbf{Positive-unlabeled learning:} Selected-At-Random EM (SAREM)~\cite{bekker2020sarem}.\vspace{-2mm}
\end{itemize}
As an oracle, we compare to training on $y$ (``$y$-model''). 
We use neural networks for all approaches. Training details, such as  hyperparameters, are in Appendix~\ref{app:hyperparam}.

\begin{figure*}[t]
    \centering
    \includegraphics[width=\linewidth]{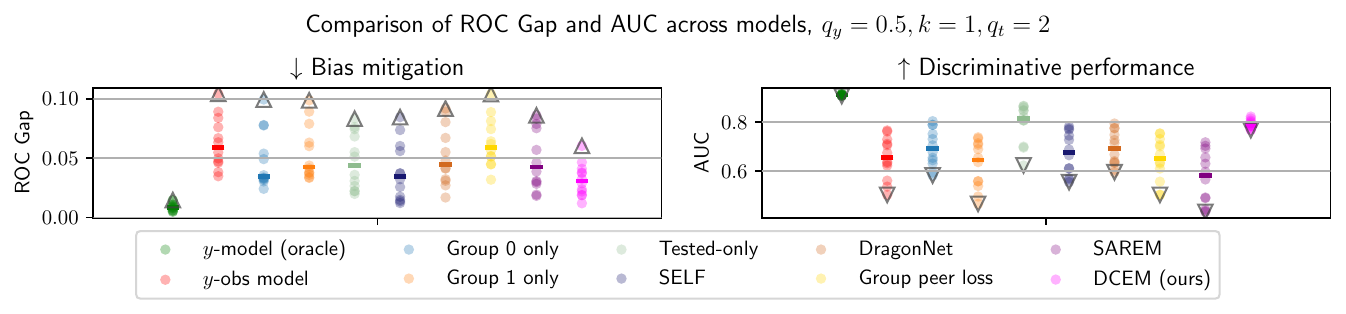}
    \vspace{-9mm}
    \caption{Comparison of ROC gap (left) and AUC (right) of selected models at $q_y = 0.5,  k=1, q_t  = 2$. Each point represents a different $s_Y$. Our method (DCEM, magenta) mitigates bias while maintaining competitive AUC compared to baselines, with a tighter range and improved empirical worst-case for both metrics. ``-'': median, ``$\bigtriangleup$'': worst-case ROC gap, ``$\bigtriangledown$'': worst-case AUC.}\vspace{-3mm}
    \label{fig:main-result}
\end{figure*}

\subsection{Evaluation metrics}
\label{subsec:eval}

We consider bias mitigation and discriminative performance metrics with respect to $y$, and measure the robustness of both metrics to changes in the data-generation process.  

\paragraph{Discriminative performance.} We use the area under the receiver operating characteristic curve (AUC), a standard discriminative performance metric. 

\paragraph{Mitigating bias.} We use the ROC gap (also called ROC fairness~\cite{vogel2021learning} or ABROCA~\cite{gardner2019evaluating}), the absolute area between the ROC curves for each group $a$. 
The ROC gap is in [0, 1]. Lower values indicate better bias mitigation.
Intuitively, the ROC gap is zero when a classifier with some fixed false positive rate in each group obtains equal true positive rates across groups. 
Under disparate censorship, a zero ROC gap is achievable if a model \textit{perfectly} predicts $y$ from $\*x$.

\paragraph{Robustness.} We consider the median AUC and ROC gap over all $s_Y$ (synthetic data setting) or $s_T$ (sepsis classification) and the empirical \textbf{worst-case} (AUC:  min.; ROC gap: max.) and \textbf{range}.

\begin{figure*}[t]
    \centering
    \includegraphics[width=\linewidth]{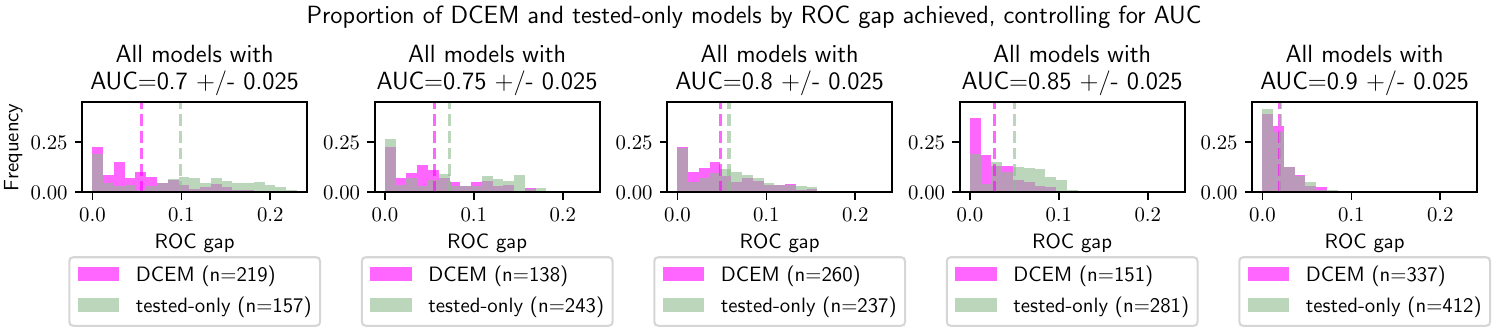}    \vspace{-8mm}
    \caption{Relative frequencies of ROC gaps for DCEM vs. tested-only models at similar AUC (increasing to the right), pooling models across all $k, q_y, q_t$ tested. Dashed lines = mean ROC gap by model. DCEM improves bias mitigation among models with similar AUC.}\vspace{-4mm}
    \label{fig:no_tradeoff}
\end{figure*}

\section{Experiments \& Discussion}

Our experiments aim to substantiate our main claims:
\begin{itemize}[leftmargin=*]
    \item In synthetic data, DCEM mitigates bias, maintains competitive discriminative performance and improves robustness, while achieving better tradeoffs between performance and bias mitigation compared to baselines (Section~\ref{subsec:sim_results}).\vspace{-1mm}
    \item On a sepsis classification task, DCEM improves discriminative performance while maintaining good tradeoffs with bias mitigation, and is more robust compared to baselines (Section~\ref{sec:pseudo}).\vspace{-2mm}
\end{itemize}
We also report full results (Appendix~\ref{app:full_results}) and an ablation study of DCEM (Appendix~\ref{app:ablation}).
We also benchmark  causal effect estimators (\emph{i.e.}, as alternatives to the tested-only model) and their overlap robustness compared to DCEM (Appendix~\ref{app:overlap}). Further sensitivity analyses can be found in Appendix~\ref{app:sensitivity_t} (smoothed $\hat{t}^{(i)}$) and \ref{app:sensitivity_init} (E-step initialization).

\subsection{Results on simulated disparate censorship}
\label{subsec:sim_results}

Fig.~\ref{fig:main-result} shows ROC gaps (left) and AUCs (right) of the baselines most competitive with our approach (DCEM, magenta) at $q_y = 0.5, k=1, q_t=2$. In this setting, 25\% (\emph{i.e.}, $k \cdot P(Y = 1)$) of individuals are tested, and the base rate of $Y$ in group $a=0$ is $1/2$ that of group $a=1$, but group $a=0$ is twice as likely to be tested.
Each point is an ROC gap/AUC value achieved under one decision boundary $s_Y$. Results for the remaining baselines are in Appendix~\ref{app:full_results}. The takeaways align with the main results.

\paragraph{DCEM mitigates bias more effectively than baselines.} 
DCEM achieves a median ROC gap of 0.030 (2nd-best, SELF: 0.034), suggesting that it mitigates bias more effectively than baselines (Fig.~\ref{fig:main-result}, left). 
We show similar trends for $q_t \geq 1$, $q_y$, and $k \in [1/3, 2]$ (Appendix~\ref{app:full_results}). At low testing rates, all models mitigate bias poorly. At high testing rates, the tested-only model is sufficient.

For $q_t < 1$ (Appendix~\ref{app:full_results}), DCEM mitigates bias compared to the default approach ($y$-obs model) but no longer dominates the baselines.
We hypothesize that DCEM has similar bias mitigation capabilities as baselines, since there is less bias to mitigate.
Recalling that $q_y < 1$, since $q_t < 1$, testing probability, $P(Y \mid X)$ \emph{and} $P(\tilde{Y} \mid X)$ are correlated. Learning to predict $\tilde{Y}$ would preserve ordering in $P(Y \mid X)$, reducing impacts on ranking metrics (\emph{e.g.}, ROC gap).\footnote{Such settings are related to boundary-consistent noise; see Proposition 1 of~\cite{menon2018learning}.} 

\begin{figure*}[t]
    \centering
    \includegraphics[width=\linewidth]{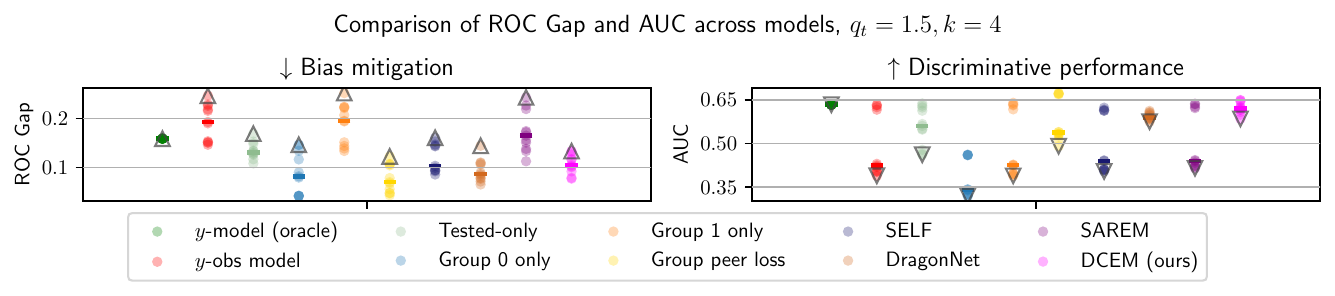}   \vspace{-8mm}
    \caption{ROC gaps (left) and AUC (right) of baselines and DCEM on sepsis classification task at $q_t=1.5, k=4$. Each dot represents a different $s_T$. Our method (DCEM, magenta) maintains competitive or better bias mitigation and discriminative performance compared to baselines. ``-'': median, ``$\bigtriangleup$'': worst-case ROC gap, ``$\bigtriangledown$'': worst-case AUC.}\vspace{-3mm}
    \label{fig:sepsis}
\end{figure*}

\paragraph{DCEM is more robust than baselines to changes in the data-generating process.} 
Fig.~\ref{fig:main-result} (left) shows that the maximum ROC gap is lower for DCEM compared to baselines (ours: 0.060 vs. 2nd-best, tested-only: 0.083). We report similar results for the minimum AUC (Fig.~\ref{fig:main-result}, right; ours: 0.768 vs. 2nd-best, tested-only: 0.623). 
DCEM also achieves a tighter ROC gap and AUC range.
Fig.~\ref{fig:main-result} also shows that our method has the tightest ROC gap range (left, DCEM: 0.048 vs. tested-only, 2nd-tightest: 0.063) and AUC range (right, DCEM: 0.055 vs. DragonNet: 0.199).

The results suggest that DCEM maintains robust bias mitigation and discriminative performance across different data-generation processes ($s_Y$).
This is expected, as DCEM optimizes likelihood under the disparate censorship data-generation process by design.
In contrast, the baselines may experience selection bias or misspecification, since they discard data or assume certain noise structures/variable dependencies that disparate censorship violates.

\paragraph{DCEM maintains competitive discriminative performance.} 
Fig.~\ref{fig:main-result} (right) shows that DCEM outperforms all baselines except for the tested-only model, which our approach lags by 0.028 AUC (DCEM: 0.787 vs. tested-only: 0.815). Other causal estimators achieve similar discriminative performance to the tested-only approach (Appendix~\ref{app:overlap}). However, our method improves on the ``default'' $y$-obs model, increasing the median AUC by 0.130 ($y$-obs: 0.657). SELF, which has a similar median ROC gap to DCEM, underperforms DCEM by 0.110 AUC (SELF: 0.677 vs. DCEM: 0.787).
Other baselines also underperform. 
This is expected, since some methods ignore label bias: training on $\tilde{Y}$ alone is misspecified for $E(Y \mid X)$, since it incorrectly assumes that if $T=0$, then $Y=0$. The same argument applies to Group 0/1 only approaches.

Some baselines account for label noise/bias, but are  mis-specified under disparate censorship since they make different independence assumptions. Group peer loss assumes $T \perp\!\!\!\perp X \mid (Y, A)$, and SELF assumes $T \perp\!\!\!\perp X \mid Y$, ignoring the dependence of biased selective labeling on $X$. DragonNet accounts for $X$ by adding $P(T \mid X, A = 1) - P(T \mid X, A = 0)$ to the default model’s estimates (i.e., $P(\tilde{Y} \mid X)$) as a ``correction factor.'' However, the correction factor may be biased for true negatives: the oracle is zero, because $P(\tilde{Y} \mid X) = P(Y \mid X)$, but $P(T \mid X, A = 1) - P(T \mid X, A = 0) \neq 0$ in general under systematic labeling bias.

SAR-EM, which is most similar to the proposed approach, models missingness at random (i.e., $T \not\perp\!\!\!\perp X, Y$), but discards reliable negatives. In contrast, the proposed approach incorporates reliable negatives in alignment with our assumptions about labeling bias, allowing it to counterbalance biased selective labeling.
Trends are similar for other $q_t, q_y$ and $k \in [1/2, 2]$.
Since the tested-only model is a strong baseline, we now compare it directly to DCEM.

\paragraph{DCEM achieves better tradeoffs between discriminative performance and bias mitigation.} Among models with similar AUC where $\text{AUC} < 0.875$, DCEM reduces ROC gaps compared to the tested-only model (Fig.~\ref{fig:no_tradeoff}).
For example, for $\text{AUC} \in (0.825, 0.875)$ (Fig.~\ref{fig:no_tradeoff}, 2nd from right), DCEM improves the average ROC gap by 0.022 (0.028 vs. 0.050), with similar trends at lower AUCs.
Among the best-performing models ($\text{AUC} > 0.875$; Fig.~\ref{fig:no_tradeoff}, 1st from right), both methods have similar ROC gaps. 

The results suggest that DCEM is not trading discriminative performance for bias mitigation. 
At a given AUC, DCEM more often yields models with a lower ROC gap  than the tested-only model.
Since the tested-only approach does not account for label bias, it can achieve relatively high AUC without mitigating bias. 
In contrast, DCEM explicitly counteracts disparate censorship.
A similar comparison to SELF shows that, at low ROC gaps, DCEM likewise finds higher-AUC solutions than SELF (Appendix~\ref{app:self_tradeoff}).

\subsection{Results on sepsis classification in MIMIC-III}
\label{sec:pseudo}

\paragraph{DCEM has better discriminative performance than baselines.}
Fig.~\ref{fig:sepsis} compares the ROC gap and AUC of DCEM to selected baselines at testing disparity $q_t = 1.5$, and testing rate multiplier $k=4$. Each dot corresponds to one variation of $s_T$ (laboratory testing policy).
Our method has the highest median AUC among baselines (ours: 0.620 vs. DragonNet: 0.593), nearing the oracle ($y$-model, 0.633).
Note that DCEM has better discriminative performance than the tested-only approach, suggesting that extrapolation from tested to untested individuals is more difficult on the sepsis classification task than the fully synthetic tasks.

\paragraph{DCEM achieves good tradeoffs with bias mitigation.} 
DCEM achieves a smaller median ROC gap compared to five of eight baselines tested. 
Group peer loss, DragonNet and the Group 0 approach achieve lower median ROC gaps of 0.070, 0.088 and 0.082, respectively (DCEM: 0.105). 
However, the Group 0 approach catastrophically fails (median AUC: 0.342). 
Models may perform arbitrarily poorly under disparate censorship if labeling biases sufficiently ``conceal'' the true decision boundary. 
Group peer loss (among many other baselines) also exhibits a much wider AUC range than the proposed approach (Group peer loss: 0.182 vs. DCEM: 0.065), suggesting that its discriminative utility may be limited. 
DragonNet appears competitive (0.027 AUC lower than DCEM), but 
would only perform well when the effect of race on testing is close to  $|P(Y\mid X) - P(\tilde{Y} \mid X, A)|$, which is violated if labeling biases (large effect of race on testing) are present in negative patients ($P(Y \mid X) \approx P(\tilde{Y} \mid X, A)$). 

Many approaches, including DCEM, obtain a lower ROC gap than training on $y$. Although the oracle obtains the highest median AUC, optimizing discriminative performance on $y$ is not always guaranteed to mitigate bias.
DCEM uses labeling probabilities to mitigate bias via causal regularization, while DragonNet directly uses an estimate of the labeling bias as a correction factor.
Thus, the results validate that the labeling bias can provide signal for bias mitigation.

\paragraph{DCEM is more robust than most baselines to changes in $s_T$.}
DCEM maintains robust bias mitigation capabilities across $s_T$; \emph{i.e.}, differences in how labelers weigh features in their decisions. Fig.~\ref{fig:sepsis} shows that DCEM attains a maximum ROC gap of 0.133 (left; DragonNet: 0.144), and a minimum AUC of 0.584 (right; DragonNet: 0.574).
Fig.~\ref{fig:sepsis} also shows that DCEM achieves the tightest ROC gap range (left; DCEM: 0.094 vs. 2nd-best: 0.102) and 2nd-tightest AUC range (right; DCEM: 0.065 vs. DragonNet: 0.018). 
Many baselines also exhibit a \textit{bimodal} empirical AUC
distribution and only perform well under specific labeling behaviors. 
We examine the sensitivity of baselines to $s_T$ by plotting AUC and ROC gaps against coefficients of $s_T$ (Appendix~\ref{app:sepsis_robust}). 

While DragonNet is competitive on this dataset, its robustness and performance capabilities may not generalize (\emph{e.g.}, simulation results, Fig.~\ref{fig:main-result}). 
DCEM is the only approach tested that achieved competitive discriminative performance and bias mitigation on both datasets. Trends in performance and robustness are similar for other $k, q_t$ (Appendix~\ref{app:full_results}).

\paragraph{Overall takeaways.} 
In a simulation study of disparate censorship, DCEM mitigates bias while achieving similar or better discriminative performance compared to baselines. The proposed approach is empirically more robust than baselines to changes in the data-generating process.
On a sepsis classification task, DCEM mitigates bias while improving or maintaining discriminative performance compared to baselines across different labeling behaviors.
Thus, DCEM can potentially mitigate bias with less impact on discriminative performance than existing methods. 

\section{Related Work} 
\label{sec:related}

\paragraph{Selective labeling/disparate censorship.} Disparate censorship is a variation of selective labeling~\cite{lakkaraju2017selective, kleinberg2018human} and outcome measurement error~\cite{guerdan2023groundless}. 
Selective labeling problems have been studied in clinical settings~\cite{farahani2020explanatory, shanmugam2021quantifying, chang2022disparate, mullainathan2022diagnosing, balachandar2023domain}, social/public policy~\cite{saxena2020childwelfare, kontokosta2021bias, laufer2022end, liu2022equity, kiani2023counterfactual}, and finance~\cite{bjorkegren2020behavior, henderson2023integrating}, among other domains. For an extended literature review of selective labeling problems, see Appendix~\ref{app:lit_review}.

Past work has trained ML models under disparate censorship, directly encoding untested individuals as negative~\cite{ henry2015targeted, jehi2020individualizing, mcdonald2021derivation, adams2022prospective, MCURES}.
Previous approaches for learning under selective labels leverage heterogeneity in human decisions to recover outcome estimates~\cite{lakkaraju2017selective, kleinberg2018human, chen2023learning}, or use domain-specific adjustments~\cite{gholami2018adversary, balachandar2023domain}. 
We propose DCEM, a complementary approach for mitigating bias under disparate censorship without such restrictions.\vspace{-1mm}

\paragraph{Semi-supervised learning.} Semi-supervised approaches do not assume labels for untested individuals. However, many causally-motivated methods diverge from the causal model of disparate censorship~\cite{madras2019fairness, yao2021causalnl, garg2023instancegm, guerdan2023counterfactual, gong2021instancedep, kato2023automatic, sportisse2023labels} via different independence/causal relationships between variables. 
Filtering methods~\cite{ han2018coteaching,  li2020dividemix, nguyen2020self, Chen2020BeyondCA, prog_noise_iclr2021, zhao2022tscsiidn} assume specific model behavior on noisy examples (\emph{e.g.,}, noise is learned late in training~\cite{arpit2017memorization}) or labeling bias (randomness/class-dependence), which disparate censorship violates.
We also highlight historical expectation-maximization approaches for learning with missing data~\citep{ghahramani1993supervised, ghahramani1996algorithm, ambroise2000algorithm}, which place parametric assumptions on the data-generation process. We use neural networks to target the estimands of interest to circumvent parametric assumptions.

Other alternatives include positive-unlabeled learning approaches that assume labeling depends on covariates (\emph{e.g.}, missing not at random)~\cite{bekker2020sarem, furmanczyk2022joint, gerych2022recovering, wang2024pue}. However, these methods do not leverage correctly-labeled negatives, and naively-incorporating negative examples without causal assumptions may potentially harm model performance or bias mitigation. 
Other methods for noisy-label learning make assumptions incompatible with our setting, \emph{e.g.} uniform noise within subgroups~\cite{wang2021gpl}, almost-surely clean \& noisy examples~\cite{liu2015classification, patrini2016making, tjandra2023alignment}, different variable independence/directionality relationships~\cite{wu2022fairidn}, 
that noisy (\emph{i.e.}, out of distribution) examples are rare~\cite{wald2023birds}, 
or other noise constraints~\cite{li2021provably, zhu2021clusterability}.
Our approach complements existing work by jointly modeling selective and biased labeling via causal assumptions tailored to a biased decision-making pipeline. 

\section{Conclusion}
When biased human decisions affect observations of ground truth, applying standard supervised learning techniques to data exhibiting disparate censorship can amplify the harm of ML models to marginalized groups.
We propose Disparate Censorship Expectation-Maximization (DCEM), a novel approach to classification, to mitigate such harm. In a simulation study and a sepsis classification task, DCEM mitigates bias and maintains competitive discriminative performance compared to baselines.
Limitations of DCEM include potential slow convergence, since EM is iterative.
Model evaluation under disparate censorship is also inherently difficult due to the difficulty of obtaining ground truth, motivating future work in dataset curation.
Furthermore, DCEM does not learn a full generative model with all variables. While such a model could target a wider range of estimands, it would also increase the number of terms that need to be modeled. 
Ultimately, DCEM is a step towards mitigating the disproportionate impacts of disparate censorship. Our work aims to raise awareness of disparate censorship and motivate the study of bias mitigation methods.

\section*{Impact Statement}
This paper addresses disparate censorship, a realistic source of  label bias in ML, and proposes a method that mitigates its harms. Since the goal of the paper is aligned with reducing inequity in decision-making, practical use cases of DCEM are inherently high-stakes settings.
Thus, we believe that the ethical usage of DCEM (or any bias mitigation approach) in the real-world requires prospective model evaluation in the context of use (\emph{e.g.}, shadowing human decision-makers) to assess unforeseen negative impacts. Our work provides a general choice of bias mitigation (area between ROC curves) and discriminative performance metrics (AUC), which are motivated by clinical tasks where equitably ranking individuals in terms of resource needs is important.
Practitioners should ensure their evaluation metrics align with domain-specific criteria for bias mitigation and performance.

\section*{Acknowledgements}
We thank (in alphabetical order) Donna Tjandra, Divya Shanmugan, Fahad Kamran, Jung Min Lee, Maggie Makar, Meera Krishnamoorthy, Michael Ito, Sarah Jabbour, Shengpu Tang, Stephanie Shepard, and Winston Chen for helpful conversations and proofreading, and the anonymous reviewers for their constructive feedback. 
T.C. and J.W. are supported by the U.S. National Heart, Lung, and Blood Insitute of the National Institutes of Health (Grant No. 5R01HL158626-03).
The views and conclusions in this document are those of the authors and should not be interpreted as necessarily representing the official policies, either expressed or implied
of the U.S. National Institutes of Health.
\bibliography{main}
\bibliographystyle{icml2024}

\newpage
\appendix
\onecolumn

\section{Selective labeling in the literature}
\label{app:lit_review}

We enumerate domains in which our literature review found instances of selective label problems in the ML methods and applications literature:
\begin{itemize}
    \item \textbf{Healthcare}: Laboratory/diagnostic testing~\cite{chang2022disparate, mullainathan2022diagnosing} and diagnosis~\cite{farahani2020explanatory, shanmugam2021quantifying, balachandar2023domain}
    \item \textbf{Social \& public policy:} Child welfare assessment~\cite{saxena2020childwelfare, kiani2023counterfactual}, urban planning/policy~\cite{kontokosta2021bias, laufer2022end, liu2015classification}, hiring pipelines~\cite{peng2019you, suhr2021fairranking}, student placement~\cite{bergman2021using}, and bias in policing~\cite{rambachan2019bias, pierson2020large}
    \item \textbf{Finance:} Credit repayment~\cite{bjorkegren2020behavior} and financial auditing~\cite{henderson2023integrating}
    \item \textbf{Others/miscellaneous:} Wildlife conservation~\cite{gholami2018adversary}, social media content moderation~\cite{binns2017like}
\end{itemize}

We note that this is not an exhaustive list of all papers in the selective labeling literature or related problem settings. However, this list illustrates the broad applicability and relevance of our problem setting.

\section{Omitted Proofs}
\label{app:proofs}

For convenience, we restate all theorems and propositions here.

\subsection{Theorem 3.1}
\begin{theorem*}[E-step derivation]
    The posterior conditional mean of $y^{(i)} = 1$ given the observed data, $Q(y^{(i)}) \triangleq \mathbb{E}[y^{(i)} =1\mid t^{(i)}, \tilde{y}^{(i)}, a^{(i)}, \*x^{(i)}]$, is equal to
    \begin{align}
    Q(y^{(i)}) &= \begin{cases}
        \tilde{y}^{(i)} & t^{(i)} = 1\\
        \mathbb{E}[y^{(i)} = 1 \mid \*x^{(i)}] & \text{otherwise}
    \end{cases}.
\end{align}
\end{theorem*}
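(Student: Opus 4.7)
My plan is to split into the two cases $t^{(i)} = 1$ and $t^{(i)} = 0$ and handle each by direct computation.

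For the first case, I would invoke the definition $\tilde{y}^{(i)} = y^{(i)} t^{(i)}$ from Section~\ref{sec:background}. When $t^{(i)} = 1$, this forces $\tilde{y}^{(i)} = y^{(i)}$ almost surely, so $y^{(i)}$ is completely determined by the observed data. Hence $\mathbb{E}[y^{(i)} \mid t^{(i)} = 1, \tilde{y}^{(i)}, a^{(i)}, \*x^{(i)}] = \tilde{y}^{(i)}$, matching the top branch of the piecewise expression.

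For the second case, the plan is to observe first that when $t^{(i)} = 0$, the equality $\tilde{y}^{(i)} = y^{(i)} t^{(i)}$ forces $\tilde{y}^{(i)} = 0$ almost surely. Thus conditioning on $\tilde{y}^{(i)} = 0$ adds no information beyond $t^{(i)} = 0$, and
\begin{equation*}
\mathbb{E}[y^{(i)} \mid t^{(i)}=0, \tilde{y}^{(i)} = 0, a^{(i)}, \*x^{(i)}] = \mathbb{E}[y^{(i)} \mid t^{(i)} = 0, a^{(i)}, \*x^{(i)}].
\end{equation*}
Then I would use the factorization in Eq.~\ref{eq:ll}, together with the causal graph (Fig.~\ref{fig:main}, top), to establish $Y \perp\!\!\!\perp (T, A) \mid X$. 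Concretely, the factorization shows that $P(Y \mid X, T, A, U) = P(Y \mid X)$, so marginalizing $U$ yields $P(Y \mid X, T, A) = P(Y \mid X)$. This collapses the right-hand side to $\mathbb{E}[y^{(i)} \mid \*x^{(i)}]$, giving the bottom branch of the piecewise expression.

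The main subtlety is making the reduction in the $t^{(i)} = 0$ case rigorous: $\tilde{y}^{(i)}$ is a descendant of $y^{(i)}$ in the DAG, so na\"ively one might worry that conditioning on $\tilde{y}^{(i)}$ opens an active path from $y^{(i)}$. The key observation that resolves this is that once we condition on $t^{(i)} = 0$, the variable $\tilde{y}^{(i)}$ becomes a deterministic constant ($0$) and is therefore independent of every other variable conditional on $t^{(i)} = 0$, so no additional information leaks from $\tilde{y}^{(i)}$. After flagging this point explicitly, the remainder is a short application of d-separation to the graph in Fig.~\ref{fig:main}.
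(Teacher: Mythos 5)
Your proposal is correct and follows essentially the same route as the paper's proof: a case split on $t^{(i)}$, using $\tilde{y}^{(i)}=y^{(i)}t^{(i)}$ to settle the tested case, and combining the uninformativeness of $\tilde{y}^{(i)}=0$ given $t^{(i)}=0$ with the graph-implied independence $Y \perp\!\!\!\perp (T,A) \mid X$ to settle the untested case. The only cosmetic difference is that the paper dispatches the conditioning on $\tilde{y}^{(i)}=0$ via an explicit Bayes'-rule computation (with $P(\tilde{y}=0\mid y=1,x,t=0)=P(\tilde{y}=0\mid x,t=0)=1$), whereas you argue directly that a probability-one event cannot alter the posterior; these are the same observation.
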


\begin{proof}
We drop superscripts $(\cdot)^{(i)}$ in the proof for clarity.
Denote $Q(y)$ as posterior distribution of $y$ given the observed data, $\mathbb{E}[y =1\mid t, \tilde{y}, a, \*x]$ (\emph{i.e.}, the E-step estimate). First, we can write
\begin{align}
    Q(y) &\triangleq \mathbb{E}[y=1 \mid t, \tilde{y}, a, \*x] =\mathbb{E}[y=1 \mid \tilde{y}, x, t] = P(y = 1 \mid \tilde{y}, x, t) 
\end{align}
for simplicity, where we use the fact that $Y \perp\!\!\!\perp A \mid (T, X)$
and the fact that $\mathbb{E}[y=1 \mid \tilde{y}, x, t]$ is binary. Proceeding, we can write:
\begin{align}
    &= t \cdot P(y=1 \mid \tilde{y}, x, t=1) + (1-t) \cdot P(y =1 \mid \tilde{y}, x, t=0) \\
    &= t \tilde{y} + (1-t) [\tilde{y} P(y=1 \mid \tilde{y}=1, x, t=0) + (1 - \tilde{y} )P(y=1\mid \tilde{y} = 0, x, t=0)] \\ 
    &= t \tilde{y}+ (1-t) (1 - \tilde{y}) P(y=1\mid \tilde{y} = 0, x, t=0) \\ 
    &= t \tilde{y}+ (1-t) (1 - \tilde{y}) \cdot\frac{P( \tilde{y} = 0 \mid y=1, x, t=0)P(y=1 \mid x, t=0)}{P(\tilde{y} = 0 \mid x, t = 0)}\\
    &= t \tilde{y}+ (1-t) (1 - \tilde{y}) P(y=1 \mid x).
\end{align}
The second equality follows since $t=1\implies \tilde{y}=y$. The third equality holds since $P(y=1 \mid \tilde{y} = 1, x, t=0) = 0$ by construction, since $\tilde{y} = yt$ under disparate censorship. The final step follows from three facts: (1) $P(\tilde{y}=0 \mid y=1, x, t=0) = 1$ for all $x$, (2) $P(\tilde{y} = 0 \mid x, t=0) = 1$ for all $x$, and (3) $Y \perp\!\!\!\perp T \mid X$. This is more succinctly rewritten as E-step is:
\begin{equation}
    Q(y) = \begin{cases}
        \tilde{y} & t = 1\\
        \mathbb{E}[y=1 \mid x] & \text{otherwise (i.e., $\tilde{y} = 0 \land t = 0$)}
    \end{cases},
\end{equation}
which is what we wanted to show.
\end{proof}

\begin{remark}
Since $y^{(i)}$ is binary by assumption, this result fully determines the posterior distribution since $\mathbb{E}[y=1 \mid \tilde{y}, x, t] = 1 - \mathbb{E}[y=0 \mid \tilde{y}, x, t]$. 
\end{remark}

\subsection{Theorem 3.2}
\begin{theorem*}[M-step derivation]
Let $P(U, A, X, T, \tilde{Y}; \theta)$ be a model for the joint data distribution parameterized by some arbitrary $\theta \in \Theta$ in some parameter space $\Theta$, which factorizes according to the disparate censorship DAG =(Fig.~\ref{fig:main}).
Let $Q(y) \triangleq \mathbb{E}[y=1 \mid t, \tilde{y}, a, \*x]$  be the posterior expectation that $y = 1$ given the observed data. Then (replacing random variables with their realized counterparts), we have
\begin{align}
    \max_\theta \; \frac{1}{N} \sum_{i=1}^N \log P(u^{(i)}, a^{(i)}, \*x^{(i)}, t^{(i)}, \tilde{y}^{(i)}; \theta) \geq &\max_\theta \frac{1}{N} \sum_{i=1}^N  Q(y^{(i)}) \log P(y^{(i)} \mid \*x^{(i)}; \theta_{Y \mid X}) \nonumber \\
    &\quad + (1 - Q(y^{(i)})) \log (1 - P(y^{(i)} \mid \*x^{(i)}; \theta_{Y \mid X})) \nonumber \\
    &\quad + Q(y^{(i)}) \log P(\tilde{y}^{(i)} \mid y^{(i)}, t^{(i)}; \theta_{\tilde{Y} \mid Y, T}) 
\end{align}
where $\theta = [\theta_{Y \mid X} \; \theta_{\tilde{Y} \mid Y, T}]^\top$.
\end{theorem*}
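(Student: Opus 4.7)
My plan is to follow the standard evidence lower bound (ELBO) derivation from expectation-maximization, specialized to the disparate censorship DAG. First, I would factor the complete-data likelihood along the graph of Fig.~\ref{fig:main} as
\begin{equation*}
P(\tilde y \mid y, t;\, \theta_{\tilde Y \mid Y, T})\, P(y \mid x;\, \theta_{Y \mid X})\, P(t \mid x, a)\, P(x, a, u),
\end{equation*}
and express the observed-data likelihood as $\sum_y P(u, a, x, t, \tilde y, Y{=}y;\,\theta)$. The log of this marginal does not decompose cleanly, which is exactly the obstruction EM is designed to address.

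Next I would introduce the E-step posterior $Q(y)$ from Theorem~3.1 and apply Jensen's inequality in the standard way, writing
\begin{equation*}
\log \sum_y Q(y)\,\frac{P(u, a, x, t, \tilde y, Y{=}y;\,\theta)}{Q(y)} \;\geq\; \sum_y Q(y)\,\log\frac{P(u, a, x, t, \tilde y, Y{=}y;\,\theta)}{Q(y)}.
\end{equation*}
Summing over $i$ and averaging yields a global ELBO that lower bounds the observed log-likelihood at every $\theta$; taking $\max_\theta$ of both sides preserves the inequality, which is precisely the bound the theorem asserts.

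I would then simplify the right-hand side by dropping terms irrelevant to the $\arg\max$. Because $\theta = [\theta_{Y\mid X}\; \theta_{\tilde Y \mid Y, T}]^\top$ does not parameterize either $P(t \mid x, a)$ or $P(x, a, u)$, and because $-\log Q(y)$ is fixed from the previous iterate, all three are constants in $\theta$ and drop from the maximization. What remains is $\sum_{y \in \{0, 1\}} Q(y)\bigl[\log P(Y{=}y \mid x;\,\theta_{Y \mid X}) + \log P(\tilde y \mid Y{=}y, t;\,\theta_{\tilde Y \mid Y, T})\bigr]$. Expanding the first piece over $y \in \{0, 1\}$ and writing $Q(y^{(i)})$ as shorthand for $Q(y^{(i)}{=}1)$ immediately produces the first two summands in the theorem statement.

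The main technical wrinkle I expect is reducing the $P(\tilde y \mid y, t)$ contribution to the single term stated in the theorem, because under disparate censorship $\tilde y = y t$ is \emph{deterministic}: the $y=0$ branch of the sum gives $P(\tilde y {=} 0 \mid Y{=}0, t) = 1$ whenever the observed $\tilde y = 0$ (which is forced), contributing $\log 1 = 0$ to the ELBO, and the configuration $(Y{=}0, \tilde y{=}1)$ has zero probability under the data-generating process and never appears. This collapses the sum to the $y=1$ branch, matching the third summand. I would make this rigorous by restricting the sum to configurations consistent with the observed data, or, as the paper does downstream, by replacing the deterministic factor with the smoothed surrogate $h_\phi(\hat y, \hat t) = \hat y \hat t$ so that the ELBO is defined without degenerate $0 \cdot \log 0$ terms.
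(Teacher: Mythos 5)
Your proposal is correct and follows essentially the same route as the paper: Jensen's inequality to obtain the ELBO, dropping the $\log Q(y)$ and non-$Y$ factorization terms as constants in $\theta$, and collapsing the $y=0$ branch of the $P(\tilde y \mid y, t)$ contribution using the determinism of $\tilde y = yt$. Your handling of that last degenerate branch (observing that $P(\tilde y = 0 \mid Y=0, t) = 1$ contributes $\log 1 = 0$ and that $(Y=0, \tilde y = 1)$ never occurs) is, if anything, stated more cleanly than the paper's own argument.
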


\begin{proof}
We first construct the evidence-based lower bound (ELBO) of the LHS in the theorem statement. First, for a single example indexed by $i$, we can write:
\begin{align}
    \log P(u^{(i)}, a^{(i)}, \*x^{(i)}, t^{(i)}, \tilde{y}^{(i)}; \theta) &= \log  \sum_{y^{(i)} \in \{0, 1\}} Q(y^{(i)}) \frac{P(u^{(i)}, a^{(i)}, \*x^{(i)}, t^{(i)}, \tilde{y}^{(i)}, y^{(i)}; \theta)}{Q(y^{(i)})}\\
    &\geq  \sum_{y^{(i)} \in \{0, 1\}} Q(y^{(i)}) \log \frac{P(u^{(i)}, a^{(i)}, \*x^{(i)}, t^{(i)}, \tilde{y}^{(i)}, y^{(i)}; \theta)}{Q(y^{(i)})}
\end{align}
via Jensen's inequality. Then, we note that
\begin{align}
    &\underset{\theta}{\max}  \frac{1}{N} \sum_{i=1}^N \sum_{y^{(i)} \in \{0, 1\}} Q(y^{(i)}) \log \frac{P(u^{(i)}, a^{(i)}, \*x^{(i)}, t^{(i)}, \tilde{y}^{(i)}, y^{(i)}; \theta)}{Q(y^{(i)})} \nonumber \\
      =\;& \underset{\theta}{\max} \frac{1}{N} \sum_{i=1}^N  \sum_{y^{(i)} \in \{0, 1\}} Q(y^{(i)}) \log P(u^{(i)}, a^{(i)}, \*x^{(i)}, t^{(i)}, \tilde{y}^{(i)}, y^{(i)}; \theta),
\end{align}
dropping $Q(y^{(i)}) \log Q(y^{(i)})$, which is constant with respect to $\theta$, after expanding the $\log$ term. We can then use the DAG to factorize the joint distribution of all variables (including latent variable $Y$), which is given by
\begin{align}
    P(\tilde{Y}, Y, T, X, A, U) &= P(\tilde{Y} \mid Y, T) P(Y \mid X) P(T \mid X, A)P(X, A, U).
\end{align}
Note that we need only model the first two terms for estimation of $y^{(i)}$. The first two terms do not involve $y^{(i)}$, are not parameterized, and can be dropped from the maximization problem. Hence, we proceed to write 
\begin{equation}
    =\; \underset{\theta}{\max}  \; \frac{1}{N} \sum_{i=1}^N \sum_{y^{(i)} \in \{0, 1\}} Q(y^{(i)}) \log P(y^{(i)} \mid \*x^{(i)}; \theta_{\tilde{Y} \mid X}) P(\tilde{y}^{(i)} \mid t^{(i)}, y^{(i)}; \theta_{\tilde{Y} \mid Y, T})
\end{equation}
where $\theta = [\theta_{Y \mid X} \; \theta_{\tilde{Y} \mid Y, T}]^\top$. This can be rewritten as
\begin{equation}
    =\; \underset{\theta}{\max}  \; \frac{1}{N} \sum_{i=1}^N \sum_{y^{(i)} \in \{0, 1\}} Q(y^{(i)}) \log P(y^{(i)} \mid \*x^{(i)}; \theta_{\tilde{Y} \mid X}) + Q(y^{(i)}) P(\tilde{y}^{(i)} \mid t^{(i)}, y^{(i)}; \theta_{\tilde{Y} \mid Y, T}),
\end{equation}
at which point we note that it is sufficient to show that
\begin{equation}
    (1 - Q(y^{(i)})) \log(1 - P(\tilde{y}^{(i)} \mid y^{(i)}, t^{(i)}; \theta_{\tilde{Y} \mid Y, T})) 
\end{equation}
is constant in $\theta$.
We can rewrite the above as 
\begin{equation}
    (1 - Q(y^{(i)})) [\tilde{y}^{(i)} \log(P(\tilde{y}^{(i)} = 1\mid y^{(i)} = 0, t^{(i)}; \theta_{\tilde{Y} \mid Y, T})) + (1-\tilde{y}^{(i)} )\log(P(\tilde{y}^{(i)} = 0\mid y^{(i)} = 0, t^{(i)}; \theta_{\tilde{Y} \mid Y, T}))].
\end{equation}
First, note that the event $\{\tilde{y}^{(i)} = 1 \mid y^{(i)} = 0\}$ occurs with probability zero by definition (recall $\tilde{y}^{(i)} = y^{(i)} t^{(i)}$). Thus, $P(\tilde{y}^{(i)} = 1\mid y^{(i)} = 0, t^{(i)}; \theta_{\tilde{Y} \mid Y, T}))$ cannot change with respect to $\theta$; we drop it from the maximization problem. 
Similarly, $P(\tilde{y}^{(i)} = 0\mid y^{(i)} = 0, t^{(i)}) = 0$ by definition, so $(1 - Q(y^{(i)})) \log(1 - P(\tilde{y}^{(i)} \mid y^{(i)}, t^{(i)})) = 0$ which is constant as needed, from which the theorem follows.
\end{proof}

\begin{remark}
In the theorem statement, replacing $P(y^{(i)} \mid \*x^{(i)}; \theta_{Y \mid X})$ with $\hat{y}^{(i)}$ and $P(\tilde{y}^{(i)} \mid y^{(i)}, t^{(i)}; \theta_{\tilde{Y} \mid Y, T})$ with $h_\phi(\cdot)$, assuming $y^{(i)}$ and $\tilde{y}^{(i)}$ are binary, and writing the explicit form of negative binary cross-entropy (\emph{e.g.}, $y \log y + (1-y) \log \hat{y}$) recovers the form of the M-step objective seen in Eq.~\ref{eq:mstep_ll}.  Note that the optimization problem flips from a maximization to a minimization due to the relationship between maximizing log-likelihood of binary variable(s) and minimizing cross-entropy loss.
\end{remark}

\subsection{Theorem~\ref{prop:counterbalance}}

\begin{theorem*}[Strength of the causal regularizer in $\hat{t}^{(i)}$]
For an example indexed by $i$, $Q(y^{(i)}) \in [0, 1)$, and $J^{(i)}$ defined as in Eq.~\ref{eq:creg_def}, $R(J^{(i)})$ is monotonically increasing in $\hat{t}^{(i)}$ on $(0, 1]$.\label{thm:causalreg}
\end{theorem*}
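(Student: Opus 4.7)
The plan is to exploit strict convexity of the per-example M-step objective in $\hat{y}^{(i)}$, then apply the implicit function theorem to the first-order condition to track how the minimizer moves with $\hat{t}^{(i)}$. Writing $Q$ for $Q(y^{(i)})$, $\hat{y}$ for $\hat{y}^{(i)}$, and $\hat{t}$ for $\hat{t}^{(i)}$, the single-example M-step loss from Eq.~\ref{eq:mstep_t0} expands (via binary cross-entropy) to
\begin{equation*}
J^{(i)}(\hat{y}; \hat{t}) = -Q\log\hat{y} - (1-Q)\log(1-\hat{y}) - Q\log(1 - \hat{y}\hat{t}).
\end{equation*}
For $Q \in (0,1)$, all three terms are finite and convex on $\hat{y} \in (0,1)$, and the first two diverge as $\hat{y} \to 0^+$ or $\hat{y} \to 1^-$, pinning the minimum $\hat{y}^{(i)}_{\text{OPT}}$ to the interior. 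The boundary case $Q = 0$ collapses to $-\log(1-\hat{y})$, minimized at $\hat{y}=0$ regardless of $\hat{t}$, giving $R \equiv 0$, consistent with the non-strict reading of monotonicity.

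First, I would verify that the minimizer is uniquely characterized by the first-order condition
\begin{equation*}
F(\hat{y}, \hat{t}) \triangleq -\frac{Q}{\hat{y}} + \frac{1-Q}{1-\hat{y}} + \frac{Q\hat{t}}{1-\hat{y}\hat{t}} = 0,
\end{equation*}
and note as a sanity check that at $\hat{t} = 0$ this reduces to $\hat{y} = Q$, matching Proposition~\ref{prop:mstep_t1}'s spirit that $\hat{t} = 0$ yields no regularization. Next I would compute the two relevant partial derivatives,
\begin{equation*}
\frac{\partial F}{\partial \hat{y}} = \frac{Q}{\hat{y}^2} + \frac{1-Q}{(1-\hat{y})^2} + \frac{Q\hat{t}^2}{(1-\hat{y}\hat{t})^2} > 0, \qquad \frac{\partial F}{\partial \hat{t}} = \frac{Q}{(1-\hat{y}\hat{t})^2} \geq 0,
\end{equation*}
both strictly positive when $Q > 0$. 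By the implicit function theorem, $d\hat{y}^{(i)}_{\text{OPT}}/d\hat{t} = -(\partial F/\partial \hat{t})/(\partial F/\partial \hat{y}) < 0$. Combined with the initial value $\hat{y}^{(i)}_{\text{OPT}}(0) = Q$, this forces $\hat{y}^{(i)}_{\text{OPT}}(\hat{t}) \leq Q$ throughout $\hat{t} \in (0,1]$, so the absolute value in Definition~\ref{def:causalreg} unwraps to $R(J^{(i)}) = Q - \hat{y}^{(i)}_{\text{OPT}}(\hat{t})$, which is strictly increasing in $\hat{t}$ as claimed.

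The main obstacle I anticipate is not the calculus itself, but ensuring $1 - \hat{y}\hat{t} > 0$ on the relevant domain so the third log and its derivatives stay finite. For $\hat{t} < 1$ this is immediate since $\hat{y} < 1$; when $\hat{t} = 1$, the BCE term $-(1-Q)\log(1-\hat{y})$ already diverges as $\hat{y} \to 1^-$, so the minimum remains strictly in the interior. A secondary subtlety is justifying global applicability of the implicit function theorem on $(0,1]$: this follows from uniform strict convexity of $J^{(i)}$ in $\hat{y}$ together with continuity of $F$, which together guarantee $\hat{y}^{(i)}_{\text{OPT}}$ is a well-defined smooth function of $\hat{t}$ throughout.
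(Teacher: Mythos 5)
Your proof is correct, and it takes a genuinely different route from the paper's. The paper solves the first-order condition explicitly via the quadratic formula, then needs three supporting lemmas to select the valid branch, verify $\hat{y}^{(i)}_{\text{OPT}} \in [0,1]$, and determine the sign of a rather messy derivative, plus L'H\^{o}pital's rule to evaluate the limit $\hat{t}^{(i)} \to 0^+$. You instead differentiate the first-order condition implicitly: since $\partial F/\partial \hat{y} > 0$ (strict convexity) and $\partial F/\partial \hat{t} = Q/(1-\hat{y}\hat{t})^2 > 0$ for $Q>0$, the implicit function theorem gives $d\hat{y}^{(i)}_{\text{OPT}}/d\hat{t} < 0$ immediately, and the anchor $\hat{y}^{(i)}_{\text{OPT}} \to Q$ as $\hat{t}\to 0^+$ (which for you is just the FOC at $\hat{t}=0$, no L'H\^{o}pital needed) unwraps the absolute value. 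This is a cleaner comparative-statics argument that sidesteps all the branch bookkeeping; it also generalizes more readily since it only uses the signs of the relevant partials. What the paper's heavier computation buys is the explicit closed form for $\hat{y}^{(i)}_{\text{OPT}}(Q, \hat{t})$, which the authors reuse in the remark on Lipschitz sensitivity to extreme propensity scores and in the contour plot of the optimum; your argument establishes monotonicity without producing that formula. You also correctly flag the $Q=0$ edge case, where $R\equiv 0$ and monotonicity holds only non-strictly --- the paper's proof has the same (unacknowledged) degeneracy there, so this is not a gap relative to the paper. One small simplification available to you: rather than invoking the limit at $\hat{t}=0$, you could note directly that $F(Q,\hat{t}) = Q\hat{t}/(1-Q\hat{t}) > 0$ for $\hat{t}>0$, which together with monotonicity of $F$ in $\hat{y}$ already forces $\hat{y}^{(i)}_{\text{OPT}} < Q$.
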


\begin{proof}
As a proof outline, we first show the closed-form of $\hat{y}^{(i)}_{\text{OPT}}(Q(y^{(i)}), \hat{t}^{(i)})$ by solving the first-order optimality condition of Eq.~\ref{eq:creg_def}.  Then, we show that $\hat{y}^{(i)}_{\text{OPT}}(Q(y^{(i)}), \hat{t}^{(i)})$ is decreasing in $\hat{t}^{(i)}$, and attains a maximum of $Q(y^{(i)})$ as $\hat{t}^{(i)} \to 0^+$. We conclude by showing that the preceding implies that $R(J^{(i)})$ is monotonically increasing in $\hat{t}^{(i)}$ on $(0, 1]$, as desired.

The first-order optimality condition of Eq.~\ref{eq:creg_def} is
\begin{equation}
    -\left(\frac{Q(y^{(i)})}{\hat{y}^{(i)}} - \frac{1 - Q(y^{(i)})}{1 - \hat{y}^{(i)}}\right) + \frac{\hat{t}^{(i)}Q(y^{(i)})}{1 - \hat{y}^{(i)}\hat{t}^{(i)}} = 0.\label{eq:first_order}
\end{equation}

By assumption (convexity of $\mathcal{L}$), the minimizer is unique. Some algebra yields
\begin{align}
     &-Q(y^{(i)})(1 - \hat{y}^{(i)})(1 - \hat{y}^{(i)} \hat{t}^{(i)})  + (1 - Q(y^{(i)}))\hat{y}^{(i)}(1 - \hat{y}^{(i)} \hat{t}^{(i)}) + \hat{t}^{(i)}Q(y^{(i)})\hat{y}^{(i)}(1 - \hat{y}^{(i)}) &= 0\\
     \iff &(\hat{t}^{(i)} + Q(y^{(i)})\hat{t}^{(i)})\hat{y}^{(i)2} - (2Q(y^{(i)})\hat{t}^{(i)} + 1) \hat{y}^{(i)} + Q(y^{(i)}) &= 0,
\end{align}
from which we can apply the quadratic formula.
Define $B(Q(y^{(i)}), \hat{t}^{(i)}) \triangleq 2Q(y^{(i)})\hat{t}^{(i)} + 1$ and $D(Q(y^{(i)}), \hat{t}^{(i)}) \triangleq (-B(Q(y^{(i)}), \hat{t}^{(i)})) ^2 - 4Q(y^{(i)})(Q(y^{(i)})\hat{t}^{(i)} + \hat{t}^{(i)})$.\footnote{We use letter $B(\cdot)$ because it corresponds to coefficient $b$ in the conventional quadratic formula: $$x = \frac{-b \pm \sqrt{b^2 - 4ac}}{2a}$$ for a quadratic polynomial $ax^2 + bx + c = 0$. We choose letter $D(\cdot)$ since it corresponds to the discriminant.} The quadratic formula yields solutions
\begin{equation}
    \hat{y}^{(i)}_{\text{OPT}}(Q(y^{(i)}), \hat{t}^{(i)}) = \frac{B(Q(y^{(i)}), \hat{t}^{(i)}) \pm \sqrt{D(Q(y^{(i)}, \hat{t}^{(i)}) }}{2(\hat{t}^{(i)} + Q(y^{(i)})\hat{t}^{(i)}) }.\label{eq:quad_result}
\end{equation}
We use the fact that  $\hat{y}^{(i)}$ must be in [0, 1] and the constraints that $\hat{t}^{(i)} \in (0, 1]$ and $Q(y^{(i)}) \in [0, 1)$ to determine which branch of Eq.~\ref{eq:quad_result}
yields real solutions in $[0, 1]$. By Lemma~\ref{lem:d_pos}, $D(Q(y^{(i)}, \hat{t}^{(i)}) \geq 0$, so the solutions are real. 
Then, by Lemma~\ref{lem:oob},
\begin{equation}
    \frac{B(Q(y^{(i)}), \hat{t}^{(i)}) + \sqrt{D(Q(y^{(i)}, \hat{t}^{(i)}) }}{2(\hat{t}^{(i)} + Q(y^{(i)})\hat{t}^{(i)}) } \geq 1,
\end{equation}
eliminating that branch. By elimination, 
\begin{equation}
    \hat{y}^{(i)}_{\text{OPT}}(Q(y^{(i)}), \hat{t}^{(i)}) = \frac{B(Q(y^{(i)}), \hat{t}^{(i)}) - \sqrt{D(Q(y^{(i)}, \hat{t}^{(i)}) }}{2(\hat{t}^{(i)} + Q(y^{(i)})\hat{t}^{(i)}) }.
\end{equation}

Lemma~\ref{lem:in_bounds} verifies that the resulting $\hat{y}^{(i)}_{\text{OPT}}(Q(y^{(i)}), \hat{t}^{(i)})$ are in [0, 1], as needed.
To proceed, it suffices to show that $\hat{y}^{(i)}_{\text{OPT}}(Q(y^{(i)}), \hat{t}^{(i)})$ is decreasing in $\hat{t}^{(i)}$ and attains a maximum of $Q(y^{(i)})$ as $\hat{t}^{(i)} \to 0^+$. 

Applying Lemma~\ref{lem:sign_preserve}, to prove that $\hat{y}^{(i)}_{\text{OPT}}(Q(y^{(i)}), \hat{t}^{(i)})$ decreases in $\hat{t}^{(i)}$, it is sufficient to show
\begin{equation}
    1 - 2\hat{t}^{(i)}Q(y^{(i)})^2 - \sqrt{D(Q(y^{(i)}), \hat{t}^{(i)})}) < 0
\end{equation}
because $1 - 2\hat{t}^{(i)}Q(y^{(i)})^2 - \sqrt{D(Q(y^{(i)}, \hat{t}^{(i)})}$ has the same sign as the derivative of $\hat{y}^{(i)}_{\text{OPT}}(Q(y^{(i)}), \hat{t}^{(i)})$ for the values of $(\hat{t}^{(i)}, Q(y^{(i)}))$ of interest.

For values of $\hat{t}^{(i)} \in (0, 1]$ and $Q(y^{(i)}) \in [0, 1)$ such that $1 - 2\hat{t}^{(i)}Q(y^{(i)})^2 < 0$,  Lemma~\ref{lem:d_pos} yields the desired result. For the remaining values of $(\hat{t}^{(i)}, Q(y^{(i)}))$, we can write 
\begin{align}
    & 1 - 4Q(y^{(i)})^2\hat{t}^{(i)} + 4Q(y^{(i)})^4\hat{t}^{(i)2} < D(Q(y^{(i)}), \hat{t}^{(i)})\\
    \iff & 1 - 4Q(y^{(i)})^2\hat{t}^{(i)} + 4Q(y^{(i)})^4\hat{t}^{(i)2} < 1 - 4Q(y^{(i)})^2 \hat{t}^{(i)} + 4Q(y^{(i)})^2 \hat{t}^{(i)2}\\
    \iff & Q(y^{(i)})^2 < 1 
\end{align}
which holds for all feasible values of $Q(y^{(i)}) \in [0, 1)$.
Lastly, due to the monotonicity of $\hat{y}^{(i)}_{\text{OPT}}(Q(y^{(i)}), \hat{t}^{(i)})$ in $\hat{t}^{(i)}$, the following one-sided limit is the maximum:
\begin{equation}
    \underset{\hat{t}^{(i)} \to 0^+}{\lim} \; \hat{y}^{(i)}_{\text{OPT}}(Q(y^{(i)}), \hat{t}^{(i)}) = \underset{\hat{t}^{(i)}  \in (0, 1]}{\max} \hat{y}^{(i)}_{\text{OPT}}(Q(y^{(i)}), \hat{t}^{(i)}) .
\end{equation}
We want to show that the limit is $Q(y^{(i)})$. Note that, since $\mathcal{L}$ is finite and convex, it is continuous (Corollary 10.1.1,~\cite{rockafellar-1970a}); hence, this limit exists. Since substituting $\hat{t}^{(i)} = 0$ yields the indeterminate form $0/0$, we appeal to L'H\^{o}pital's rule:
\begin{equation}
    \underset{\hat{t}^{(i)} \to 0^+}{\lim} \; \hat{y}^{(i)}_{\text{OPT}}(Q(y^{(i)}), \hat{t}^{(i)}) = \underset{\hat{t}^{(i)} \to 0^+}{\lim} \;  \frac{2Q(y^{(i)}) - \frac{4\hat{t}^{(i)} Q(y^{(i)})^2 - 2Q(y^{(i)})^2}{\sqrt{D(Q(y^{(i)}), \hat{t}^{(i)})}}}{2(Q(y^{(i)}) + 1)} =   \frac{2Q(y^{(i)}) + 2Q(y^{(i)})^2}{2(Q(y^{(i)}) + 1)} = Q(y^{(i)}).
\end{equation}
Note that $\left.\sqrt{D(Q(y^{(i)}), \hat{t}^{(i)})}\;\right|_{\hat{t}^{(i)} = 0} = 1$.
Since $Q(y^{(i)}) - \hat{y}^{(i)}_{\text{OPT}}(Q(y^{(i)}), \hat{t}^{(i)}) > 0$,
\begin{equation}
    R(J^{(i)}) = |Q({y}^{(i)}) - \hat{y}^{(i)}_{\text{OPT}}(Q(y^{(i)}), \hat{t}^{(i)})| = Q({y}^{(i)}) - \hat{y}^{(i)}_{\text{OPT}}(Q(y^{(i)}), \hat{t}^{(i)}).
\end{equation}
Furthermore, since $\hat{y}^{(i)}_{\text{OPT}}(Q(y^{(i)}), \hat{t}^{(i)})$ is decreasing in $\hat{t}^{(i)}$, $R(J^{(i)})$ must increase in $\hat{t}^{(i)}$, from which the theorem follows.
\end{proof}

\begin{remark}
We comment on the potential for DCEM to improve robustness to low overlap. 
    To do so, we analyze the sensitivity of the M-step optimum to extreme $\hat{t}^{(i)}$. While analyzing the asymptotic variance of consistent estimators is a common approach, asymptotic guarantees for DCEM are unclear due to the inherently non-convex (with respect to the parameters) objective function. Thus, we analyze the Lipschitzness of the M-step optimum versus other causal effect estimators. First, note that 
    \begin{equation}
        \frac{d}{d\hat{t}} \hat{y}_{OPT}(Q(y), \hat{t}) = \frac{1 - 2Q(y)^2 \hat{t}^2 - \sqrt{C}}{2(Q(y) + 1)\hat{t}^2 \sqrt{C}}
    \end{equation}
    where $C = 4Q(y)^2 \hat{t}^2 - 4Q(y)^2 \hat{t} + 1$. For all $Q(y^{(i)}) < 1$ and all $\hat{t^{(i)}}$, this derivative is \textit{bounded} (\emph{e.g.}, see Figure~\ref{fig:optim_mstep}), and is $O(1)-$Lipschitz in $\hat{t}^{(i)}$. However, consider the expression for an inverse-propensity-weighted estimator, which sums terms of the form
    \begin{equation}
        \frac{y^{(i)}t^{(i)}}{\hat{t}^{(i)}} -\frac{y^{(i)}(1 - t^{(i)})}{1-\hat{t}^{(i)}}\label{eq:ipw}
    \end{equation}
    to obtain a final estimate. Eq.~\ref{eq:ipw} has $O(\hat{t}^{(i)2})$-Lipschitz terms with respect to $\hat{t}^{(i)}$. Thus, in a Lipschitz sense, DCEM may be less sensitive to extreme propensity scores than causal effect estimators such as IPW.
    
\end{remark}

\begin{corollary}
For an example indexed by $i$, $Q(y^{(i)}) = 1$, and $J^{(i)}$ defined as in Eq.~\ref{eq:creg_def}, $R(J^{(i)})$ is monotonically non-decreasing in $\hat{t}^{(i)}$ on $(0, 1]$.
\end{corollary}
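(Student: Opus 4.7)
The plan is to mirror the proof of Theorem~\ref{thm:causalreg}, substituting $Q(y^{(i)}) = 1$ into the first-order condition from Eq.~\ref{eq:first_order}. The $(1 - Q(y^{(i)}))/(1 - \hat{y}^{(i)})$ term vanishes, reducing the optimality condition to $-1/\hat{y}^{(i)} + \hat{t}^{(i)}/(1 - \hat{y}^{(i)}\hat{t}^{(i)}) = 0$, whose unique interior solution is $1/(2\hat{t}^{(i)})$. Since the single-example regularized loss $-\log \hat{y}^{(i)} - \log(1 - \hat{y}^{(i)}\hat{t}^{(i)})$ has positive second derivative on the interior of its domain, it is strictly convex and hence admits a unique minimizer on $[0, 1]$ for each $\hat{t}^{(i)} \in (0, 1]$.

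Next I would split into cases based on feasibility of the unconstrained optimum. For $\hat{t}^{(i)} \in [1/2, 1]$, the candidate $1/(2\hat{t}^{(i)}) \in [1/2, 1]$ is feasible, so $\hat{y}^{(i)}_{\text{OPT}} = 1/(2\hat{t}^{(i)})$ and
\[
R(J^{(i)}) \;=\; \bigl|1 - 1/(2\hat{t}^{(i)})\bigr| \;=\; 1 - 1/(2\hat{t}^{(i)}),
\]
which has strictly positive derivative $1/(2\hat{t}^{(i)2})$ and is thus strictly increasing. For $\hat{t}^{(i)} \in (0, 1/2)$, the candidate exceeds $1$; I would check the sign of the objective's derivative at $\hat{y}^{(i)} = 1$, namely $-1 + \hat{t}^{(i)}/(1 - \hat{t}^{(i)})$, which is strictly negative in this regime. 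By convexity the objective is then strictly decreasing on $[0, 1]$, so the constrained minimum sits at $\hat{y}^{(i)}_{\text{OPT}} = 1$, yielding $R(J^{(i)}) = 0$.

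Combining the pieces gives
\[
R(J^{(i)}) = \begin{cases} 0 & \hat{t}^{(i)} \in (0, 1/2] \\ 1 - 1/(2\hat{t}^{(i)}) & \hat{t}^{(i)} \in (1/2, 1] \end{cases}
\]
which is continuous at $\hat{t}^{(i)} = 1/2$ (both branches equal $0$), constantly $0$ on $(0, 1/2]$, and strictly increasing on $(1/2, 1]$, establishing monotonic non-decrease on $(0, 1]$. The main obstacle is precisely the infeasibility of the unconstrained optimum for small $\hat{t}^{(i)}$: the quadratic-formula route of Theorem~\ref{thm:causalreg} cannot be reused as is, since the Lipschitz analysis in that proof assumes an interior optimum, and one must instead argue directly from the sign of the one-sided derivative at the endpoint $\hat{y}^{(i)} = 1$. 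This boundary-induced flat region is exactly why the corollary weakens the conclusion from strictly increasing (as in Theorem~\ref{thm:causalreg}) to merely non-decreasing.
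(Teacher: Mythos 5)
Your proof is correct, but it takes a genuinely different route from the paper's. The paper's proof simply reruns the machinery of Theorem~\ref{thm:causalreg}: it keeps the quadratic-formula expression for $\hat{y}^{(i)}_{\text{OPT}}$ and observes that the sign quantity of Lemma~\ref{lem:sign_preserve} becomes $\leq 0$ rather than $<0$ at $Q(y^{(i)})=1$. You instead substitute $Q(y^{(i)})=1$ directly into Eq.~\ref{eq:first_order}, obtain the unconstrained stationary point $1/(2\hat{t}^{(i)})$, and handle its infeasibility for $\hat{t}^{(i)}<1/2$ by a boundary (KKT-style) argument, arriving at the explicit form $\hat{y}^{(i)}_{\text{OPT}}=\min\{1,\,1/(2\hat{t}^{(i)})\}$ and hence $R(J^{(i)})=\max\{0,\,1-1/(2\hat{t}^{(i)})\}$. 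Your version is more transparent and, in one respect, more careful: when $Q(y^{(i)})=1$ the paper's quadratic is obtained by multiplying the first-order condition by $(1-\hat{y}^{(i)})$, which introduces $\hat{y}^{(i)}=1$ as a spurious root; the minus branch of the quadratic formula then returns $1$ for $\hat{t}^{(i)}<1/2$, which happens to coincide with the true constrained minimizer on $[0,1]$, but the paper never justifies that coincidence, whereas your endpoint-derivative argument does. Your closed form also makes explicit \emph{why} the conclusion weakens to non-decreasing: $R(J^{(i)})$ is identically zero on $(0,1/2]$ and strictly increasing only on $(1/2,1]$. What the paper's route buys is uniformity (one argument covering all $Q(y^{(i)})\in[0,1]$ via Lemmas~\ref{lem:d_pos}--\ref{lem:sign_preserve}); what yours buys is an elementary, self-contained derivation with an explicit formula and a correct treatment of the boundary optimum. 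One trivial nit: the quantity you call the ``Lipschitz analysis'' of Theorem~\ref{thm:causalreg} is really its sign-of-derivative analysis (the Lipschitz discussion is a separate remark), but this does not affect your argument.
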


\begin{proof}
The proof is identical to that of Theorem 3.4, except we find that 
\begin{equation}
    \frac{\partial}{\partial\hat{t}^{(i)}}\; \hat{y}^{(i)}_{\text{OPT}}(Q(y^{(i)}), \hat{t}^{(i)}) \leq 0,
\end{equation}
instead of being strictly less than zero, from which the corollary follows.
\end{proof}

\begin{remark}
For intuition, we show a contour plot of $\hat{y}^{(i)}_{\text{OPT}}(Q(y^{(i)}), \hat{t}^{(i)})$ in Fig.~\ref{fig:optim_mstep}. We verify the result in CVXPY.
\end{remark}
\begin{figure}[t]
    \centering
    \includegraphics{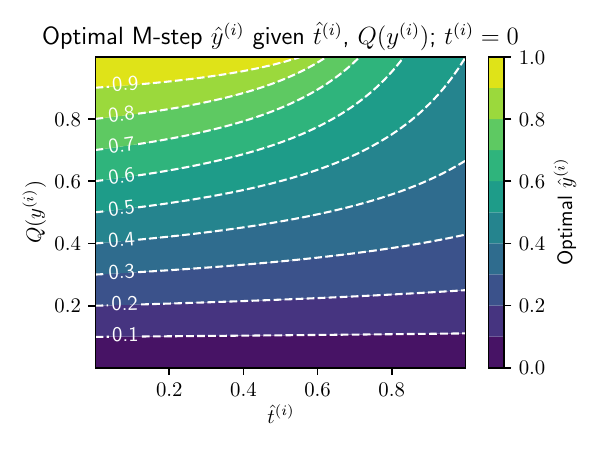}\vspace{-3mm}
    \caption{Contour plot of $\hat{y}^{(i)}_{\text{OPT}}(Q(y^{(i)}), \hat{t}^{(i)})$ with respect to $\hat{t}^{(i)}$ ($x$-axis) and $Q(y^{(i)})$ ($y$-axis). $\hat{y}^{(i)}_{\text{OPT}}(Q(y^{(i)}), \hat{t}^{(i)})$ scales with $Q(y^{(i)})$ but decreases in $\hat{t}^{(i)}$.}
    \label{fig:optim_mstep}
\end{figure}

\subsection{Proposition~\ref{prop:mstep_t1}}

\begin{proposition*}[Minimizer of M-step when $t^{(i)} = 1$]
Suppose that $t^{(i)} = 1$, let $Q(y^{(i)})\triangleq \mathbb{E}[y=1 \mid t, \tilde{y}, a, \*x]$, and let $\hat{y}^{(i)}$ be some estimate of $y^{(i)}$. Use $\mathcal{L}: [0, 1]^2 \to \mathbb{R}_+$ be shorthand for binary cross-entropy loss. Then, the minimization problem
\begin{equation}
    \underset{\hat{y}}{\min}\; \frac{1}{N} \sum_{i=1}^N \mathcal{L}(Q(y^{(i)}), \hat{y}^{(i)}) + Q(y^{(i)})\mathcal{L}(\tilde{y}^{(i)}, \hat{y}^{(i)} \hat{t}^{(i)})
\end{equation}
admits the solution $\hat{y}^{(i)} = y^{(i)}$ for all $i \in \{1, \dots, N\}$.
\end{proposition*}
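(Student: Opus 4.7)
The plan is to exploit the fact that when $t^{(i)} = 1$, two substitutions collapse the objective dramatically. First, by construction of disparate censorship, $t^{(i)} = 1 \implies \tilde{y}^{(i)} = y^{(i)}$. Second, Theorem 3.1 (E-step) gives $Q(y^{(i)}) = \tilde{y}^{(i)} = y^{(i)}$. So on the subset of indices where $t^{(i)} = 1$, both pseudo-label slots in the M-step collapse onto the observed ground truth $y^{(i)}$, and the objective becomes separable and point-wise optimizable over $\hat{y}^{(i)} \in [0, 1]$.

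Next, I would argue case-by-case on $y^{(i)} \in \{0, 1\}$, exploiting the fact that binary cross-entropy is strictly convex and attains its minimum of zero when its arguments agree. If $y^{(i)} = 0$: the regularizer coefficient $Q(y^{(i)}) = 0$ kills the second term entirely, leaving $\mathcal{L}(0, \hat{y}^{(i)}) = -\log(1 - \hat{y}^{(i)})$, which is strictly increasing in $\hat{y}^{(i)}$ and uniquely minimized at $\hat{y}^{(i)} = 0 = y^{(i)}$. If $y^{(i)} = 1$: the objective becomes $-\log \hat{y}^{(i)} - \log(\hat{y}^{(i)} \hat{t}^{(i)}) = -2\log \hat{y}^{(i)} - \log \hat{t}^{(i)}$, which (for fixed $\hat{t}^{(i)} > 0$, treated as a constant offset) is strictly decreasing in $\hat{y}^{(i)}$ over $[0, 1]$ and hence uniquely minimized at $\hat{y}^{(i)} = 1 = y^{(i)}$. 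One minor housekeeping point is what to say when $\hat{t}^{(i)} = 0$: then $-\log(\hat{y}^{(i)} \hat{t}^{(i)})$ is $+\infty$ and the claim is vacuous or requires reading the objective as an extended-real function; I would briefly note this edge case and assume $\hat{t}^{(i)} > 0$ (which is guaranteed whenever the pre-trained $g_{\zeta^*}$ assigns nonzero probability).

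Finally, since the original sum is separable across indices (each summand depends only on its own $\hat{y}^{(i)}$), minimizing index-wise minimizes the total. Thus the global minimizer has $\hat{y}^{(i)} = y^{(i)}$ for every $i$ with $t^{(i)} = 1$, proving the proposition. No main obstacle is expected here — the proposition is essentially a sanity check that the regularizer reduces to supervised cross-entropy on the labeled portion of the data, and everything follows from the substitution $Q(y^{(i)}) = \tilde{y}^{(i)} = y^{(i)}$ combined with strict monotonicity of BCE in its second argument on $[0, 1]$.
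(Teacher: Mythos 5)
Your proposal is correct and follows essentially the same route as the paper's proof: substitute $Q(y^{(i)}) = \tilde{y}^{(i)} = y^{(i)}$ when $t^{(i)} = 1$, then argue case-by-case on $y^{(i)} \in \{0,1\}$ that each summand is minimized at $\hat{y}^{(i)} = y^{(i)}$. Your explicit handling of the $\hat{t}^{(i)} = 0$ edge case and the separability remark are minor additions the paper omits, but the argument is the same.
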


\begin{proof}
We briefly verify the convexity of the objective, which follows from the convexity of binary cross-entropy loss and the closure of
convexity under addition and positive scalar multiplication ($Q(y^{(i)}) \geq 0$). Thus, any minimizer of the objective is unique.

We proceed by cases. First, suppose that $y^{(i)} = 1$. Substituting the definition of $Q(y^{(i)})$, and using the fact that $t^{(i)} = 1 \implies y^{(i)} = \tilde{y}^{(i)})$, the objective function for a single example becomes
\begin{equation}
    \mathcal{L}(1, \hat{y}^{(i)}) + \mathcal{L}(1, \hat{y}^{(i)} \hat{t}^{(i)}),
\end{equation}
which, by inspection, is maximized for $\hat{y}^{(i)} = 1$. Similarly, for $y^{(i)} = 0$, the objective function for a single example is 
\begin{equation}
    \mathcal{L}(0, \hat{y}^{(i)}) 
\end{equation}
which reduces to binary cross-entropy loss, and $\hat{y}^{(i)} = 0$ minimizes the objective. Combining the two cases, the minimizer of the M-step objective when $t^{(i)} = 1$ is $\hat{y}^{(i)} = y^{(i)}$ as desired.
\end{proof}

\subsection{Causal identifiability}
\label{app:ident}

For completeness, we provide the derivation of the causal identifiability results, though it follows directly from existing results~\cite{imbens2015causal}.

\begin{proposition*}
Suppose that conditional exchangeability, or $\tilde{Y}(t) \perp\!\!\!\perp T \mid X$, holds. Then $\mathbb{E}[Y \mid X] = \mathbb{E}[\tilde{Y} \mid X, do(T=1)]$, which is identifiable as $\mathbb{E}[\tilde{Y} \mid X, T=1]$.
\end{proposition*}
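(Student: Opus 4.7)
The plan is to prove the two claims in sequence: first, the equality $\mathbb{E}[Y \mid X] = \mathbb{E}[\tilde{Y} \mid X, do(T=1)]$, which is a structural consequence of the disparate censorship generating process; then identifiability of the interventional quantity as an observational one, via conditional exchangeability.

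For the first equality, I would work in the potential outcomes framework. By the disparate censorship assumption $\tilde{Y} = YT$ (Section~\ref{sec:background}), the potential outcome $\tilde{Y}(t)$ under the intervention $do(T=t)$ is $\tilde{Y}(t) = Y \cdot t$. In particular, $\tilde{Y}(1) = Y$. Combined with the equivalence $\mathbb{E}[\tilde{Y} \mid X, do(T=1)] = \mathbb{E}[\tilde{Y}(1) \mid X]$ (the standard definition of the interventional expectation in terms of potential outcomes), this immediately yields $\mathbb{E}[\tilde{Y} \mid X, do(T=1)] = \mathbb{E}[Y \mid X]$. Notably, this step does not require conditional exchangeability; it only uses the deterministic relation $\tilde{y}=yt$.

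For the identifiability step, I would chain three equalities:
\begin{align*}
    \mathbb{E}[\tilde{Y} \mid X, do(T=1)] &= \mathbb{E}[\tilde{Y}(1) \mid X] \\
    &= \mathbb{E}[\tilde{Y}(1) \mid X, T=1] \\
    &= \mathbb{E}[\tilde{Y} \mid X, T=1].
\end{align*}
The first line is again the definition of the interventional expectation. The second uses the assumed conditional exchangeability $\tilde{Y}(t) \perp\!\!\!\perp T \mid X$, which permits conditioning on $T=1$ without changing the expectation. The third is causal consistency: on the event $\{T=1\}$, the observed $\tilde{Y}$ equals the potential outcome $\tilde{Y}(1)$.

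The only subtlety is verifying that conditional exchangeability holds under the graph in Fig.~\ref{fig:main}. Inspecting the DAG, the parents of $T$ are $X$ and $A$, while the parents of $Y$ are $X$; there is no direct edge from $A$ to $Y$ or $\tilde{Y}$ (beyond through $T$). Hence conditioning on $X$ blocks all back-door paths between $T$ and $Y$ (and therefore between $T$ and $\tilde{Y}(t)$, which is a deterministic function of $Y$ and the intervened value $t$). So the exchangeability assumption is consistent with the stated causal model, and the rest of the argument is then purely a matter of composing the definitional identities above.
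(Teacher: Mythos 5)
Your proof is correct and uses essentially the same ingredients as the paper's: the deterministic relation $\tilde{y}=yt$ on the event $\{T=1\}$, the conditional independence of the outcome and $T$ given $X$, and consistency/exchangeability, chained into the same three-link identity (the paper simply traverses the chain in the opposite direction, going $\mathbb{E}[Y\mid X]=\mathbb{E}[Y\mid X,T=1]=\mathbb{E}[\tilde{Y}\mid X,T=1]=\mathbb{E}[\tilde{Y}\mid X,do(T=1)]$). Your explicit potential-outcomes bookkeeping, which isolates that $\mathbb{E}[Y\mid X]=\mathbb{E}[\tilde{Y}\mid X,do(T=1)]$ already follows from the structural equation $\tilde{Y}(1)=Y$ while exchangeability is needed only for the identifiability link, is a slightly cleaner accounting of which assumption does what, but it is not a different argument.
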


\begin{proof}
    We can write 
    \begin{align}
        \mathbb{E}[Y \mid X] = \mathbb{E}[Y \mid X, T=1] =\mathbb{E}[\tilde{Y} \mid X, T=1]  
        = \mathbb{E}[\tilde{Y} \mid X, do(T=1)]
    \end{align}
    where the first equality is due to $Y \perp\!\!\!\perp T \mid X$, the second equality results from $T = 1 \implies Y = \tilde{Y}$, and the final equality applies conditional exchangeability. Since $\mathbb{E}[Y \mid X] = \mathbb{E}[\tilde{Y} \mid X, do(T=1)] = \mathbb{E}[\tilde{Y} \mid X, T=1]$, the theorem follows.
\end{proof}

\subsection{Lemmas used}

Below are the lemmas and proofs referenced in the preceding theorem and proposition proofs.

\begin{lemma}
Define $B(Q(y^{(i)}), \hat{t}^{(i)}) \triangleq 2Q(y^{(i)})\hat{t}^{(i)} + 1$ and $D(Q(y^{(i)}), \hat{t}^{(i)}) \triangleq (-B(Q(y^{(i)}), \hat{t}^{(i)})) ^2 - 4Q(y^{(i)})(Q(y^{(i)})\hat{t}^{(i)} + \hat{t}^{(i)})$ on $Q(y^{(i)}) \in [0, 1]$ and $\hat{t}^{(i)} \in (0, 1]$. Then, $D(Q(y^{(i)}), \hat{t}^{(i)}) \geq 0$.\label{lem:d_pos}
\end{lemma}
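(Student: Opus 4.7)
The plan is to expand the definition of $D(Q(y^{(i)}), \hat{t}^{(i)})$ algebraically and then express the result as a sum of two non-negative quantities, yielding the desired inequality essentially by inspection. Write $q = Q(y^{(i)})$ and $s = \hat{t}^{(i)}$ for readability.

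First I would expand $B^2 = (2qs+1)^2 = 4q^2 s^2 + 4qs + 1$, and then subtract $4q(qs + s) = 4q^2 s + 4qs$. The cross terms $4qs$ cancel, leaving
\begin{equation*}
    D(q, s) = 4q^2 s^2 - 4q^2 s + 1.
\end{equation*}
This is the routine part; the only thing to be careful about is the sign bookkeeping.

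Next I would complete the square to expose the non-negativity. Observe that $(2qs - q)^2 = 4q^2 s^2 - 4q^2 s + q^2$, so
\begin{equation*}
    D(q, s) = (2qs - q)^2 + (1 - q^2).
\end{equation*}
Since $q \in [0,1]$, we have $1 - q^2 \geq 0$, and the squared term is obviously non-negative, so $D(q,s) \geq 0$ as required. Note the hypothesis $s \in (0,1]$ is not even needed for this bound, though it is consistent with the use case; the only place the range of $s$ would matter is if one wanted the strict inequality, which is not claimed here.

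I do not expect any real obstacle. The only mild subtlety is choosing the right way to rewrite $D$: a naive completion of the square in $s$ (treating $q$ as fixed) gives $D = 4q^2(s - \tfrac{1}{2})^2 + (1 - q^2)$, which works equally well and arguably makes the role of $q \in [0,1]$ even more transparent. Either decomposition is a one-line verification, and both make clear that equality $D = 0$ is attained precisely at $q = 1$, $s = \tfrac{1}{2}$.
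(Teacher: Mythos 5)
Your proof is correct. You and the paper perform the same initial reduction — expanding $D$ so that the cross terms cancel and the claim becomes $4Q(y^{(i)})^2\hat{t}^{(i)2} - 4Q(y^{(i)})^2\hat{t}^{(i)} + 1 \geq 0$ — but you finish differently. The paper argues that this expression is convex in $\hat{t}^{(i)}$ with minimum value $1 - Q(y^{(i)})^2$ at $\hat{t}^{(i)} = \tfrac{1}{2}$ (and separately that it is concave in $Q(y^{(i)})$, so it suffices to check $Q(y^{(i)}) \in \{0,1\}$), whereas you complete the square to write the expression as $\bigl(2Q(y^{(i)})\hat{t}^{(i)} - Q(y^{(i)})\bigr)^2 + \bigl(1 - Q(y^{(i)})^2\bigr)$, a sum of two manifestly non-negative terms. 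Your route is a one-line algebraic identity that avoids any optimization argument, makes explicit that the constraint $\hat{t}^{(i)} \in (0,1]$ is not needed for this bound, and pins down the equality case $Q(y^{(i)}) = 1$, $\hat{t}^{(i)} = \tfrac{1}{2}$ — which is exactly the boundary configuration the paper's argument reaches only implicitly. The paper's convexity argument buys nothing extra here; yours is the cleaner of the two.
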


\begin{proof}
Choose any $Q(y^{(i)}) \in [0, 1]$ and $\hat{t}^{(i)} \in (0, 1]$.
We can write:
\begin{align}
    &D(Q(y^{(i)}, \hat{t}^{(i)}) \geq 0 \\
    \iff &B(Q(y^{(i)}), \hat{t}^{(i)})^2 \geq 4Q(y^{(i)})(Q(y^{(i)})\hat{t}^{(i)} + \hat{t}^{(i)})\\
    \iff &4Q(y^{(i)})^2\hat{t}^{(i)2} + 4Q(y^{(i)})\hat{t}^{(i)}+ 1\geq 4Q(y^{(i)})(Q(y^{(i)})\hat{t}^{(i)} + \hat{t}^{(i)})\\
    \iff &4Q(y^{(i)})^2\hat{t}^{(i)2} - 4Q(y^{(i)})^2\hat{t}^{(i)}+ 1\geq 0.
\end{align}
The final LHS is convex (by inspection) in $\hat{t}^{(i)}$, such that 
\begin{align}
    \underset{\hat{t}^{(i)}}{\min} \; 4Q(y^{(i)})^2\hat{t}^{(i)2} - 4Q(y^{(i)})^2\hat{t}^{(i)}+ 1 = 1 - Q(y^{(i)}) \geq 0 
\end{align}
where the minimum is attained at $\hat{t}^{(i)} = \frac{1}{2}$, and concave (by inspection) in $Q(y^{(i)})$, such that it suffices to evaluate the final LHS at $Q(y^{(i)}) \in \{0, 1\}$:\footnote{Recall that, for a concave function $f$, $f(\alpha x + (1-\alpha) y) \geq \alpha f(x) + (1- \alpha) f(y)$ for $\alpha \in [0, 1]$ with equality for $\alpha= 0$ or 1.  Thus, via the extreme value theorem, the minimum of $f$ on $[x, y]$ is achieved at $x$ or $y$.}
\begin{align}
    \left.4Q(y^{(i)})^2\hat{t}^{(i)2} - 4Q(y^{(i)})^2\hat{t}^{(i)}+ 1 \right|_{Q(y^{(i)}) = 0} &= 1 \geq 0\\
    \left.4Q(y^{(i)})^2\hat{t}^{(i)2} - 4Q(y^{(i)})^2\hat{t}^{(i)}+ 1 \right|_{Q(y^{(i)}) = 1} &= 4(\hat{t}^{(i)2} - \hat{t}^{(i)}) + 1 \geq 4\cdot\left(-\frac{1}{4}\right) + 1 = 0,
\end{align}
such that for all other $Q(y^{(i)}) \in (0, 1)$, $4Q(y^{(i)})^2\hat{t}^{(i)2} - 4Q(y^{(i)})^2\hat{t}^{(i)}+ 1 \geq 0$
as needed.
    
\end{proof}

\begin{lemma}
Define $B(Q(y^{(i)}), \hat{t}^{(i)})$ and $D(Q(y^{(i)}, \hat{t}^{(i)})$ as in Lemma~\ref{lem:d_pos}. Then, for $Q(y^{(i)}) \in [0, 1]$ and $\hat{t}^{(i)} \in (0, 1]$, 
    \begin{equation}
    \frac{B(Q(y^{(i)}), \hat{t}^{(i)}) + \sqrt{D(Q(y^{(i)}, \hat{t}^{(i)}) }}{2(\hat{t}^{(i)} + Q(y^{(i)})\hat{t}^{(i)}) } \geq 1.
\end{equation}\label{lem:oob}
\end{lemma}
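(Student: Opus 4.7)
The plan is to reduce the inequality to an elementary algebraic statement and then handle the sign issue in the squaring step by splitting on the value of $\hat{t}^{(i)}$. Writing $q := Q(y^{(i)})$ and $t := \hat{t}^{(i)}$ for brevity, clearing the (positive) denominator turns the target inequality into
\begin{equation*}
    (2qt + 1) + \sqrt{D(q,t)} \;\geq\; 2t(1+q),
\end{equation*}
which rearranges to $\sqrt{D(q,t)} \geq 2t - 1$. This is the central reformulation, and from here the lemma should be essentially bookkeeping.

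Next I would split on the sign of $2t-1$. If $t \in (0, 1/2]$, then $2t-1 \leq 0$ and the inequality is immediate from $\sqrt{D(q,t)} \geq 0$, which is guaranteed by Lemma~\ref{lem:d_pos}. If $t \in (1/2, 1]$, both sides are nonnegative, so squaring is valid and the inequality is equivalent to $D(q,t) \geq (2t-1)^2$. Expanding (using the computation already done in the proof of Lemma~\ref{lem:d_pos}, namely $D(q,t) = 4q^2 t^2 - 4q^2 t + 1$) and subtracting $(2t-1)^2 = 4t^2 - 4t + 1$ yields the equivalent inequality
\begin{equation*}
    4t(t-1)(q^2 - 1) \;\geq\; 0.
\end{equation*}
On the domain of interest ($t \in (1/2, 1]$, $q \in [0,1]$), the factor $t$ is positive, while $t - 1 \leq 0$ and $q^2 - 1 \leq 0$, so the product is nonnegative, closing the case.

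I expect the main obstacle to be essentially cosmetic: making sure that the squaring step is justified (hence the case split on $t \leq 1/2$ versus $t > 1/2$), and that the allowed range of $q$ and $t$ in the lemma statement is the same one under which both factors $(t-1)$ and $(q^2 - 1)$ have a definite sign. The nonnegativity of $D(q,t)$ that is needed to make $\sqrt{D(q,t)}$ well-defined is already provided by Lemma~\ref{lem:d_pos}, so no additional discriminant analysis is required. No convexity or limiting argument is needed here; the entire lemma reduces to the single factorization above.
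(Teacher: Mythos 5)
Your proposal is correct and follows essentially the same route as the paper's proof: both reduce the claim to $\sqrt{D} \geq 2\hat{t}^{(i)} - 1$, dispatch the case $\hat{t}^{(i)} \leq 1/2$ via the nonnegativity of $D$ from Lemma~\ref{lem:d_pos}, and square in the remaining case to arrive at the condition $Q(y^{(i)})^2 \leq 1$ (your factored form $4t(t-1)(q^2-1)\geq 0$ is the same inequality before dividing through by $\hat{t}^{(i)}-1$). No substantive difference.
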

\begin{proof}
Choose any $Q(y^{(i)}) \in [0, 1]$ and $\hat{t}^{(i)} \in (0, 1]$.
First, we rewrite
\begin{align}
    & \frac{B(Q(y^{(i)}), \hat{t}^{(i)}) + \sqrt{D(Q(y^{(i)}, \hat{t}^{(i)}) }}{2(\hat{t}^{(i)} + Q(y^{(i)})\hat{t}^{(i)}) } \geq 1\\
    \iff & 2Q(y^{(i)})\hat{t}^{(i)} + 1 + \sqrt{D(Q(y^{(i)}, \hat{t}^{(i)}) } \geq 2(\hat{t}^{(i)} + Q(y^{(i)})\hat{t}^{(i)}) \\
    \iff &\sqrt{D(Q(y^{(i)}, \hat{t}^{(i)})} \geq 2\hat{t}^{(i)} - 1.
\end{align}
For $\hat{t}^{(i)} \in (0, \frac{1}{2})$, Lemma~\ref{lem:d_pos} yields the desired conclusion. For $\hat{t}^{(i)}  \in [\frac{1}{2}, 1]$, we can write
\begin{align}
    &\sqrt{D(Q(y^{(i)}, \hat{t}^{(i)})} \geq 2\hat{t}^{(i)} - 1 \\
    \iff & 4Q(y^{(i)2})\hat{t}^{(i)2} - 4Q(y^{(i)}2)\hat{t}^{(i)} + 1 \geq 4\hat{t}^{(i)2} - 4\hat{t}^{(i)} + 1\\
    \iff & Q(y^{(i)2})(\hat{t}^{(i)} - 1) \geq \hat{t}^{(i)} - 1\\
    \iff & Q(y^{(i)2}) \leq 1
\end{align}
which all $Q(y^{(i)2}) \in [0, 1]$ satisfy. This completes the proof.
\end{proof}

\begin{lemma}
Define $B(Q(y^{(i)}), \hat{t}^{(i)})$ and $D(Q(y^{(i)}, \hat{t}^{(i)})$ as in Lemma~\ref{lem:d_pos}. Then, for $Q(y^{(i)}) \in [0, 1]$ and $\hat{t}^{(i)} \in (0, 1]$, 
\begin{equation}
    0 \leq \frac{B(Q(y^{(i)}), \hat{t}^{(i)}) - \sqrt{D(Q(y^{(i)}, \hat{t}^{(i)}) }}{2(\hat{t}^{(i)} + Q(y^{(i)})\hat{t}^{(i)}) } \leq 1.
\end{equation}
    \label{lem:in_bounds}
\end{lemma}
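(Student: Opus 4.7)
The plan is to verify the two inequalities separately, using algebraic manipulations analogous to those in Lemma~\ref{lem:oob} (and invoking Lemma~\ref{lem:d_pos} where needed to ensure $D(Q(y^{(i)}), \hat{t}^{(i)}) \geq 0$). Throughout, I would note that the denominator $2(\hat{t}^{(i)} + Q(y^{(i)})\hat{t}^{(i)}) = 2\hat{t}^{(i)}(1 + Q(y^{(i)}))$ is strictly positive on the specified domain ($\hat{t}^{(i)} \in (0,1]$), so multiplying through by it preserves the direction of the inequalities.

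For the lower bound, it suffices to show $B(Q(y^{(i)}), \hat{t}^{(i)}) \geq \sqrt{D(Q(y^{(i)}), \hat{t}^{(i)})}$. Since $B(Q(y^{(i)}), \hat{t}^{(i)}) = 2Q(y^{(i)})\hat{t}^{(i)} + 1 > 0$, I can square both sides to get the equivalent condition $B^2 \geq D$, which simplifies immediately to $4Q(y^{(i)})(Q(y^{(i)})\hat{t}^{(i)} + \hat{t}^{(i)}) \geq 0$. This holds on the prescribed domain.

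For the upper bound, I would rearrange by multiplying through by the positive denominator and subtracting $B$, reducing the claim to $1 - 2\hat{t}^{(i)} \leq \sqrt{D(Q(y^{(i)}), \hat{t}^{(i)})}$ after the $2Q(y^{(i)})\hat{t}^{(i)}$ terms cancel. I would then split into cases on $\hat{t}^{(i)}$. When $\hat{t}^{(i)} \geq 1/2$, the left-hand side is non-positive, and Lemma~\ref{lem:d_pos} gives $\sqrt{D} \geq 0$, closing the case. When $\hat{t}^{(i)} \in (0, 1/2)$, both sides are positive, so I can square; after expanding $D$, the inequality collapses to $(Q(y^{(i)})^2 - 1)(\hat{t}^{(i)2} - \hat{t}^{(i)}) \geq 0$, which holds because both factors are non-positive on the domain.

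The main obstacle is nothing conceptual, just bookkeeping: the upper bound requires a careful case split on the sign of $1 - 2\hat{t}^{(i)}$ before squaring (to avoid introducing spurious solutions), and the algebra must be organized so the final polynomial factors cleanly. Since $Q(y^{(i)}) \in [0,1]$ and $\hat{t}^{(i)} \in (0,1]$ (matching the hypotheses of Lemma~\ref{lem:d_pos}), no additional boundary arguments are needed, and the proof concludes directly.
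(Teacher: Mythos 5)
Your proposal is correct and follows essentially the same route as the paper's proof: the lower bound via $\sqrt{D} \leq \sqrt{B^2} = B$ (your squaring of $B \geq \sqrt{D}$ is the same computation), and the upper bound via reduction to $1 - 2\hat{t}^{(i)} \leq \sqrt{D}$ with the identical case split at $\hat{t}^{(i)} = 1/2$, the paper's final step $Q(y^{(i)})^2 \leq 1$ being just a rearrangement of your factorization $(Q(y^{(i)})^2 - 1)(\hat{t}^{(i)2} - \hat{t}^{(i)}) \geq 0$.
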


\begin{proof}
Choose any $Q(y^{(i)}) \in [0, 1]$ and $\hat{t}^{(i)} \in (0, 1]$.
Equivalently, we can show
\begin{equation}
    0 \leq B(Q(y^{(i)}), \hat{t}^{(i)}) - \sqrt{D(Q(y^{(i)}, \hat{t}^{(i)}) } \leq 2(\hat{t}^{(i)} + Q(y^{(i)})\hat{t}^{(i)}).
\end{equation}
For the first inequality, note that
\begin{align}
     \sqrt{D(Q(y^{(i)}, \hat{t}^{(i)}) } &= \sqrt{(-B(Q(y^{(i)}), \hat{t}^{(i)}))^2 - 4Q(y^{(i)})(Q(y^{(i)})\hat{t}^{(i)} + \hat{t}^{(i)}) } \leq \sqrt{(-B(Q(y^{(i)}), \hat{t}^{(i)})) ^2}\\
     &= |B(Q(y^{(i)}), \hat{t}^{(i)}))| = B(Q(y^{(i)}), \hat{t}^{(i)}))
\end{align}
which rearranges to $0 \leq B(Q(y^{(i)}), \hat{t}^{(i)}) - \sqrt{D(Q(y^{(i)}, \hat{t}^{(i)}) }$ as desired. For the second inequality, note that
\begin{align}
    &B(Q(y^{(i)}), \hat{t}^{(i)}) - \sqrt{D(Q(y^{(i)}, \hat{t}^{(i)}) } \leq 2(\hat{t}^{(i)} + Q(y^{(i)})\hat{t}^{(i)})\\
    \iff &1 - \sqrt{D(Q(y^{(i)}, \hat{t}^{(i)}) } \leq 2\hat{t}^{(i)}  \\
    \iff & 1 - 2\hat{t}^{(i)} \leq \sqrt{D(Q(y^{(i)}, \hat{t}^{(i)})}.
\end{align}
For $\hat{t}^{(i)} \in [\frac{1}{2}, 1]$, Lemma~\ref{lem:d_pos} yields the desired conclusion. For $\hat{t}^{(i)} \in (0, \frac{1}{2})$, the proof proceeds similarly to Lemma~\ref{lem:oob}:
\begin{align}
     &\sqrt{D(Q(y^{(i)}, \hat{t}^{(i)})} \geq 1 - 2\hat{t}^{(i)}  \\
    \iff & 4Q(y^{(i)2})\hat{t}^{(i)2} - 4Q(y^{(i)})^2\hat{t}^{(i)} + 1 \geq 4\hat{t}^{(i)2} - 4\hat{t}^{(i)} + 1\\
    \iff & Q(y^{(i)2})(\hat{t}^{(i)} - 1) \geq \hat{t}^{(i)} - 1\\
    \iff & Q(y^{(i)2}) \leq 1
\end{align}
which all $Q(y^{(i)2}) \in [0, 1]$ satisfy. This completes the proof.
\end{proof}

\begin{lemma}
Define $B(Q(y^{(i)}), \hat{t}^{(i)})$ and $D(Q(y^{(i)}, \hat{t}^{(i)})$ as in Lemma~\ref{lem:d_pos}, and define $\hat{y}^{(i)}_{\text{OPT}}(Q(y^{(i)}), \hat{t}^{(i)})$ as in Definition~\ref{eq:creg_def}. Then, 
\begin{equation}
    \text{sign}\left(\frac{\partial}{\partial \hat{t}^{(i)})}\;\hat{y}^{(i)}_{\text{OPT}}(Q(y^{(i)}), \hat{t}^{(i)})\right) = \text{sign}\left(1 - 2\hat{t}^{(i)}Q(y^{(i)})^2 - \sqrt{D(Q(y^{(i)}), \hat{t}^{(i)})})\right)
\end{equation}
where $\text{sign}(x): \mathbb{R} \to \{-1, 0, 1\}$ is the sign function:
\begin{equation}
    \text{sign}(x) \triangleq\begin{cases}
    -1 & x < 0\\
    0 & x = 0\\
    1 & x > 0
    \end{cases}.
\end{equation}

\label{lem:sign_preserve}
\end{lemma}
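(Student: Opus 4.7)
The plan is to verify the sign equality by direct computation, using the closed form for $\hat{y}^{(i)}_{\text{OPT}}(Q(y^{(i)}), \hat{t}^{(i)})$ derived just before the lemma:
\begin{equation}
\hat{y}^{(i)}_{\text{OPT}}(Q(y^{(i)}), \hat{t}^{(i)}) = \frac{B(Q(y^{(i)}), \hat{t}^{(i)}) - \sqrt{D(Q(y^{(i)}), \hat{t}^{(i)})}}{2\hat{t}^{(i)}(1 + Q(y^{(i)}))}.
\end{equation}
First I would simplify the discriminant. Expanding $B^2 - 4Q(y^{(i)})(Q(y^{(i)})+1)\hat{t}^{(i)}$ and letting $q \triangleq Q(y^{(i)})$, $t \triangleq \hat{t}^{(i)}$ for brevity, the cross term $4qt$ cancels and one obtains $D = 4q^2 t^2 - 4q^2 t + 1$, hence $\partial D/\partial t = 4q^2(2t-1)$. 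This is the compact form I will use throughout the differentiation.

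Next I would apply the quotient rule, splitting $\hat{y}^{(i)}_{\text{OPT}} = N(t)/M(t)$ with $N(t) = 2qt + 1 - \sqrt{D}$ and $M(t) = 2t(1+q)$. The denominator $M(t)^2 = 4t^2(1+q)^2$ is strictly positive on the feasible domain, so it can be discarded when reading off the sign. Computing $N'(t) = 2q - 2q^2(2t-1)/\sqrt{D}$ and $M'(t) = 2(1+q)$, the numerator of $\partial \hat{y}^{(i)}_{\text{OPT}}/\partial t$ becomes
\begin{equation}
N'(t) M(t) - N(t) M'(t) = 2(1+q)\left[\,-\tfrac{2q^2 t(2t-1)}{\sqrt{D}} - 1 + \sqrt{D}\,\right].
\end{equation}
Since $2(1+q) > 0$ and $\sqrt{D} > 0$ (using Lemma~\ref{lem:d_pos} together with the fact that $D = 0$ would force $q = 1$ and $t = 1/2$, which is a measure-zero edge case one can handle separately by continuity), multiplying through by the positive quantity $\sqrt{D}$ preserves the sign.

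Finally I would collapse the bracketed expression. After multiplying by $\sqrt{D}$, the numerator becomes $-2q^2 t(2t-1) + D - \sqrt{D}$; substituting $D = 4q^2 t^2 - 4q^2 t + 1$ causes the $-4q^2t^2$ and $+2q^2 t$ terms from the first summand to cancel all but one copy of $2q^2 t$ against $D$, leaving $1 - 2q^2 t - \sqrt{D}$. This is exactly $1 - 2\hat{t}^{(i)} Q(y^{(i)})^2 - \sqrt{D(Q(y^{(i)}), \hat{t}^{(i)})}$, giving the claimed sign identity. The only real obstacle is bookkeeping: making sure the simplification $D = 4q^2 t^2 - 4q^2 t + 1$ is exploited \emph{both} in $D'$ \emph{and} in the final substitution, since otherwise the cancellations that produce the target expression are easy to miss.
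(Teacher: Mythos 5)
Your proposal is correct and follows essentially the same route as the paper: differentiate the closed form via the quotient rule, discard the strictly positive factors $2(1+Q(y^{(i)}))$ and the squared denominator, multiply through by $\sqrt{D}>0$, and use $D = 4Q(y^{(i)})^2\hat{t}^{(i)2} - 4Q(y^{(i)})^2\hat{t}^{(i)} + 1$ to collapse the result to $1 - 2\hat{t}^{(i)}Q(y^{(i)})^2 - \sqrt{D}$. Your explicit treatment of the degenerate case $D=0$ (which occurs only at $Q(y^{(i)})=1$, $\hat{t}^{(i)}=1/2$, outside the regime where the lemma is applied) is a small point of added care beyond the paper's argument.
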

\begin{proof}
The proof is largely algebraic simplification based on sign-preserving operations. 
Taking derivatives: 
\begin{equation}
    \frac{\partial}{\partial\hat{t}^{(i)}}\; \hat{y}^{(i)}_{\text{OPT}}(Q(y^{(i)}), \hat{t}^{(i)}) = \frac{Q(y^{(i)}) - \frac{2\hat{t}^{(i)}Q(y^{(i)})^2 - Q(y^{(i)})^2}{\sqrt{D(Q(y^{(i)}), \hat{t}^{(i)})}}}{\hat{t}^{(i)} + Q(y^{(i)})\hat{t}^{(i)}} - \frac{\left(B(Q(y^{(i)}), \hat{t}^{(i)}) - \sqrt{D(Q(y^{(i)}), \hat{t}^{(i)})})\right)(Q(y^{(i)}) + 1)}{2(\hat{t}^{(i)} + Q(y^{(i)})\hat{t}^{(i)}) ^2}
\end{equation}
via the quotient rule of derivatives and cancelling terms. We can apply sign-preserving operations, namely, positive scalar multiplication, canceling additive zeroes, and commuting additive terms, as follows:
\begin{align}
    \frac{\partial}{\partial\hat{t}^{(i)}}\; \hat{y}^{(i)}_{\text{OPT}}(Q(y^{(i)}), \hat{t}^{(i)}) &\propto (\hat{t}^{(i)} + Q(y^{(i)})\hat{t}^{(i)})\left(2Q(y^{(i)}) - \frac{4\hat{t}^{(i)} Q(y^{(i)})^2 - 2Q(y^{(i)})^2}{\sqrt{D(Q(y^{(i)}), \hat{t}^{(i)})}}\right) \nonumber \\
    &\quad - \left(B(Q(y^{(i)}), \hat{t}^{(i)}) - \sqrt{D(Q(y^{(i)}), \hat{t}^{(i)})})\right)(Q(y^{(i)}) + 1)\\
    &\propto \hat{t}^{(i)}\left(2Q(y^{(i)}) - \frac{4\hat{t}^{(i)} Q(y^{(i)})^2 - 2Q(y^{(i)})^2}{\sqrt{D(Q(y^{(i)}), \hat{t}^{(i)})}}\right) - B(Q(y^{(i)}), \hat{t}^{(i)}) + \sqrt{D(Q(y^{(i)}), \hat{t}^{(i)})}) \\
    &=  \left(\frac{2\hat{t}^{(i)}Q(y^{(i)})^2 - 4\hat{t}^{(i)2} Q(y^{(i)})^2 }{\sqrt{D(Q(y^{(i)}), \hat{t}^{(i)})}}\right) - 1 + \sqrt{D(Q(y^{(i)}), \hat{t}^{(i)})})\\
    &\propto 2\hat{t}^{(i)}Q(y^{(i)})^2 - 4\hat{t}^{(i)2}  -  \sqrt{D(Q(y^{(i)}), \hat{t}^{(i)})}) + D(Q(y^{(i)}), \hat{t}^{(i)})\\
    &= 1 - 2\hat{t}^{(i)}Q(y^{(i)})^2 - \sqrt{D(Q(y^{(i)}), \hat{t}^{(i)})})
\end{align}
which completes the proof.
\end{proof}

\subsection{Definition~\ref{def:causalreg}: causal regularization strength}

We expand on our definition of causal regularization strength here. Conventionally, regularization strength is operationalized in terms of a regularization parameter $\lambda \in \mathbb{R}_+$, given a loss  $\ell(\theta)$ and a regularizer $r(\theta)$ (\emph{e.g.}, $r(\theta) = \lVert \theta \rVert_2^2$ for some objective  $J(\theta)$ of the form
\begin{equation}
    J(\theta) \triangleq \ell(\theta) + \lambda r(\theta).\label{eq:rrm}
\end{equation}
Eq.~\ref{eq:rrm} is an instance of \emph{regularized risk minimization}~\cite{shalev2014understanding}. It is also identical to \emph{linear scalarization}, a technique for characterizing tradeoffs in multi-objective optimization.
The equivalence between regularized risk minimization and linear scalarization simply reflects that regularization can impose tradeoffs in optimizing $J(\theta)$ between minimizing $\ell(\theta)$ versus $r(\theta)$. Regularized risk minimization treats $r(\theta)$ as a ``penalty'' term, while linear scalarization treats $r(\theta)$ as merely another objective. As $\lambda$ increases, the tradeoff increasingly favors $r(\theta)$, and vice versa. 

Now, consider our M-step objective example-wise:
\begin{equation}
    \mathcal{L}(Q(y^{(i)}), \hat{y}^{(i)}) + Q(y^{(i)}) \mathcal{L}(\tilde{y}^{(i)}, \hat{y}^{(i)}\hat{t}^{(i)}).
\end{equation}
The M-step objective can similarly be interpreted as variation of a regularized risk minimziation problem, where $\ell(\theta) = \mathcal{L}(Q(y^{(i)}, \hat{y}^{(i)})$, and $Q(y^{(i)}) \mathcal{L}(\tilde{y}^{(i)}, \hat{y}^{(i)}, \hat{t}^{(i)}) = r(\theta), \lambda = 1$. However, $\hat{t}^{(i)}$ is a constant that can affect regularization strength, but is not a multiplier like $\lambda$.
The purpose of our result is to characterize the impact of $\hat{t}^{(i)}$ on the tradeoff between the two terms of the M-step objective.

Thus, motivated by the tradeoff/multi-objective perspective of regularization, we define regularization strength in terms of a tradeoff between optimizing $\mathcal{L}(Q(y^{(i)}, \hat{y}^{(i)})$ and $Q(y^{(i)}) \mathcal{L}(\tilde{y}^{(i)}, \hat{y}^{(i)} \hat{t}^{(i)})$. We observe that
\begin{align}
    Q(y^{(i)}) &= \underset{\hat{y}^{(i)}}{\arg\min}\;\mathcal{L}(Q(y^{(i)}), \hat{y}^{(i)}) 
\end{align}
and define causal regularization strength as the absolute distance between $Q(y^{(i)})$, the minimizer of $\mathcal{L}(Q(y^{(i)}), \hat{y}^{(i)})$, and the optimum of the example-wise M-step objective. 

\begin{definition}[Causal regularization strength]
Given an example indexed by $i$, and a finite loss function $\mathcal{L}: [0, 1]^2 \to \mathbb{R}$ convex in $\hat{y}^{(i)}$ on [0, 1] for all $i$, define 
\begin{equation}
     \hat{y}^{(i)}_{\text{OPT}}(Q(y^{(i)}), \hat{t}^{(i)}) \triangleq \underset{\hat{y}^{(i)}}{\arg\min}\; J^{(i)}(\hat{y}^{(i)}, \dots) \triangleq \underset{\hat{y}^{(i)}}{\arg\min}\;\mathcal{L}(Q(y^{(i)}), \hat{y}^{(i)}) + Q(y^{(i)}) \mathcal{L}(\tilde{y}^{(i)}, \hat{y}^{(i)}\hat{t}^{(i)}).\label{eq:creg_def}
\end{equation}
The causal regularization strength of objective $J^{(i)}$ is defined as $R(J^{(i)}) = |Q({y}^{(i)}) - \hat{y}^{(i)}_{\text{OPT}}(Q(y^{(i)}), \hat{t}^{(i)})|$.
\end{definition}

Intuitively, we define causal regularization strength in terms of the absolute distance between the optimum of each term of the M-step objective, which captures some notion of a tradeoff between the two terms. 
Note that this definition does not relate to convergence to $\hat{y}^{(i)}_{\text{OPT}}(Q(y^{(i)}), \hat{t}^{(i)})$; we are largely interested in how much the solution to $\min\; \mathcal{L}(Q(y^{(i)}), \hat{y}^{(i)})$ shifts after adding the causal regularization term.

\section{Additional experimental setup}
\label{app:setup}

For both settings, we set random seeds to 42 to facilitate reproducibility.

\subsection{Additional details for fully synthetic dataset}
\label{app:sim_details}
We choose $s_Y$ as follows:
\begin{align*}
    S_Y(x) &= (s_{Y1} \circ s_{Y2})(\*x) \\
    s_{Y1}(\*x) &= x_1 - \frac{1}{4}\sin(8 \pi x_0 + \psi) \quad  s_{Y2}(\*x) = \mathbf{R}_{\pi / 6}\*x + 0.5
\end{align*}
where $\psi$ is a simulation parameter, and $\mathbf{R}_{\pi / 6}$ is a 2D rotation matrix. Intuitively, $s_Y(\*x)$ rotates and translates $\*x$, then applies a sinewave-based function that yields a similarly rotated, sinewave-shaped decision boundary.

We choose $s_T$ as follows: 
\begin{equation*}
    s_T(\*x^{(i)}, a^{(i)}) = \mathbf{1}^\top \mathbf{x}^{(i)} - \tau_{a^{(i)}}
\end{equation*}
where $\tau_a$ is a simulation parameter.
For demonstration, we set $c_y$ such that $P(Y = 1) = 0.25$ to allow for sufficiently-sized performance gaps across groups to emerge.\footnote{Empirically, at extreme values of $P(Y = 1)$, we found artificially small performance gaps. This is because model errors tend to concentrate near the true decision boundary, which lies in the tails of the covariate distributions defining $X \mid A = a$. In those tail regions, the difference between the densities $X \mid A = a$ across values of $A$ is smaller in our two-Gaussian simulation design.} As a sensitivity analysis, we also replicate all experiments on fully synthetic data across $\psi \in [0, \pi / 6, \pi / 3, \dots, 11\pi / 6]$, representing the ``phase'' of the decision boundary.

\paragraph{Computing simulation parameters.} We discuss how we find simulation parameters for each value of $q_y, q_t$, and $k$. Given $q_y$ and $P(A = 0) = P(A = 1) = 0.5$, we have: 
\begin{align*}
   q_y &= \frac{P(Y = 1 \mid A = 0)}{P(Y = 1 \mid A = 1)} \\
   P(Y = 1) = \frac{1}{4} &= \frac{P(Y = 1 \mid A = 0) + P(Y = 1 \mid A = 1)}{2}
\end{align*}
which yields, by substitution,
\begin{align*}
    \frac{q_y}{2(q_y + 1)} &= P(Y = 1 \mid A = 0) \\
    \frac{1}{2(q_y + 1)} &= P(Y = 1 \mid A = 1),
\end{align*}
from which we use a binary search algorithm (bisection) evaluated using simulated versions of $X \mid A = a$ with the current estimate of the mean $\mu_a$ to solve for the requisite values of $\mu_a$. Given values of $\mu_a$, we can then solve for $\tau_a$ using $q_t$ and $k$ identically:
\begin{align*}
   q_t &= \frac{P(T = 1 \mid A = 0)}{P(T = 1 \mid A = 1)} \\
   P(T = 1) = k P(Y = 1) = \frac{k}{4} &= \frac{P(T = 1 \mid A = 0) + P(T = 1 \mid A = 1)}{2}
\end{align*}
which yields, again by substitution,
\begin{align*}
    \frac{q_tk}{2(q_t+ 1)} &= P(T = 1 \mid A = 0) \\
    \frac{k}{2(q_t + 1)} &= P(T = 1 \mid A = 1),
\end{align*}
and we can again use binary search to solve for $\tau_a$. 

\subsection{Additional details for pseudo-synthetic sepsis risk-stratification task}
\label{app:sepsis_details}

\begin{figure}[t]
    \centering
    \includegraphics[width=\linewidth]{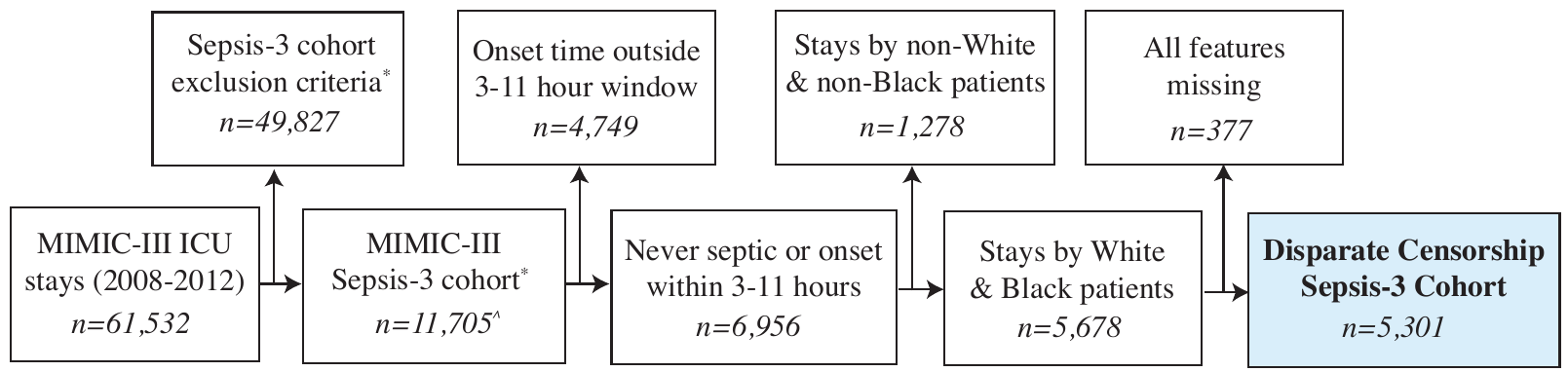}\vspace{-7mm}
    \caption{Cohort diagram for our Sepsis-3 cohort ($N=5301$). *: Further details are provided in~\cite{johnson2018comparative}. \textasciicircum: Our cohort size differs slightly from that reported in~\cite{johnson2018comparative} due to an apparent pre-processing error in defining Sepsis-3~\cite{singer2016third}; our reported cohort size applies the relevant correction.}
    \label{fig:enter-label}
\end{figure}

\paragraph{Cohort description.} Our cohort follows from the MIMIC-III Sepsis-3 cohort~\cite{johnson2018comparative}. Their cohort exclusion criteria are publicly available.\footnote{\url{https://github.com/alistairewj/sepsis3-mimic}} We corrected an apparent Sepsis-3 definition bug that erroneously labeled individuals with suspicion of infection if they received a blood culture at any time after an antibiotic before re-running their pipeline. In contrast, the Sepsis-3~\cite{singer2016third} definition requires the blood culture to occur \emph{within 24 hours} of the antibiotic time for suspicion of infection.\footnote{There are multiple ``paths'' for meeting the criteria for suspicion of infection; for a full enumeration, see Table 2 of~\cite{singer2016third}.}
In practice, this stricter condition affects $<1\%$ of rows in their original cohort: their cohort size is $N=11,791$, while ours is $N=11,705$.

\paragraph{Feature extraction.} Following the Risk of Sepsis model~\citep{delahanty2019development}, we extract the following 13 summary statistics over the initial 3-hour observation period: 
\begin{enumerate}
    \item Maximum lactic acid measurement, 
    \item first shock index times age (years),
    \item last shock index times age (years), 
    \item maximum white blood cell count, 
    \item change in lactic acid (last - first), 
    \item maximum neutrophil count, 
    \item maximum blood glucose, 
    \item maximum blood urea nitrogen, 
    \item maximum respiratory rate,
    \item last albumin measurement,
    \item minimum systolic blood pressure,
    \item maximum creatinine, and 
    \item maximum body temperature (Fahrenheit).
\end{enumerate}
The shock index is defined as the ratio of heart rate (beats per minute) and systolic blood pressure. Missing features are replaced with -9999 following the original manuscript.

\paragraph{Testing decision boundary.} We define $s_T$ as 
\begin{equation}
    \beta \cdot \frac{RR_{\max} - 22}{RR_\sigma}  + (1 - \beta) \cdot \frac{SBP_{\min} - 100}{SBP_\sigma}  - \tau_a,
\end{equation}
where $RR_{\max}$ and $SBP_{\min}$ are maximum respiratory rate and minimum systolic blood pressure, respectively, and $RR_\sigma, SBP_\sigma$ are their corresponding standard deviations on the training split ($RR_\sigma = 9.8, SBP_\sigma = 21.8$). The parameter $\beta$ allows us to examine different testing decisions. Thus, we replicate all experiments over $\beta \in \{0, 0.1, \dots, 1\}$.

\section{Hyperparameters \& additional model details}
\label{app:hyperparam}

All hyperparameters were selected using a validation set of examples. Hyperparameters for the sepsis simulation task were chosen such that all approaches attained similar performance when using $y$. We reimplement all existing methods following the original papers, using the code repository as a reference if applicable. We set random seeds to 42 for all models (used for initialization), unless otherwise noted.

\subsection{Default hyperparameters}

\paragraph{Fully synthetic.} 
All models use a two-layer neural network with layer sizes $(64, 64)$, trained for 1000 epochs via Adam~\cite{kingma2014adam} with learning rate $10^{-3}$ and no weight decay unless specified. 
EM approaches are trained up to 50 iterations with early stopping on validation loss (patience 3) and warm starts (initialized with solution from the previous iteration).

\paragraph{Sepsis classification.} 
All predictors are three-layer neural networks with sizes $(128, 128, 16)$ trained for 10000 epochs using Adam with learning rate $10^{-5}$ and weight decay $10^{-3}$. 
The DCEM propensity model ($g_\zeta$) is trained for 20000 epochs with learning rate $10^{-5}$ and early stopping with patience 1000, and the DCEM model ($f_\theta$) uses learning rate and weight decay $5 \times 10^{-7}$ and $10^{-6}$, respectively.
\subsection{Simulation study}

\paragraph{Peer loss \& group peer loss:} Both peer loss methods depend on a hyperparameter $\alpha$, for which the optimal value depends on $y$. To show the peer loss methods in the best light, we manually calculate the optimal value for usage in training.

\paragraph{ITE corrected model (DragonNet):} Our estimand of interest is the conditional average treatment effect (CATE) of the sensitive attribute $A$ on testing $T$, which is identifiable via
\begin{equation}
    \text{CATE}_{A \to T}(\*x) \triangleq \mathbb{E}[T(1) \mid X=\*x] - \mathbb{E}[T(0) \mid X=\*x] = \mathbb{E}[T \mid X=\*x, A=1 ] - \mathbb{E}[T \mid X=\*x, A=0 ] 
\end{equation}
under assumptions of consistency ($T(a) = T$) and conditional exchangeability ($T(a) \perp\!\!\!\perp A \mid X$). We then apply the CATE as a correction factor to the default model:
\begin{equation}
    \hat{y} \triangleq \hat{\tilde{y}}  - \text{CATE}_{A \to T}(\*x);  
\end{equation}
\emph{i.e.}, counterbalancing disparate censorship by ``subracting out'' the labeling bias. Note that this is an alternative to the counterbalancing approach of DCEM.
We train and conduct inference with targeted regularization. 

\paragraph{Truncated LQ:} We searched across $k \in \{0.1, 0.2, \dots, 1\}$ and $q \in \{0.1, 0.2, \dots, 1\}$ (using the notation of the original paper), using $k = 0.1, q = 0.1$ for the final results.

\paragraph{SELF:} 
We were unable to obtain convergence with Adam, so we used SGD with learning rate 0.01, momentum 0.9, noise parameter 0.05 (for input augmentation), consistency regularization parameter 1, and weight decay $1\times 10^{-6}$ as used for one of the experiments in the original paper. Weight decay was selected from $\{0, 10^{-6}, 10^{-5}, 10^{-4}, 10^{-3}, 10^{-2}\}$. The ensembling/mean teacher parameters were chosen from $\{0.9, 0.99, 0.999\}$. The noise was chosen from $\{0, 0.005, 0.01, 0.05, 0.1\}$. The regularization parameter was chosen from $\{1, 5, 10, 50\}$. SELF proceeds for a maximum of 50 iterations with patience 1 with respect to validation AUC. We set ensembling momentum to 0.9 and the mean teacher moving average parameter to 0.9. 
To show SELF in the best light, we prevented SELF from filtering tested positive individuals.

\paragraph{DivideMix:} We use 20 warmup epochs, with $\alpha=4$ as the Beta parameter for the MixMatch step, $T = 0.5$, $\lambda_u = 50$, $\lambda_r = 1$, and $\tau = 0.5$, and weight decay $5 \times 10^{-4}$.
We also experimented with preventing DivideMix from filtering tested positive individuals, but DivideMix was unstable in both settings. Ultimately, we did not prevent DivideMix from filtering tested positive individuals.

\paragraph{EM-based methods (SAREM, DCEM):} We tested SAREM and DCEM with and without the usage of warm starts in the M-step.

\subsection{Sepsis risk-stratification}

For all baselines, the setup matches the fully synthetic setting except as specified below.

\paragraph{DCEM:}
The learning rates under consideration were $[10^{-7}, 5 \times 10^{-7}, 10^{-6}, 5 \times 10^{-6}, 10^{-5}, 10^{-4}, 10^{-3}]$. The weight decay was selected from $[0, 10^{-6}, 2 \times 10^{-6}]$.

\paragraph{SELF:} For the sepsis classification experiments, weused SGD and set the learning rate to $10^{-8}$, the highest learning rate tested that did not result in NaN loss. We tested learning rates of the form $\{10^{-d}, 5 \times 10^{-d}\}$ for $d \in \{2, 3, 4, 5, 6, 7, 8\}$.

\section{Additional empirical results and discussion}

\subsection{Full results}
\label{app:full_results}

Here, we report empirical results for all baselines and settings.
Due to the large number of empirical settings tested (simulation: 224, sepsis classification: 45), we include a representative subset of the figures, and report the raw numbers used for these results \emph{and} results not shown in the Appendix via CSV files in the code appendix.

For the simulated task, we show results for $k \in [1/3, 3]$, $q_t \in [0.5, 2]$, and $q_y = 0.5$. Empirically, changing $q_y$ did not affect the general trends, but amplified/dampened the scale. Increasing $q_t$ beyond the selected range has similar impacts. For lower values of $k$, all methods perform poorly. 
For the sepsis classification task, we show results for $k \in [1/3, 5]$ and $q_t = 1.5$. 

\paragraph{Summary of results.} We summarize when our method (DCEM) empirically performed the best, when it performed similarly to baselines, and when it underperformed baselines. 

\paragraph{DCEM is best where...}
\begin{itemize}
    \item (Both metrics) The higher-prevalance group is undertested ($q_y < 1$ but $q_t > 1$) \emph{and}
    \item (Both metrics) testing rates are sufficiently high ($k \geq 0.5$).
\end{itemize}
\paragraph{DCEM is similar to baselines when...}
\begin{itemize}
    \item (Both metrics) Testing rates are moderately low ($1/3 \leq k \leq 1/2$), or sufficiently high that it is easier to extrapolate from labeled data ($k \geq 3$).
\end{itemize}

\paragraph{DCEM underperforms baselines when...}
\begin{itemize}
    \item (ROC gap only) when testing rates are low ($k \leq 1/2$) \textit{and} 
    \item (ROC gap only) the testing disparity aligns with the prevalence disparity (\emph{e.g.}, $q_t$ and $q_y < 1$ such that learning to predict $\tilde{y}$ preserves ranking in $y$), \emph{or} 
    \item (both metrics) testing rates are extremely low ($k < 1/3$).
\end{itemize}

The strongest alternatives to DCEM in our experiments were SELF (both datasets, bias mitigation), DragonNet (sepsis only, both metrics), and the tested-only model (simulation only, discriminative performance).

\paragraph{Index of figures.} We provide here a list of all result figures in the Appendix, indexed by problem parameters $k$ (overall testing rate multiplier), $q_t$ (testing disparity), and $q_y$ (prevalence disparity; simulation only).

\paragraph{Fully-synthetic data}
\begin{itemize}
    \item Figure~\ref{fig:qy0.5_k0.3_qt0.5}: $q_y = 1/2, k=1/3, q_t=1/2$
    \item Figure~\ref{fig:qy0.5_k0.3_qt1}: $q_y = 1/2, k=1/3, q_t=1$
    \item Figure~\ref{fig:qy0.5_k0.3_qt2}: $q_y = 1/2, k=1/3, q_t=2$
    \item Figure~\ref{fig:qy0.5_k0.5_qt0.5}: $q_y = 1/2, k=1/2, q_t=1/2$
    \item Figure~\ref{fig:qy0.5_k0.5_qt1}: $q_y = 1/2, k=1/2, q_t=1$
    \item Figure~\ref{fig:qy0.5_k0.5_qt2}: $q_y = 1/2, k=1/2, q_t=2$
    \item Figure~\ref{fig:qy0.5_k1_qt0.5}: $q_y = 1/2, k=1, q_t=1/2$
    \item Figure~\ref{fig:qy0.5_k1_qt1}: $q_y = 1/2, k=1, q_t=1$
    \item Figure~\ref{fig:qy0.5_k1_qt2}: $q_y = 1/2, k=1, q_t=2$
    \item Figure~\ref{fig:qy0.5_k2_qt0.5}: $q_y = 1/2, k=2, q_t=1/2$
    \item Figure~\ref{fig:qy0.5_k2_qt1}: $q_y = 1/2, k=2, q_t=1$
    \item Figure~\ref{fig:qy0.5_k2_qt2}: $q_y = 1/2, k=2, q_t=2$
    \item Figure~\ref{fig:qy0.5_k3_qt0.5}: $q_y = 1/2, k=3, q_t=1/2$
    \item Figure~\ref{fig:qy0.5_k3_qt1}: $q_y = 1/2, k=3, q_t=1$
    \item Figure~\ref{fig:qy0.5_k3_qt2}: $q_y = 1/2, k=3, q_t=2$
\end{itemize}

\paragraph{Sepsis classification}
\begin{itemize}
    \item Figure~\ref{fig:sepsis_k0.25}: $k=1/4, q_t=3/2$
    \item Figure~\ref{fig:sepsis_k0.3}: $k=1/3, q_t=3/2$
    \item Figure~\ref{fig:sepsis_k0.5}: $k=1/2, q_t=3/2$
    \item Figure~\ref{fig:sepsis_k1}: $k=1, q_t=3/2$
    \item Figure~\ref{fig:sepsis_k2}: $k=2, q_t=3/2$
    \item Figure~\ref{fig:sepsis_k3}: $k=3, q_t=3/2$
    \item Figure~\ref{fig:sepsis_k4}: $k=4, q_t=3/2$
    \item Figure~\ref{fig:sepsis_k5}: $k=5, q_t=3/2$
\end{itemize}

\subsection{DCEM ablation study}
\label{app:ablation}

To understand how DCEM design choices impact performance, we conduct an ablation study of repeated iterations and causal regularization:
\begin{itemize}
    \item \textbf{Imputation-only}: This approach trains a model on the tested-only (labeled) examples, imputes pseudo-labels for the remaining, then trains a model on both the pseudo-labeled and labeled data. This is equivalent to a single EM iteration without causal regularization.
    \item \textbf{No causal regularization}: This approach runs multiple EM iterations, but without causal regularization.
\end{itemize}

The results (Table~\ref{tab:ablation}) suggest that both repeated iterations and causal regularization are essential to the bias mitigation and discriminative capabilities of DCEM. 
The imputation-only approach fails due to low overlap between the tested vs. untested regions. Consequently, the imputed outcomes could be arbitrarily inaccurate.
If we keep imputing and retraining (without causal regularization), we recover a form of pseudo-labeling~\citep{lee2013pseudo}. The empirical improvement in performance suggests that repeated supervision from reliably labeled examples helps improve discriminative performance. However, this approach does not adjust for labeling bias (e.g., by using  $A$), and indeed the ROC gap does not improve.
Incorporating causal regularization recovers the DCEM M-step. Adding causal regularization guarantees that DCEM locally maximizes log-likelihood, and allows it to mitigate labeling bias by incorporating $A$ into a propensity score-like term (causal regularization; see Theorem~\ref{thm:causalreg}).

\begin{table}[t]
    \centering
    \caption{Sensitivity analysis of DCEM components with respect to AUC and ROC gap (min, max across $s_T$ in parentheses) for $q_y = 0.5, k=1, q_t=2$. Maximum (minimum) median AUC (ROC gap) in bold.}
    \begin{tabular}{c|c|c}
        Method & $\uparrow$ AUC & $\downarrow$ ROC gap \\
        \midrule
         Imputation-only & .676 [.644, .715] & .063 [.036, .086]\\
         No causal regularization & .767 [.733, .813] & .056 [.016, .086]\\
         DCEM (ours) & .\textbf{791 [.763, .820]} & \textbf{.031 [.019, .072]}  \\
         \bottomrule
    \end{tabular}

    \label{tab:ablation}
\end{table}

\begin{table}
    \centering
    \caption{Sensitivity analysis of causal effect estimators for estimating $P(Y \mid X)$ compared to DCEM with respect to  AUC and ROC gap (min, max across $s_T$ in parentheses) for $q_y = 0.5, k=1, q_t=2$. Maximum (minimum) median AUC (ROC gap) in bold.}
    \begin{tabular}{c|c|c}
        Method &  $\uparrow$ AUC & $\downarrow$ ROCGap \\
        \midrule
        Tested-only & .808 [.623, .876] & .052 [.020, .093] \\
        Tested-only + group &  .764 [.675, .863] & .078 [.025, .278] \\
        IPW & \textbf{.829 [.598, .874]}& .048 [.020, .104]\\
        DR-Learner & .643 [.558, .769]& .117 [.080, .216]\\
        DCEM (ours) & .791 [.763, .820] & \textbf{.031 [.019, .072]}\\
        \bottomrule
    \end{tabular}
    
    \label{tab:causal_sens}
\end{table}

\begin{figure}[t]
    \centering
    \includegraphics[width=\linewidth]{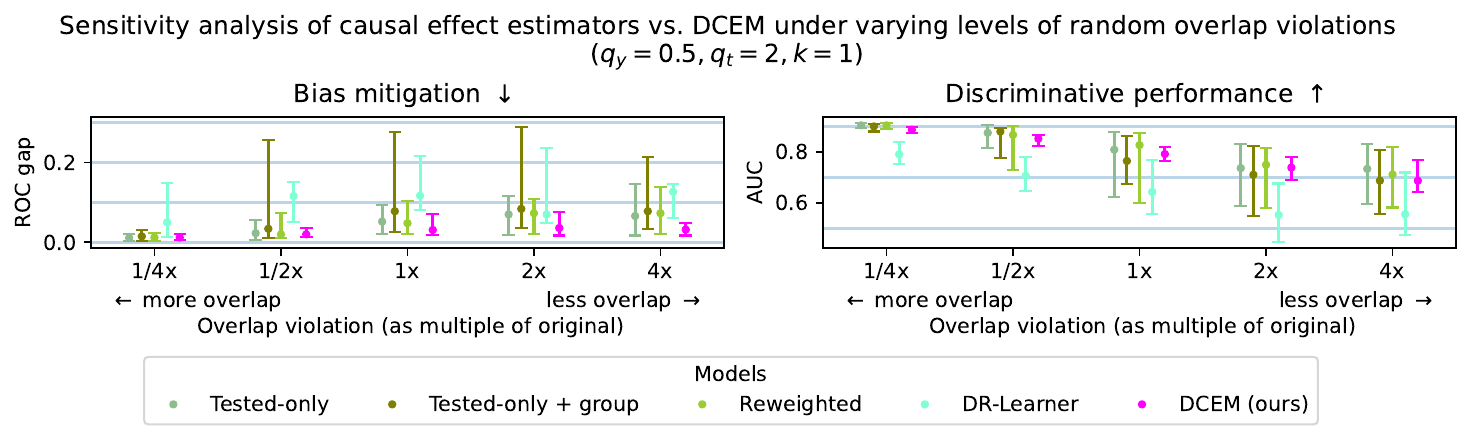}
    \caption{Sensitivity analysis of causal effect estimators with respect to AUC and ROC gap (min, max across $s_T$ in parentheses) for $q_y = 0.5, k=1, q_t=2$ across varying levels of overlap. Causally-motivated methods are shown in green, while DCEM is shown in magenta. Empirically, DCEM improves robustness to overlap violations.}
    \label{fig:overlap_sens}
\end{figure}

\subsection{Sensitivity analysis of causally-motivated approaches}
\label{app:overlap}

Here, we conduct a sensitivity analysis of causally-motivated approaches under disparate censorship. The causally-motivated approaches are theoretically consistent estimators of $P(Y\mid X)$, which we can interpret at the conditional average treatment effect of testing ($T$) on the observed outcome ($\tilde{Y}$; see Appendix~\ref{app:ident}). We examine the following causally-motivated estimators:
\begin{itemize}
    \item \textbf{Tested-only}: training models on tested individuals only, using $X$ as covariates,
    \item \textbf{Tested-only + group}: training models on tested individuals only, using $X$ and $A$ as covariates,
    \item \textbf{Inverse propensity weighting (IPW)}: an IPW-based~\citep{rosenbaum1983central} version of the tested-only approach, and
    \item \textbf{Doubly-robust estimator (DR-Learner)}: a doubly-robust estimator of $P(Y\mid X)$~\citep{kennedy2023towards}.
\end{itemize}

Models are evaluated for $q_y = 0.5, k=1, q_t=2$ (\emph{i.e.}, same setting as Fig.~\ref{fig:main-result}). 
Under disparate censorship, low overlap is common due to the ``sharpness'' of the testing boundary. To validate this hypothesis, we also evaluate causal effect estimators versus DCEM at varying levels of overlap ($1/4$x, $1/2$x, $1$x, $2$x, and $4$x of the original setting). Overlap is controlled by the coefficient inside the sigmoid for generating $t^{(i)}$ (\emph{i.e.}, 30 in the original experiments).\footnote{Recall that $t^{(i)}$ is generated as a Bernoulli random variable with parameters of the form $\sigma(ax + b)$.} 
For the DR-learner, we trimmed propensity scores (threshold: 0.05) to obtain estimates that were in [0, 1] (the possible values of $P(Y \mid X)$).

\paragraph{DCEM has better bias mitigation capabilities than causal approaches, and a tighter range of discriminative performance.} Table~\ref{tab:causal_sens} shows that, empirically, DCEM exhibits lower variance under overlap violations than causally-motivated approaches. Notably, DCEM achieves the lowest median ROC gap, and maintains competitive (but not necessarily best) median AUC. Causally-motivated methods generally have good median discriminative performance, but poor bias mitigation properties. Furthermore, the wide performance ranges of causally-motivated approaches may be unacceptable for safety-critical/high-stakes domains. We note that the DR-learner may underperform in this setting due if the propensity score trimming introduces sufficient bias: recall that, although double-robustness only requires one correctly-specified model, the asymptotic properties may still depend on the asymptotics of each model (\emph{e.g.}, as shown in~\citep{wager2020stats}).

\paragraph{DCEM is empirically more robust to overlap violations.} Figure~\ref{fig:overlap_sens} shows that, empirically, as overlap violations increase, DCEM degrades more slowly than causally-motivated approaches in terms of both bias mitigation and discriminative performance. Furthermore, DCEM maintains similarly tight performance ranges across levels of overlap, while the performance ranges of causal approaches widens as overlap violations increase. At low overlap, causally-motivated approaches have similarly tight performance ranges as  DCEM.

\subsection{Sensitivity analysis of softmax temperature scaling}
\label{app:sensitivity_t}

 We can further tune the smoothness of $\hat{t}^{(i)}$ via the softmax temperature $\tau$ of the binary classifier for $\hat{t}^{(i)}$:
\begin{equation}
    \hat{t}^{(i)} :=  \frac{\exp(z_t / \tau)}{\exp(z_t / \tau) + \exp(z_{1-t} / \tau)}
\end{equation}
where $z_t$ is the logit outputted by $g_\zeta$ for each $t \in \{0, 1\}$.
Lower values of $\tau$ sharpen $\hat{t}^{(i)}$ towards $\{0, 1\}$, while larger values smooth $\hat{t}^{(i)}$ toward $\frac{1}{2}$. Note that $\tau = 1$ recovers the standard softmax function. Thus, adjusting $\tau$ allows us to control the smoothness of the $\tilde{y}^{(i)} = t^{(i)} y^{(i)}$ constraint.

\begin{table}[]
    \centering
        \caption{Sensitivity analysis of softmax temperature scaling parameter ($T$) with respect to DCEM AUC and ROC gap (min, max across $s_T$ in parentheses) for $q_y=0.5, k=1, q_t=2$. Maximum (minimum) median AUC (ROC gap) in bold.}\vspace{3mm}
    \begin{tabular}{c|c|c}
        $\tau$ & $\uparrow$ AUC & $\downarrow$ ROC gap \\
        \midrule
        0.01 & .778 [.737, .815] & .051 [.020, .104] \\
        0.1 & .791 [.762, .818] & \textbf{.025 [.014, .057]} \\
        1 (default) & .791 [.763, .820] & .031 [.019, .072] \\
        10 & \textbf{.800 [.730, .858]} & .051 [.021, .096] \\
        100 & .762 [.667, .835] & .071 [.032, .097]\\
        \bottomrule
    \end{tabular}

    \label{tab:ablation_softmax}
\end{table}
\begin{figure}[t]
    \centering
    \includegraphics[width=\linewidth]{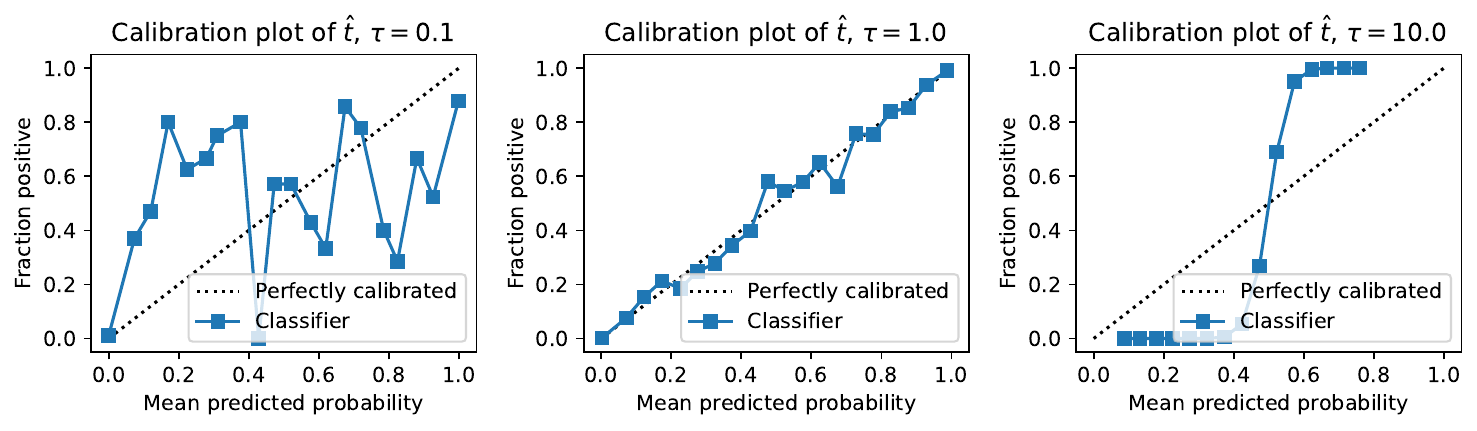}\vspace{-7mm}
    \caption{Calibration plot of $\hat{t}$ for (from left to right) $\tau \in \{0.1, 1, 10\}$. While $\hat{t}$ is well-calibrated for $\tau = 1$, changing $\tau$ in either direction ($<1 $ vs. $>1$) induces miscalibration error.}
    \label{fig:calibration}
\end{figure}

Table~\ref{tab:ablation_softmax} shows full results (median AUC and ROC gap, plus minima and maxima across $s_Y$) for DCEM across various values of temperature scaling parameter $\tau$. Empirically, our results suggest that temperature scaling does not significantly change the AUC, and may trade off with bias mitigation since $\hat{t}^{(i)}$ may no longer be calibrated.
Furthermore, even though median AUC improves in one case ($\tau=10$), the range of AUC is much larger (0.057 vs. 0.128), and $\tau=1$ still yields the maximum empirical worst-case AUC (0.763).

Values of $\tau$ away from 1 tend to yield larger ROC gaps. We find that $\hat{t}^{(i)}$ is well-calibrated for $\tau = 1$, but not so for values of $\tau$ (Figure~\ref{fig:calibration}). Since $\hat{t}^{(i)}$ is critical to counterbalancing disparate censorship, miscalibration error in $\hat{t}^{(i)}$ could result in larger ROC gaps by reducing the effectiveness/correctness of the causal regularization term. Thus, we opt to maintain $\tau = 1$.

\subsection{Sensitivity analysis of E-step initialization}
\label{app:sensitivity_init}

We compare random initialization to using a tested-only model as initialization (the final approach). Empirically, Table~\ref{tab:init} shows trivial changes to performance when using a model trained on labeled data to initialize the E-step. This suggests that DCEM is able to overcome poor initialization in the settings studied; \emph{i.e.}, the gains from tested-only initialization may be marginal, if nonzero.

\begin{table}[t]
    \centering
    \caption{Sensitivity analysis of E-step initialization with respect to DCEM AUC and ROC gap (min, max across $s_Y$ in parentheses) for $q_y = 0.5, k=1, q_t=2$. Maximum (minimum) median AUC (ROC gap) in bold.}\vspace{3mm}
     \begin{tabular}{c|c|c}
         Initialization scheme & AUC & ROC gap \\
         \midrule
         random & .787 [.768, .822] & \textbf{.031 [.011, .060]} \\
         tested-only & \textbf{.791 [.763, .820]} & \textbf{.031 [.019, .072]} \\
         \bottomrule
    \end{tabular}
    \label{tab:init}
\end{table}

\subsection{Tradeoffs between bias mitigation and discriminative performance: SELF}
\label{app:self_tradeoff}
\begin{figure}[t]
    \centering
    \includegraphics[width=\linewidth]{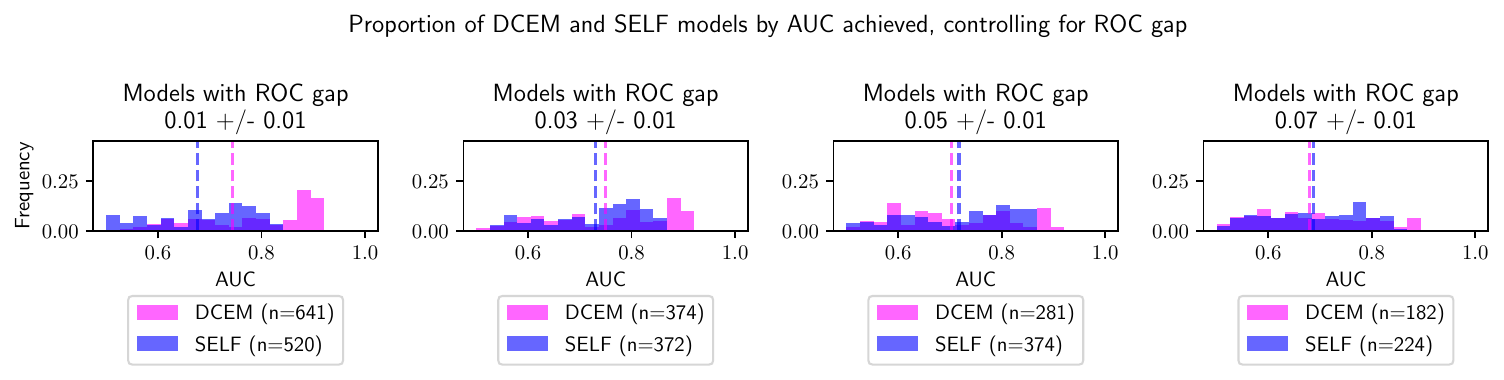}\vspace{-7mm}
    \caption{Relative frequencies of AUC for DCEM vs. SELF at similar ROC gaps, pooling models across all $k, q_y ,q_t$ tested. Dashed lines = mean AUC by model. DCEM improves AUC among models with similar ROC gaps when the ROC gap is below 0.04.}
    \label{fig:self-tradeoff}
\end{figure}

We compare instances of DCEM to SELF, controlling for ROC gap. We find that DCEM optimizes discriminative performance more effectively than SELF.
Fig.~\ref{fig:self-tradeoff} shows a histogram of AUC for SELF and DCEM models with similar ROC gaps across $q_t, q_y, k$ and $s_Y$, increasing in ROC gap to the right. For models with ROC gaps $< 0.04$ (Fig.~\ref{fig:self-tradeoff}, 1st and 2nd from left), DCEM improves AUC compared to instances of SELF with similar ROC gaps. At larger ROC gaps, DCEM and SELF obtain similar AUCs (Fig.~\ref{fig:self-tradeoff}, 1st and 2nd from right). Similarly to the comparison with tested-only  models, the results suggest that DCEM is not simply trading improved bias mitigation for performance, but is also able to optimize discriminative performance. Since SELF is a filtering approach that does not account for the causal structure of disparate censorship, its estimates of label bias are likely skewed. In contrast, DCEM explicitly uses the causal structure of disparate censorship to counterbalance label bias.

\subsection{Sepsis classification and robustness to shifts in labeling decisions}
\label{app:sepsis_robust}
\begin{figure*}[t]
    \centering
    \includegraphics[width=\linewidth]{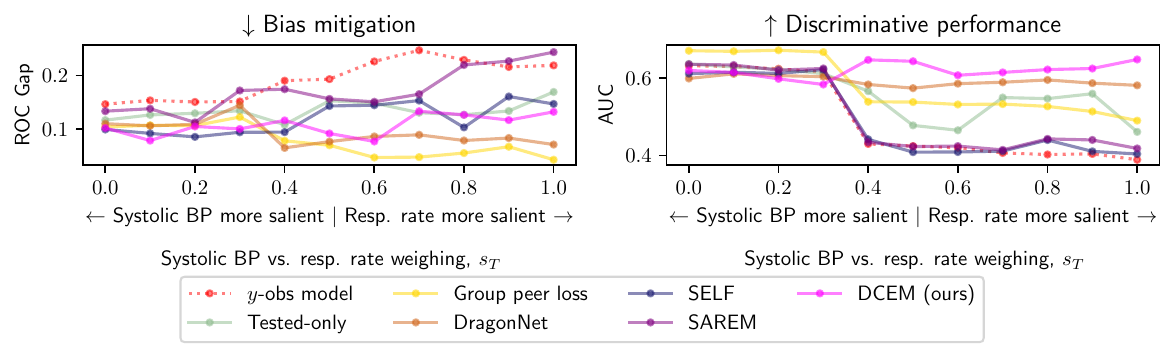}\vspace{-7mm}
    \caption{ROC gaps (left) and AUC (right) of selected models on sepsis classification task as weighting of systolic blood pressure (BP) and respiratory (resp.) rate ($s_T$) for testing changes (``0.0''/left: consider systolic BP only; ``1.0''/right: consider resp. rate only) at $q_t=1.5, k=4$. If a feature is ``more salient,'' it is weighted higher than the other in the testing decision function $s_T$.}
    \label{fig:boundaries}
\end{figure*}

Fig.~\ref{fig:boundaries} shows the performance of DCEM vs. models with bimodial behavior across different $s_T$, indexed by different feature weightings in $s_T$. 
Our results suggest that the baselines require specific $s_T$ to perform above random. The baselines catastrophically underperform (AUC below 0.5) otherwise. 
Trends are analogous for the ROC gap.

Specifically, baseline performance improves when one feature is more heavily weighted than the other in the labeling decision ($x$-axis near 0 or 1). However, when both features feature in labeling decisions ($x$-axis near 0.5), the baselines catastrophically fail, while DCEM performance stays high.
As seen in Fig.~\ref{fig:sepsis}, DCEM AUC and ROC gap also exhibit less variation across the different $s_T$.

Determining which $s_T$ is appropriate is a clinical problem that requires domain expertise, and we make no claims as to the clinical appropriateness of $s_T$. Thus, ML practitioners should not assume that their data will be representative of any particular decision-making pattern. 
DCEM is an alternative approach that is more robust than baselines to shifts in $s_T$, and thus warrants consideration when narrow assumptions about labeling biases are undesirable.

\section{Computing Infrastructure}

\paragraph{Hardware.} We parallelize experiments across 4 A6000 GPUs and 256 AMD CPU cores (4x AMD EPYC 7763 64-Core processors), though the memory requirements of each model are under 2GB of VRAM. 

\paragraph{Software.} All experiments are run on a distribution of Ubuntu 20.04.5 LTS (Focal Fossa) with Python 3.9.16 managed by conda 23.3.1. We use Pytorch 1.13.1 with CUDA 11.6 for all experiments~\cite{paszke2019pytorch}, with scikit-learn 1.2.2~\cite{scikit-learn}, scipy 1.10.1~\cite{2020SciPy-NMeth}, numpy 1.25.0~\cite{harris2020array} and pandas 1.5.3~\cite{reback2020pandas} for data processing/analysis. Matplotlib 3.7.1 was used to generate figures. Additionally, torch\_ema 0.3 was used in our implementation of SELF. For the simulation study, we use a modified version of the official disparate censorship repository at \url{https://github.com/MLD3/disparate_censorship}~\cite{chang2022disparate}, which is included with our code repository.

\section{Code}
Code will be released at the MLD3 Github repository at \url{https://github.com/MLD3/DCEM}. We redact the data-processing code for the sepsis task only where necessary to ensure compliance with the terms of use for MIMIC-III~\cite{johnson2016mimic}. 

\begin{figure}[t]
    \centering
    \includegraphics[width=\linewidth]{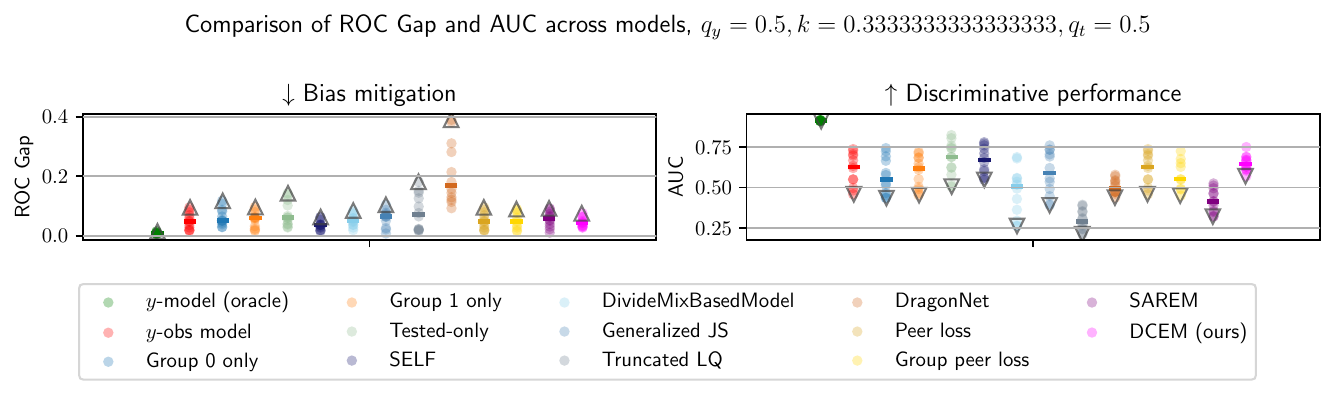}
    \vspace{-5mm}
    \caption{ROC gap (left) and AUC (right) of baselines on simulated data at $q_y = 1/2, k=1/3, q_t=1/2$. ``-'': median, ``$\bigtriangleup$'': worst-case ROC gap, ``$\bigtriangledown$'': worst-case AUC.}
    \label{fig:qy0.5_k0.3_qt0.5}
\end{figure}
\begin{figure}[t]
    \centering
    \includegraphics[width=\linewidth]{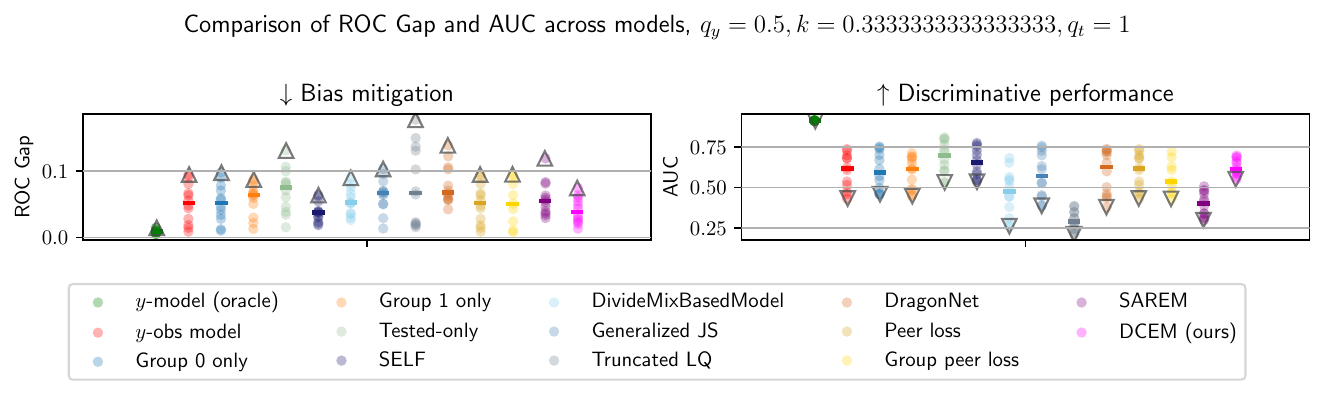}
    \vspace{-5mm}
    \caption{ROC gap (left) and AUC (right) of baselines on simulated data at $q_y = 1/2, k=1/3, q_t=1$. ``-'': median, ``$\bigtriangleup$'': worst-case ROC gap, ``$\bigtriangledown$'': worst-case AUC.}
    \label{fig:qy0.5_k0.3_qt1}
\end{figure}
\begin{figure}[t]
    \centering
    \includegraphics[width=\linewidth]{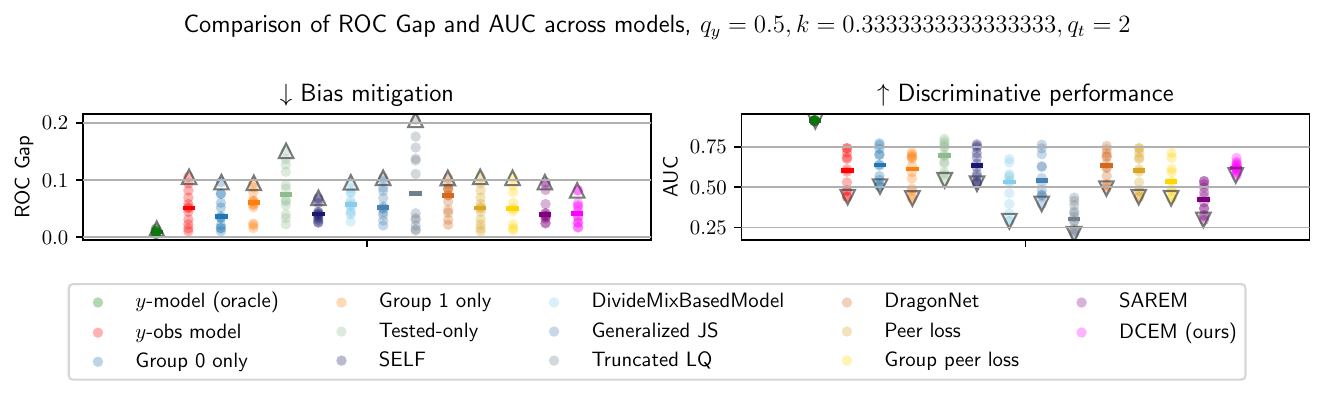}
    \vspace{-5mm}
    \caption{ROC gap (left) and AUC (right) of baselines on simulated data at $q_y = 1/2, k=1/3, q_t=2$. ``-'': median, ``$\bigtriangleup$'': worst-case ROC gap, ``$\bigtriangledown$'': worst-case AUC.}
    \label{fig:qy0.5_k0.3_qt2}
\end{figure}
\begin{figure}[t]
    \centering
    \includegraphics[width=\linewidth]{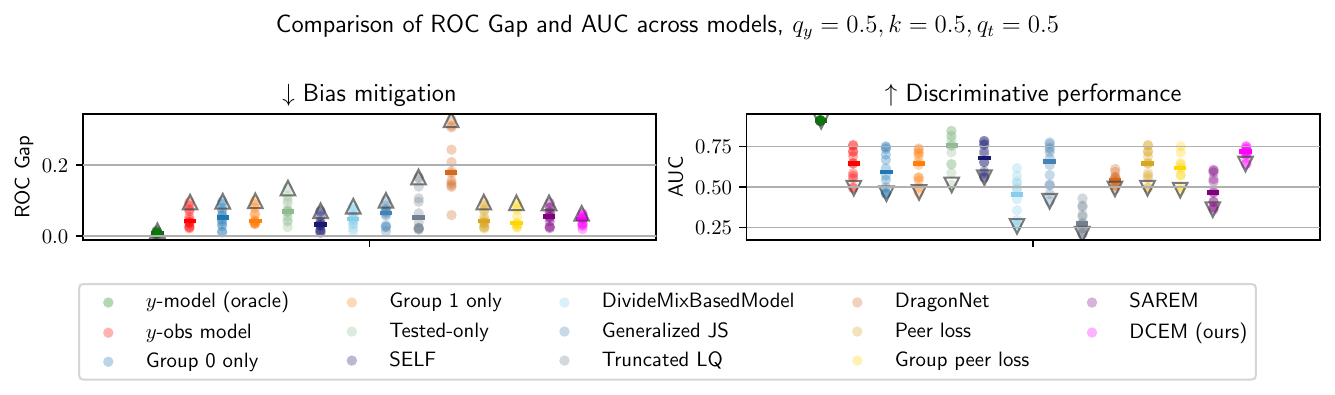}
    \vspace{-5mm}
    \caption{ROC gap (left) and AUC (right) of baselines on simulated data at $q_y = 1/2, k=1/2, q_t=1/2$. ``-'': median, ``$\bigtriangleup$'': worst-case ROC gap, ``$\bigtriangledown$'': worst-case AUC.}
    \label{fig:qy0.5_k0.5_qt0.5}
\end{figure}
\begin{figure}[t]
    \centering
    \includegraphics[width=\linewidth]{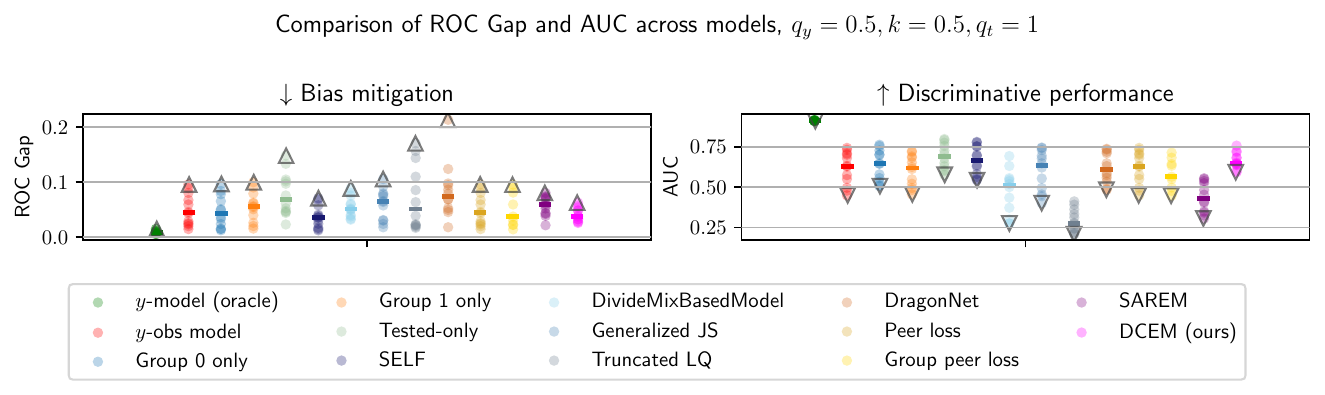}
    \vspace{-5mm}
    \caption{ROC gap (left) and AUC (right) of baselines on simulated data at $q_y = 1/2, k=1/2, q_t=1$. ``-'': median, ``$\bigtriangleup$'': worst-case ROC gap, ``$\bigtriangledown$'': worst-case AUC.}
    \label{fig:qy0.5_k0.5_qt1}
\end{figure}
\begin{figure}[t]
    \centering
    \includegraphics[width=\linewidth]{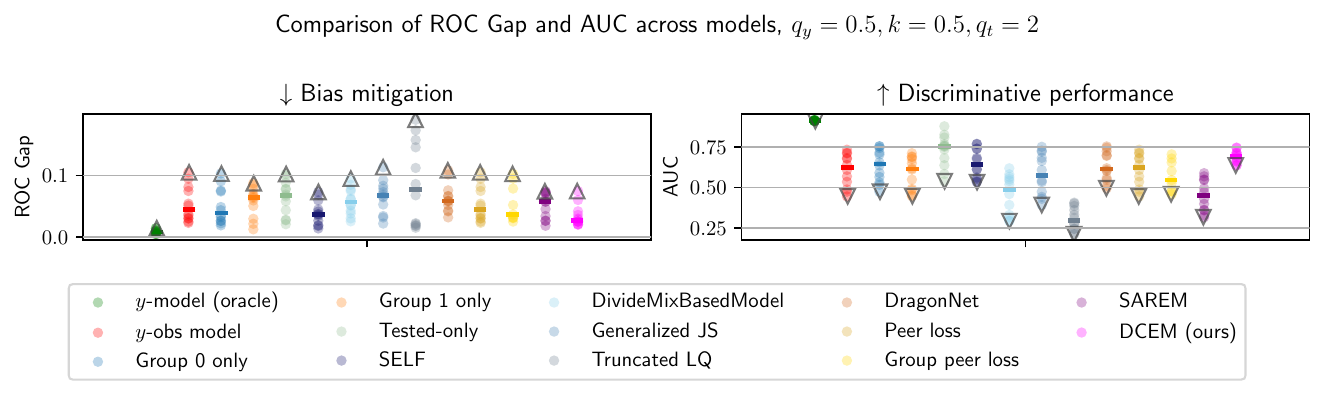}
    \vspace{-5mm}
    \caption{ROC gap (left) and AUC (right) of baselines on simulated data at $q_y = 1/2, k=1/2, q_t=2$. ``-'': median, ``$\bigtriangleup$'': worst-case ROC gap, ``$\bigtriangledown$'': worst-case AUC.}
    \label{fig:qy0.5_k0.5_qt2}
\end{figure}
\begin{figure}[t]
    \centering
    \includegraphics[width=\linewidth]{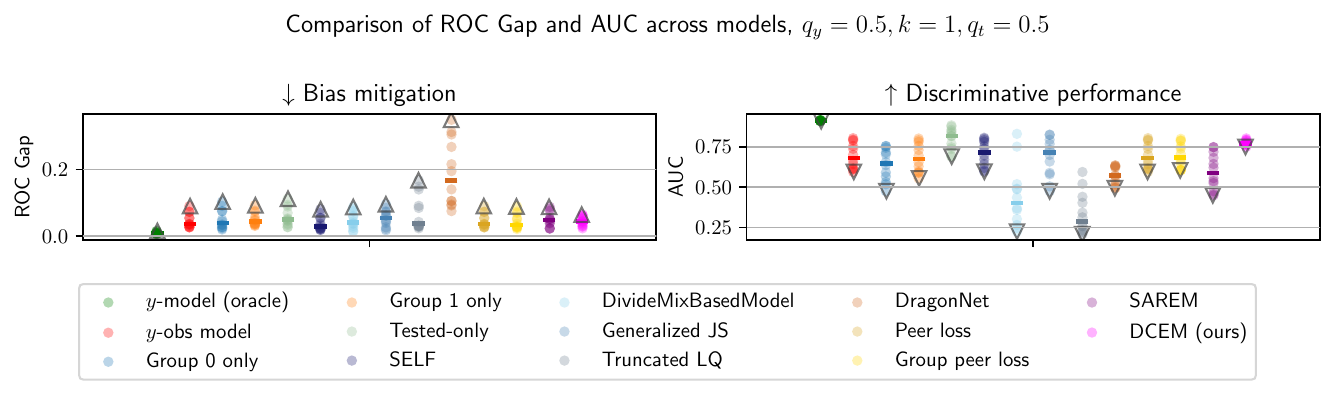}
    \vspace{-5mm}
    \caption{ROC gap (left) and AUC (right) of baselines on simulated data at $q_y = 1/2, k=1, q_t=1/2$. ``-'': median, ``$\bigtriangleup$'': worst-case ROC gap, ``$\bigtriangledown$'': worst-case AUC.}
    \label{fig:qy0.5_k1_qt0.5}
\end{figure}
\begin{figure}[t]
    \centering
    \includegraphics[width=\linewidth]{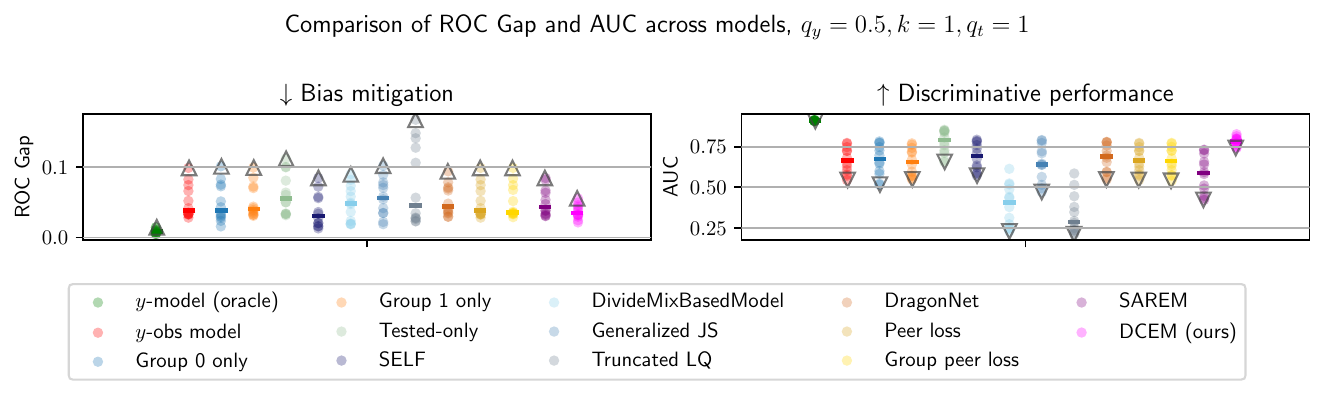}
    \vspace{-5mm}
    \caption{ROC gap (left) and AUC (right) of baselines on simulated data at $q_y = 1/2, k=1, q_t=1$. ``-'': median, ``$\bigtriangleup$'': worst-case ROC gap, ``$\bigtriangledown$'': worst-case AUC.}
    \label{fig:qy0.5_k1_qt1}
\end{figure}
\begin{figure}[t]
    \centering
    \includegraphics[width=\linewidth]{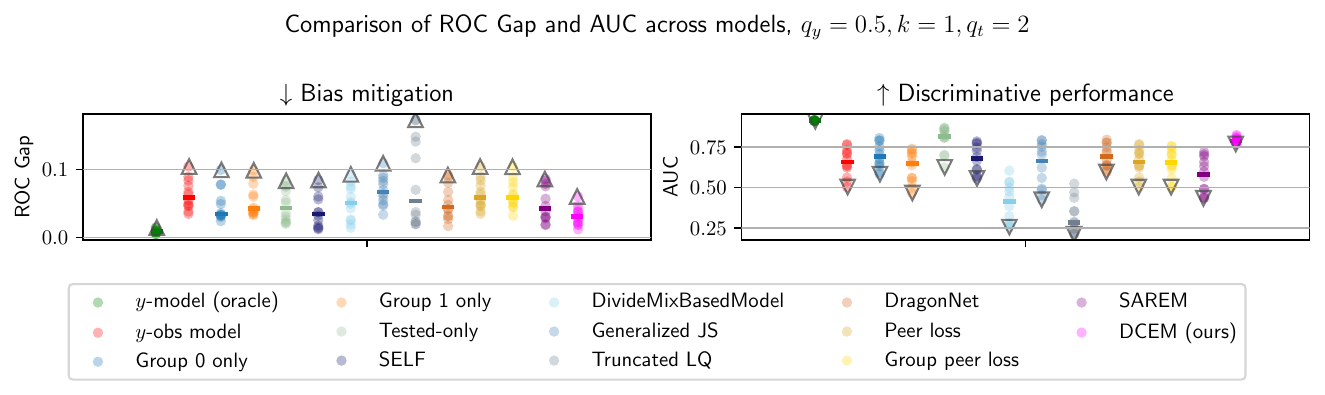}
    \vspace{-5mm}
    \caption{ROC gap (left) and AUC (right) of baselines on simulated data at $q_y = 1/2, k=1, q_t=2$. ``-'': median, ``$\bigtriangleup$'': worst-case ROC gap, ``$\bigtriangledown$'': worst-case AUC.}
    \label{fig:qy0.5_k1_qt2}
\end{figure}
\begin{figure}[t]
    \centering
    \includegraphics[width=\linewidth]{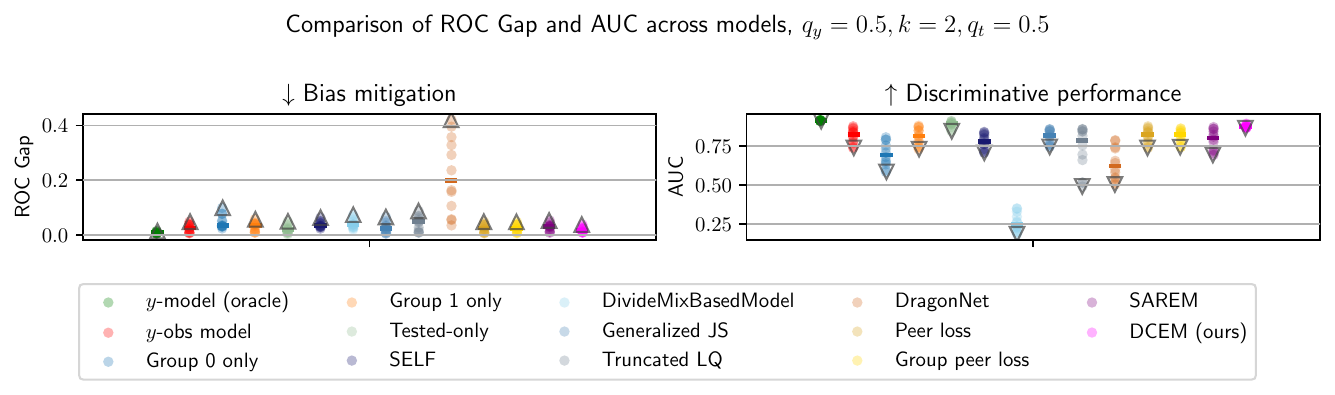}
    \vspace{-5mm}
    \caption{ROC gap (left) and AUC (right) of baselines on simulated data at $q_y = 1/2, k=2, q_t=1/2$. ``-'': median, ``$\bigtriangleup$'': worst-case ROC gap, ``$\bigtriangledown$'': worst-case AUC.}
    \label{fig:qy0.5_k2_qt0.5}
\end{figure}
\begin{figure}[t]
    \centering
    \includegraphics[width=\linewidth]{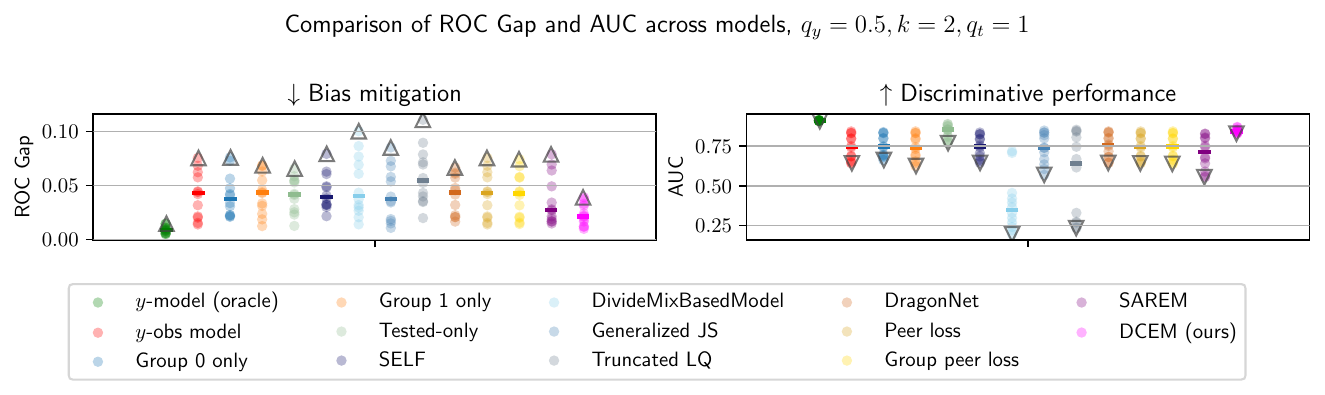}
    \vspace{-5mm}
    \caption{ROC gap (left) and AUC (right) of baselines on simulated data at $q_y = 1/2, k=2, q_t=1$. ``-'': median, ``$\bigtriangleup$'': worst-case ROC gap, ``$\bigtriangledown$'': worst-case AUC.}
    \label{fig:qy0.5_k2_qt1}
\end{figure}
\begin{figure}[t]
    \centering
    \includegraphics[width=\linewidth]{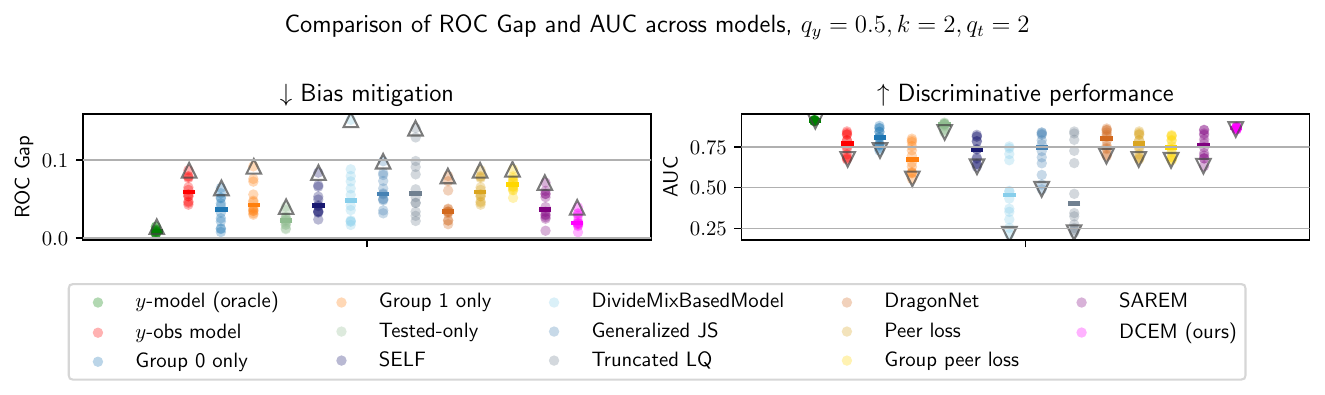}
    \vspace{-5mm}
    \caption{ROC gap (left) and AUC (right) of baselines on simulated data at $q_y = 1/2, k=2, q_t=2$. ``-'': median, ``$\bigtriangleup$'': worst-case ROC gap, ``$\bigtriangledown$'': worst-case AUC.}
    \label{fig:qy0.5_k2_qt2}
\end{figure}
\begin{figure}[t]
    \centering
    \includegraphics[width=\linewidth]{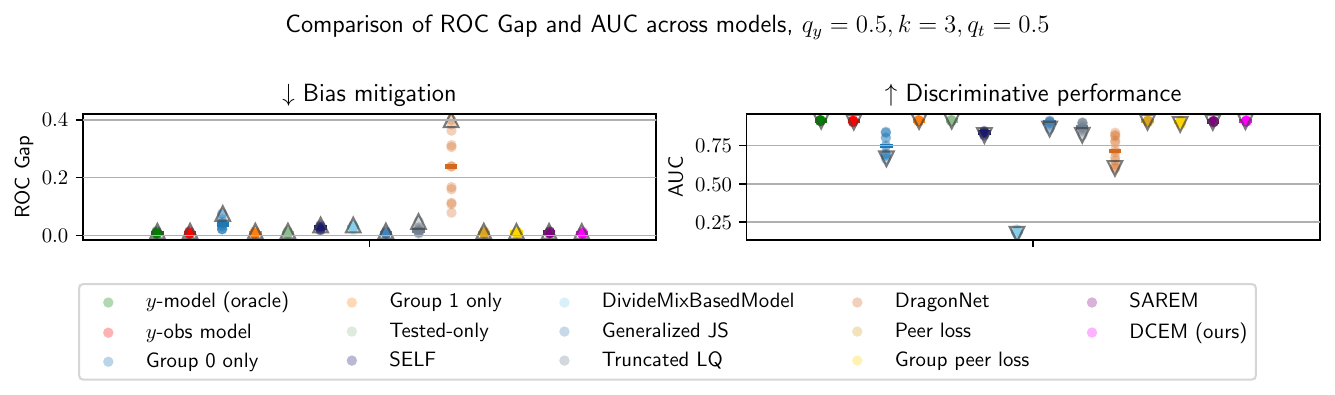}
    \vspace{-5mm}
    \caption{ROC gap (left) and AUC (right) of baselines on simulated data at $q_y = 1/2, k=3, q_t=1/2$. ``-'': median, ``$\bigtriangleup$'': worst-case ROC gap, ``$\bigtriangledown$'': worst-case AUC.}
    \label{fig:qy0.5_k3_qt0.5}
\end{figure}
\begin{figure}[t]
    \centering
    \includegraphics[width=\linewidth]{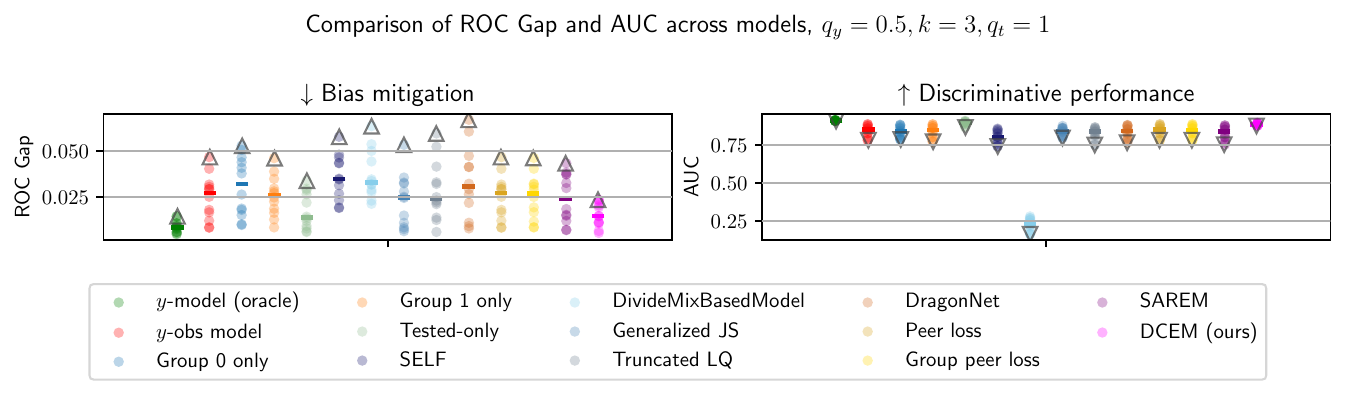}
    \vspace{-5mm}
    \caption{ROC gap (left) and AUC (right) of baselines on simulated data at $q_y = 1/2, k=3, q_t=1$. ``-'': median, ``$\bigtriangleup$'': worst-case ROC gap, ``$\bigtriangledown$'': worst-case AUC.}
    \label{fig:qy0.5_k3_qt1}
\end{figure}
\begin{figure}[t]
    \centering
    \includegraphics[width=\linewidth]{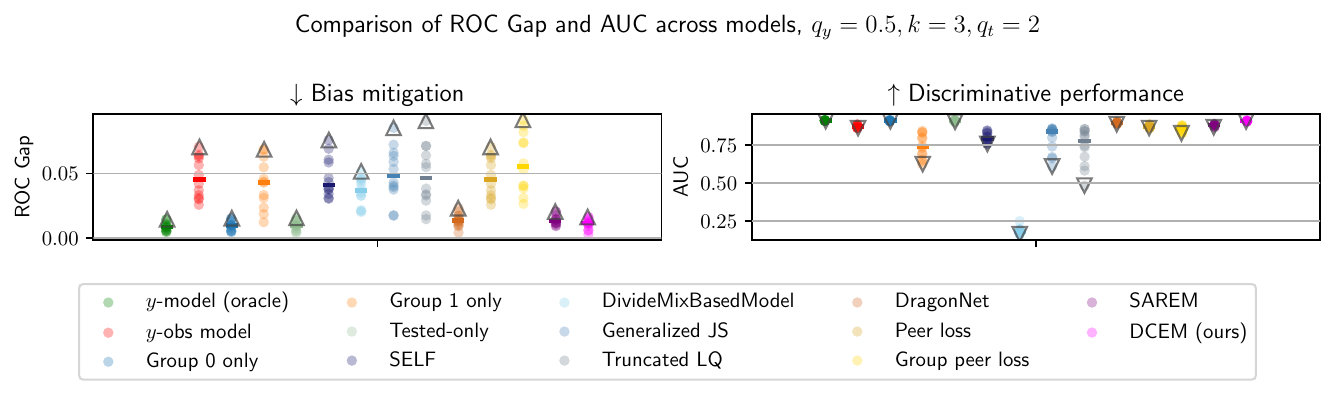}
    \vspace{-5mm}
    \caption{ROC gap (left) and AUC (right) of baselines on simulated data at $q_y = 1/2, k=3, q_t=2$. ``-'': median, ``$\bigtriangleup$'': worst-case ROC gap, ``$\bigtriangledown$'': worst-case AUC.}
    \label{fig:qy0.5_k3_qt2}
\end{figure}

\begin{figure}[t]
    \centering
    \includegraphics[width=\linewidth]{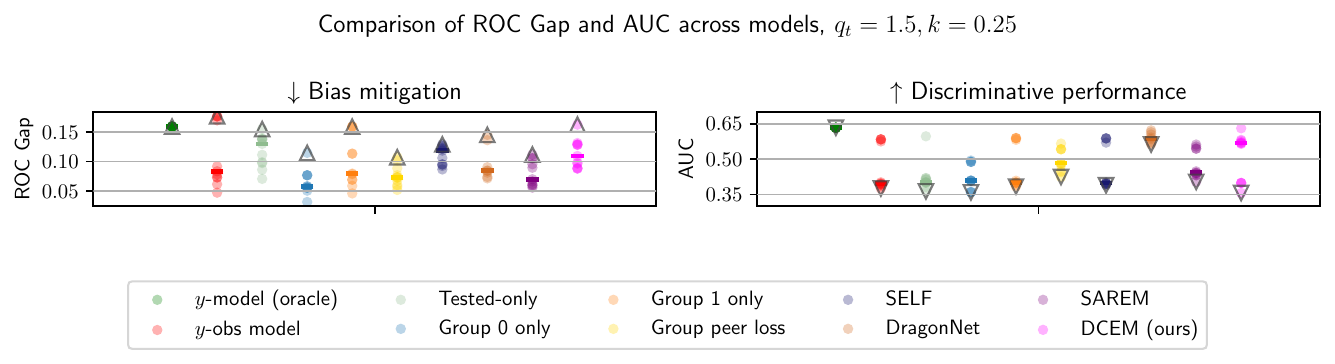}\vspace{-7mm}
    \caption{ROC gap (left) and AUC (right) of baselines on sepsis classification at $k=1/4, q_t=1.5$. ``-'': median, ``$\bigtriangleup$'': worst-case ROC gap, ``$\bigtriangledown$'': worst-case AUC.}\vspace{-3mm}
    \label{fig:sepsis_k0.25}
\end{figure}
\begin{figure}[t]
    \centering
    \includegraphics[width=\linewidth]{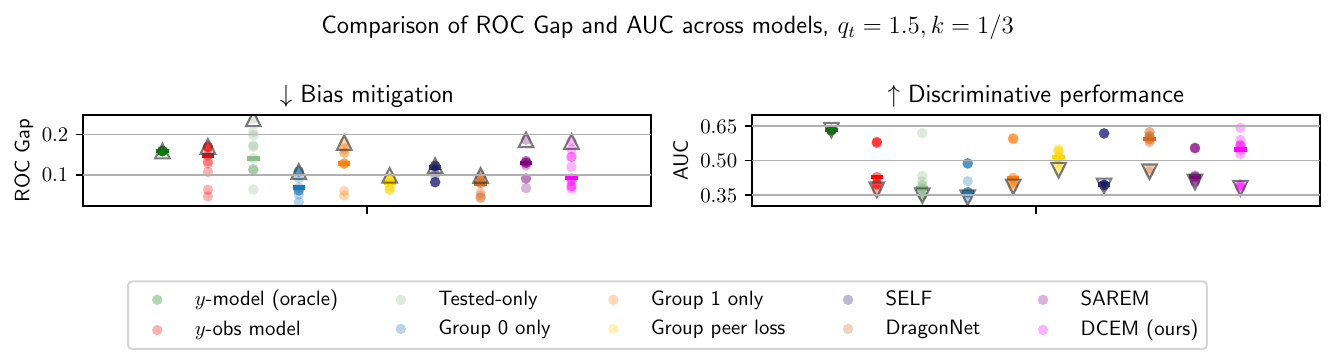}\vspace{-7mm}
    \caption{ROC gap (left) and AUC (right) of baselines on sepsis classification at $k=1/3, q_t=1.5$. ``-'': median, ``$\bigtriangleup$'': worst-case ROC gap, ``$\bigtriangledown$'': worst-case AUC.}\vspace{-3mm}
    \label{fig:sepsis_k0.3}
\end{figure}
\begin{figure}[t]
    \centering
    \includegraphics[width=\linewidth]{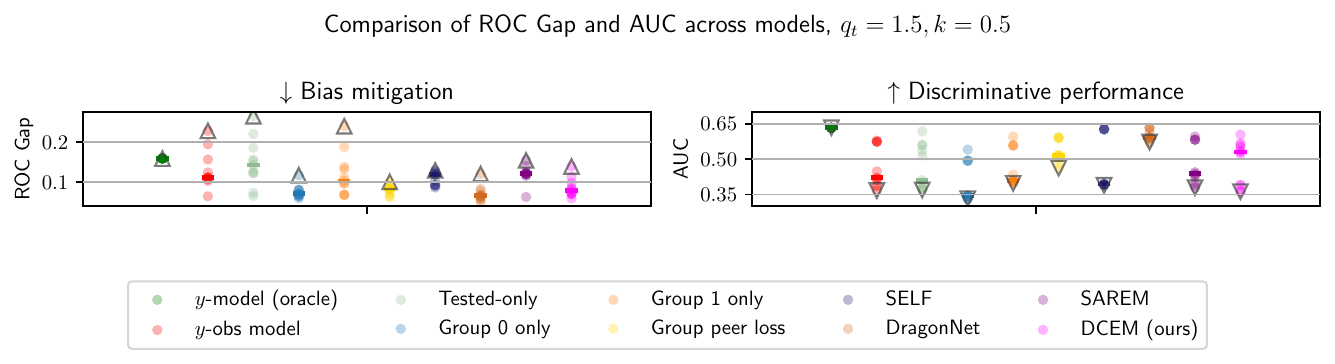}\vspace{-7mm}
    \caption{ROC gap (left) and AUC (right) of baselines on sepsis classification at $k=1/2, q_t=1.5$. ``-'': median, ``$\bigtriangleup$'': worst-case ROC gap, ``$\bigtriangledown$'': worst-case AUC.}\vspace{-3mm}
    \label{fig:sepsis_k0.5}
\end{figure}
\begin{figure}[t]
    \centering
    \includegraphics[width=\linewidth]{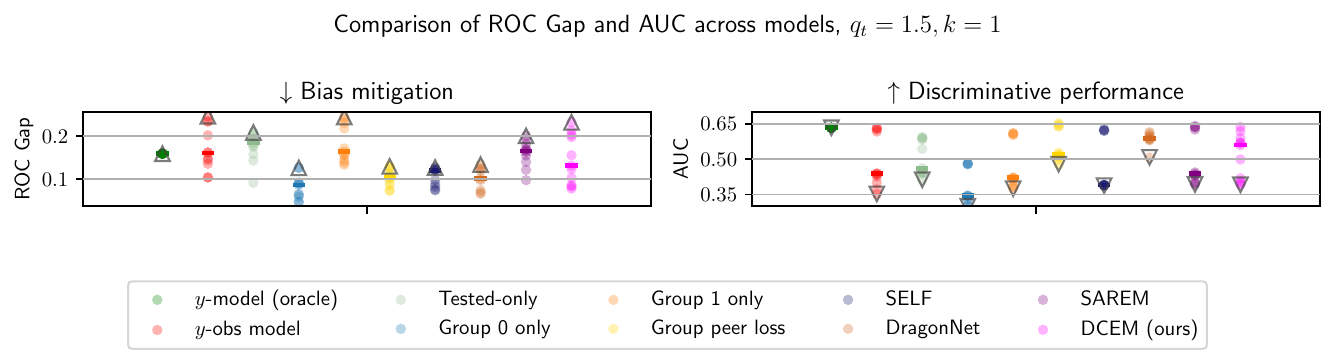}\vspace{-7mm}
    \caption{ROC gap (left) and AUC (right) of baselines on sepsis classification at $k=1, q_t=1.5$. ``-'': median, ``$\bigtriangleup$'': worst-case ROC gap, ``$\bigtriangledown$'': worst-case AUC.}\vspace{-3mm}
    \label{fig:sepsis_k1}
\end{figure}
\begin{figure}[t]
    \centering
    \includegraphics[width=\linewidth]{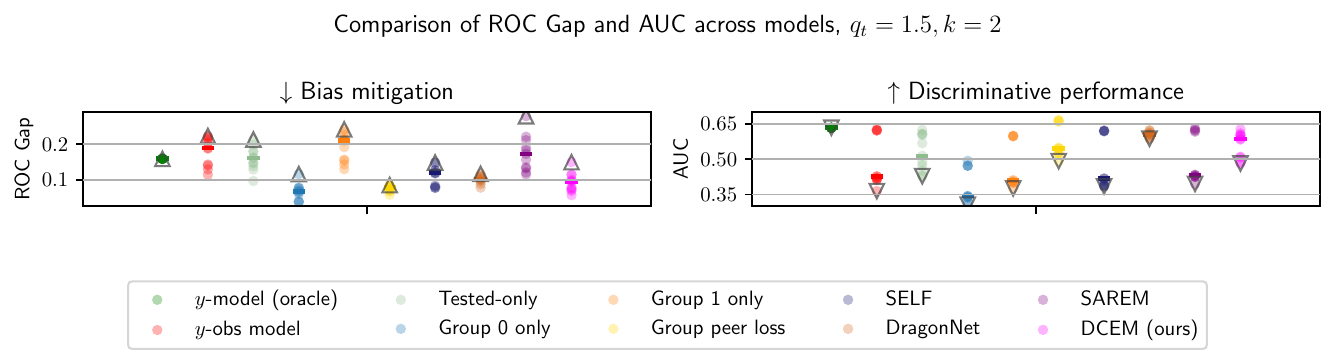}\vspace{-7mm}
    \caption{ROC gap (left) and AUC (right) of baselines on sepsis classification at $k=2, q_t=1.5$. ``-'': median, ``$\bigtriangleup$'': worst-case ROC gap, ``$\bigtriangledown$'': worst-case AUC.}\vspace{-3mm}
    \label{fig:sepsis_k2}
\end{figure}
\begin{figure}[t]
    \centering
    \includegraphics[width=\linewidth]{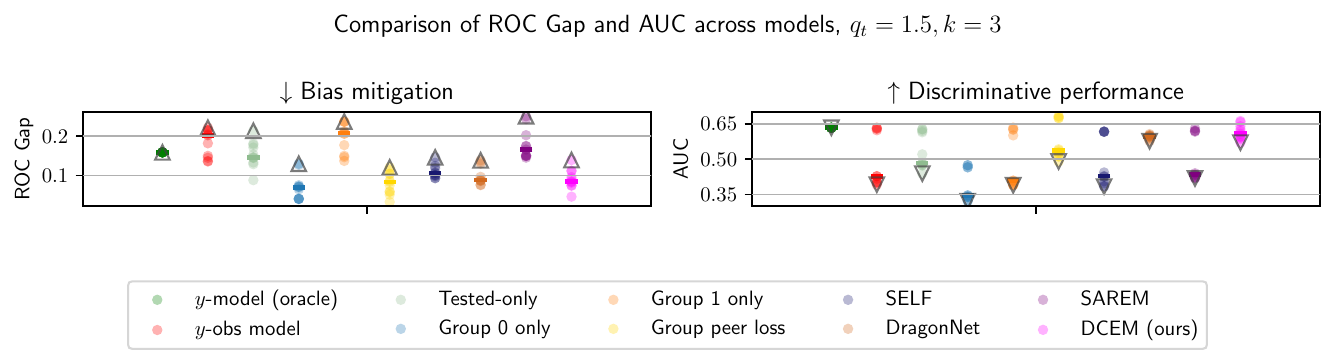}\vspace{-7mm}
    \caption{ROC gap (left) and AUC (right) of baselines on sepsis classification at $k=3, q_t=1.5$. ``-'': median, ``$\bigtriangleup$'': worst-case ROC gap, ``$\bigtriangledown$'': worst-case AUC.}\vspace{-3mm}
    \label{fig:sepsis_k3}
\end{figure}
\begin{figure}[t]
    \centering
    \includegraphics[width=\linewidth]{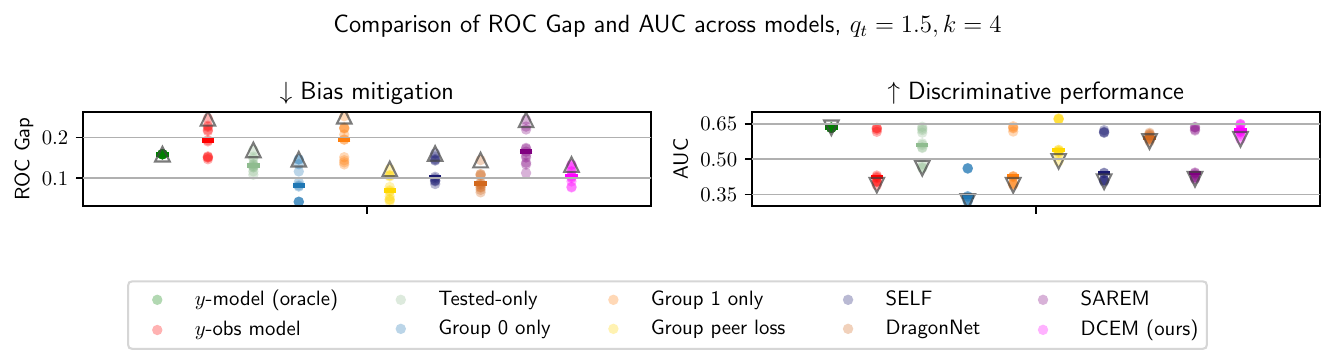}\vspace{-7mm}
    \caption{ROC gap (left) and AUC (right) of baselines on sepsis classification at $k=4, q_t=1.5$. ``-'': median, ``$\bigtriangleup$'': worst-case ROC gap, ``$\bigtriangledown$'': worst-case AUC.}\vspace{-3mm}
    \label{fig:sepsis_k4}
\end{figure}
\begin{figure}[t]
    \centering
    \includegraphics[width=\linewidth]{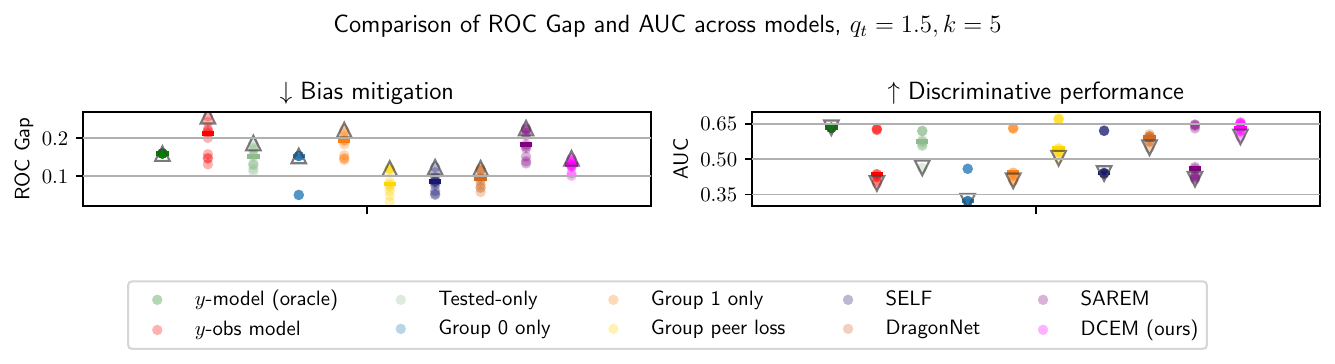}\vspace{-7mm}
    \caption{ROC gap (left) and AUC (right) of baselines on sepsis classification at $k=5, q_t=1.5$. ``-'': median, ``$\bigtriangleup$'': worst-case ROC gap, ``$\bigtriangledown$'': worst-case AUC.}
    \label{fig:sepsis_k5}
\end{figure}

\end{document}